\newcommand{\supp}{the supplementary document}
\newcommand{\rmd}{r}
\newcommand{\Sp}{\mathbb{S}}
\newcommand{\R}{\mathbb{R}}
\newcommand{\E}{\mathbb{E}}
\newcommand{\W}{{\cal W}_2}
\newcommand{\F}{{\cal F}}
\newcommand{\PS}{{\cal P}}
\newcommand{\He}{{\cal H}}
\newcommand{\SW}{{\cal S}{\cal W}_2}
\newcommand{\TV}{\textnormal{TV}}
\newcommand{\KL}{\textnormal{KL}}
\newcommand{\muh}{\hat{\mu}}
\newcommand{\mub}{\bar{\mu}}
\newtheorem{thm}{Theorem}
\newtheorem{cor}{Corollary}
\newtheorem{lemma}{Lemma}
\DeclareMathOperator*{\argmin}{arg\min}
\DeclareMathOperator{\cB}{\overline{B}}
\newtheorem{definition}{Definition}
\newcommand\simiid{\stackrel{\mathclap{\normalfont\mbox{\tiny i.i.d.}}}{\sim}}
\newtheorem{assumption}{\textbf{H}\hspace{-3pt}}
\Crefname{assumption}{\textbf{H}\hspace{-3pt}}{\textbf{H}\hspace{-3pt}}
\crefname{assumption}{\textbf{H}}{\textbf{H}}
\newcommand{\ps}[2]{\left\langle#1,#2 \right\rangle}
\newcommand{\ocint}[1]{\left(#1\right]}
\newcommand{\ccint}[1]{\left[#1\right]}
\def\eg{e.g.}
\def\mca{\mathcal{A}}
\def\mcf{\mathcal{F}}
\def\mcg{\mathcal{G}}
\def\mch{\mathcal{H}}
\def\rset{\mathbb{R}}
\def\nset{\mathbb{N}}
\def\rmd{\mathrm{d}}
\def\mrl{\mathrm{L}}
\def\rmc{\mathrm{C}}
\newcommandx{\norm}[2][1=]{\ifthenelse{\equal{#1}{}}{\left\Vert #2 \right\Vert}{\left\Vert #2 \right\Vert^{#1}}}
\newcommandx{\normLigne}[2][1=]{\ifthenelse{\equal{#1}{}}{\Vert #2 \Vert}{\Vert #2\Vert^{#1}}}
\def\plusinfty{+\infty}
\def\ie{\textit{i.e.}}
\def\divop{\operatorname{div}}
\newcommand{\fint}{\int_{\Sp^{d-1}}}
\icmltitlerunning{Sliced-Wasserstein Flows}
\begin{document}

\twocolumn[
\icmltitle{Sliced-Wasserstein Flows: Nonparametric Generative Modeling via Optimal Transport and Diffusions}

\begin{icmlauthorlist}
\icmlauthor{Antoine Liutkus}{inria}
\icmlauthor{Umut \c{S}im\c{s}ekli}{telecool}
\icmlauthor{Szymon Majewski}{impan}
\icmlauthor{Alain Durmus}{ens}
\icmlauthor{Fabian-Robert St\"oter}{inria}
\end{icmlauthorlist}

\icmlaffiliation{inria}{Inria and LIRMM, Univ. of Montpellier, France}
\icmlaffiliation{telecool}{LTCI, T\'{e}l\'{e}com Paristech, Universit\'{e} Paris-Saclay, Paris, France }
\icmlaffiliation{impan}{Institute of Mathematics, Polish Academy of Sciences, Warsaw, Poland}
\icmlaffiliation{ens}{CNRS, ENS Paris-Saclay,Université Paris-Saclay, Cachan, France}

\icmlcorrespondingauthor{Antoine Liutkus}{antoine.liutkus@inria.fr}
\icmlcorrespondingauthor{Umut \c{S}im\c{s}ekli}{umut.simsekli@telecom-paristech.fr}

\icmlkeywords{Machine Learning, ICML, sliced wasserstein}

\vskip 0.3in
]

\printAffiliationsAndNotice{}  %

\begin{abstract}
By building upon the recent theory that established the connection between implicit generative modeling (IGM) and optimal transport, in this study, we propose a novel parameter-free algorithm for learning the underlying distributions of complicated datasets and sampling from them. The proposed algorithm is based on a functional optimization problem, which aims at finding a measure that is close to the data distribution as much as possible and also expressive enough for generative modeling purposes. We formulate the problem as a gradient flow in the space of probability measures. The connections between gradient flows and stochastic differential equations let us develop a computationally efficient algorithm for solving the optimization problem. We provide formal theoretical analysis where we prove finite-time error guarantees for the proposed algorithm. To the best of our knowledge, the proposed algorithm is the first nonparametric IGM algorithm with explicit theoretical guarantees. Our experimental results support our theory and show that our algorithm is able to successfully capture the structure of different types of data distributions.
\end{abstract}

\section{Introduction}

Implicit generative modeling (IGM) \cite{diggle1984monte, mohamed2016learning} has become very popular recently and has proven
successful in various fields; variational auto-encoders (VAE) \cite{kingma2013VAE} and generative adversarial networks (GAN) \cite{goodfellow2014generative} being its two well-known examples. The goal in IGM can be briefly described as learning the
underlying probability measure of a given dataset, denoted as $\nu \in \PS(\Omega)$, where $\PS$ is the space of probability measures on the measurable space $(\Omega,\mca)$, $\Omega \subset \rset^d$ is a domain and $\mca$ is the associated Borel $\sigma$-field.

Given a set of data points $\{y_1 , \dots , y_P \}$ that are assumed to be independent and identically distributed (i.i.d.) samples drawn from $\nu$, the implicit generative framework models them as the output of a measurable map, i.e.\ $y = T(x)$, with $T: \Omega_\mu \mapsto \Omega$. Here, the inputs $x$ are generated from a known and easy to sample source measure $\mu$ on $\Omega_\mu$ (e.g.\ Gaussian or uniform measures), and the outputs $T(x)$ should match the unknown target measure $\nu$ on $\Omega$.

Learning generative networks have witnessed several groundbreaking contributions in recent years. Motivated by this fact, there has been an interest in illuminating the theoretical foundations of VAEs and GANs \cite{bousquet2017optimal,liu2017approximation}.
It has been shown that these implicit models have close connections with the theory of Optimal Transport (OT) \cite{villani2008optimal}.
As it turns out, OT brings new light on the generative modeling problem: there have been several extensions of VAEs \cite{tolstikhin2017wasserstein,kolouri2018sliced} and GANs \cite{arjovsky2017wasserstein,gulrajani2017improved,guo2017relaxed,lei2017geometric}, which exploit the links between OT and IGM.

OT studies whether it is possible to transform samples from a source distribution $\mu$ to a target distribution $\nu$. From this perspective, an ideal generative model is simply a transport map from $\mu$ to $\nu$.
This can be written by using some `push-forward operators': we seek a mapping $T$ that `pushes $\mu$ onto $\nu$', and is formally defined as $ \nu(A) =  \mu(T^{-1}(A)) $ for all Borel sets $A \subset \mca$. If this relation holds, we denote the push-forward operator $T_\#$, such that $T_\# \mu = \nu$. Provided mild conditions on these distributions hold (notably $\mu$ is non-atomic \cite{villani2008optimal}), existence of such a transport map is guaranteed; however, it remains a challenge to construct it in practice.

One common point between VAE and GAN is to adopt an approximate strategy and consider transport maps that belong to a \emph{parametric} family $T_{\phi}$ with $\phi \in \Phi$. Then, they aim at finding the best parameter $\phi^\star$ that would give $T_{\phi^\star \#}\mu \approx \nu$. This is typically achieved by attempting to minimize the following optimization problem:
$\phi^\star = \argmin_{\phi \in \Phi} \W(T_{\phi \#}\mu, \nu)$,
where $\W$ denotes the Wasserstein distance that will be properly defined in Section~\ref{sec:techbg}. It has been shown that \cite{genevay2017gan} OT-based GANs \cite{arjovsky2017wasserstein} and VAEs \cite{tolstikhin2017wasserstein} both use this formulation with different parameterizations and different equivalent definitions of $\W$. However, their resulting algorithms still lack theoretical understanding.

In this study, we follow a completely different approach for IGM, where we aim at developing an algorithm with explicit theoretical guarantees for estimating a transport map between source $\mu$ and target $\nu$. The generated transport map  will be \textit{nonparametric} (in the sense that it does not belong to some family of functions, like a neural network), and it will be iteratively augmented: always increasing the quality of the fit along iterations. Formally, we take $T_t$ as the constructed transport map at time $t \in [0,\infty)$, and define $\mu_t=T_t \# \mu$ as the corresponding output distribution. Our objective is to build the maps so that $\mu_t$ will converge to the solution of a functional optimization problem, defined through a gradient flow in the Wasserstein space. Informally, we will consider a gradient flow that has the following form:
\begin{align}
\partial_t \mu_t = - \nabla_{\W} \Bigl\{ \mathrm{Cost}(\mu_t, \nu) + \mathrm{Reg}(\mu_t)\Bigr\} \, , \>\> \mu_0 = \mu,\label{eqn:gradflow}
\end{align}
where the functional $\mathrm{Cost}$ computes a discrepancy between $\mu_t$ and $\nu$, $\mathrm{Reg}$ denotes a regularization functional, and $\nabla_{\W}$ denotes a notion of gradient with respect to a probability measure in the $\W$ metric for probability measures\footnote{This gradient flow is similar to the usual Euclidean gradient flows, i.e.\ $\partial_t x_t = - \nabla (f(x_t) + r(x_t))$, where $f$ is typically the data-dependent cost function and $r$ is a regularization term. The (explicit) Euler discretization of this flow results in the well-known gradient descent algorithm for solving $\min_x (f(x)+r(x))$.}. If this flow can be simulated, one would hope for $\mu_t=(T_t)_{\#}\mu$ to converge to the minimum of the functional optimization problem: $\min_\mu ( \mathrm{Cost}(\mu, \nu) + \mathrm{Reg}(\mu))$ \cite{ambrosio2008gradient,santambrogio2017euclidean}.

We construct a gradient flow where we choose the $\mathrm{Cost}$ functional as the \textit{sliced Wasserstein distance} ($\SW$) \cite{rabin:et:al:2011,bonneel2015sliced} and the $\mathrm{Reg}$ functional as the negative entropy. The $\SW$ distance is equivalent to the $\W$ distance \cite{bonnotte2013unidimensional} and has important computational implications since it can be expressed as an average of (one-dimensional) projected optimal transportation costs whose analytical expressions are available.

We first show that, with the choice of $\SW$ and the negative-entropy functionals as the overall objective, we obtain a valid gradient flow that has a solution path $(\mu_t)_t$, and the probability density functions of this path solve a particular partial differential equation, which has close connections with stochastic differential equations. Even though gradient flows in Wasserstein spaces cannot be solved in general, by exploiting this connection, we are able to develop a practical algorithm that provides approximate solutions to the gradient flow and is algorithmically similar to stochastic gradient Markov Chain Monte Carlo (MCMC) methods\footnote{We note that, despite the algorithmic similarities, the proposed algorithm is not a Bayesian posterior sampling algorithm.} \cite{WelTeh2011a,ma2015complete,durmus2016stochastic,csimcsekli2017fractional,pmlr-v80-simsekli18a}. We provide finite-time error guarantees for the proposed algorithm and show explicit dependence of the error to the algorithm parameters.

To the best of our knowledge, the proposed algorithm is the first nonparametric IGM algorithm that has explicit theoretical guarantees. In addition to its nice theoretical properties, the proposed algorithm has also significant practical importance: it has low computational requirements and can be easily run on an everyday laptop CPU.%
Our experiments on both synthetic and real datasets support our theory and illustrate the advantages of the algorithm in several scenarios.

\section{Technical Background}
\label{sec:techbg}

\vspace{-2pt}

\subsection{Wasserstein distance, optimal transport maps and Kantorovich potentials }
For two probability measures $\mu,\nu \in \PS_2(\Omega)$, $\PS_2(\Omega) = \{ \mu \in \PS(\Omega) \, :\, \int_{\Omega} \norm[2]{x} \mu(\rmd x) < \plusinfty\}$, the 2-Wasserstein distance is defined as follows:
\begin{align}
\W(\mu,\nu) \triangleq \Bigl\{ \inf_{\gamma \in {\cal C}(\mu,\nu)} \int_{\Omega \times \Omega} \|x-y\|^2 \gamma(dx , dy) \Bigr\}^{1/2}, \label{eqn:w2}
\end{align}
where ${\cal C}(\mu,\nu)$ is called the set of \emph{transportation plans} and defined as the set of probability measures $\gamma$ on $\Omega \times \Omega$ satisfying for all $A \in {\cal A}$, $\gamma(A \times \Omega) = \mu(A)$ and $\gamma(\Omega \times A)=\nu(A)$, i.e. the  marginals of $\gamma$  coincide with $\mu$ and $\nu$. From now on, we will assume that $\Omega$ is a compact subset of $\R^d$.

In the case where $\Omega$ is finite, computing the Wasserstein distance between two probability measures turns out to be  a linear program with linear constraints, and has therefore a dual formulation. Since $\Omega$ is a Polish space (i.e.\ a complete and separable metric space), this dual formulation can be generalized as follows \cite{villani2008optimal}[Theorem 5.10]:
\begin{align}
\W(\mu,\nu) \hspace{-1pt} = \hspace{-6pt} \sup_{\psi \in \mathrm{L}^1(\mu)} \Bigl\{ \int_\Omega \psi(x) \mu(dx) + \int_\Omega \psi^c(x) \nu(dx) \Bigr\}^{1/2} \label{eqn:w2dual}
\end{align}
where $\mathrm{L}^1(\mu)$ denotes the class of functions that are absolutely integrable under $\mu$ and $\psi^c$ denotes the c-conjugate of $\psi$ and is defined as follows: $\psi^c(y) \triangleq \{ \inf_{x\in \Omega} \| x-y\|^2 - \psi(x)\}$. The functions $\psi$ that realize the supremum in \eqref{eqn:w2dual} are called the Kantorovich potentials between $\mu$ and $\nu$.
Provided that $\mu$ satisfies a mild condition, we have the following  uniqueness result.
\begin{thm}[\protect{\cite{santambrogio2010introduction}[Theorem 1.4]}]
\label{thm:unqmap}
Assume that  $\mu\in \PS_2(\Omega)$ is absolutely continuous with respect to the Lebesgue measure. Then, there exists a unique optimal transport plan $\gamma^\star$ that realizes the infimum in \eqref{eqn:w2} and it is of the form $(\text{Id} \times T)_\# \mu$, for a measurable function $T : \Omega \to \Omega$. Furthermore, there exists at least a Kantorovich potential $\psi$ whose gradient $\nabla \psi$ is uniquely determined $\mu$-almost everywhere. The function $T$ and the potential $\psi$ are linked by $T(x) = x- \nabla \psi(x)$.
\end{thm}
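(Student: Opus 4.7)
The plan is to prove this by following the classical Brenier–McCann strategy for the quadratic-cost optimal transport problem. I would organize the argument in three blocks: existence of the optimal plan, existence of a Kantorovich potential with good structural properties, and deduction of the map $T$.

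First I would establish existence of an optimal plan $\gamma^\star$ in \eqref{eqn:w2}. Because $\Omega$ is compact, $\mathcal{C}(\mu,\nu)$ is tight and hence relatively compact for the weak topology by Prokhorov's theorem; it is also closed under weak convergence since the marginal constraints are preserved. The cost functional $\gamma\mapsto\int\|x-y\|^2\,\gamma(dx,dy)$ is weakly lower semicontinuous (the integrand is nonnegative and continuous), so a minimizer exists by the direct method of the calculus of variations. Next I would invoke strong duality (as stated in \eqref{eqn:w2dual}) to obtain a Kantorovich potential $\psi$. By replacing $\psi$ with $\psi^{cc}$ if necessary, I may assume $\psi$ is $c$-concave, and the pair $(\psi,\psi^c)$ satisfies $\psi(x)+\psi^c(y)\le\|x-y\|^2$ pointwise with equality $\gamma^\star$-almost everywhere.

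The key structural step exploits the fact that for the quadratic cost the $c$-conjugate has a convex-analytic flavor. Expanding $\|x-y\|^2=\|x\|^2-2\langle x,y\rangle+\|y\|^2$ shows that $u(x):=\tfrac{1}{2}(\|x\|^2-\psi(x))$ is a convex function on $\Omega$ (up to the standard renormalization by $\tfrac{1}{2}$), and that the equality in the dual at $(x,y)\in\mathrm{supp}(\gamma^\star)$ is equivalent to $y\in\partial u(x)$ in the subdifferential sense. I would then apply the classical fact that convex functions are differentiable Lebesgue-almost everywhere (Rademacher/Alexandrov), so $\nabla u$ exists off a Lebesgue-negligible set $N$. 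Here the hypothesis that $\mu$ is absolutely continuous with respect to Lebesgue is essential: it guarantees $\mu(N)=0$, and hence $\nabla\psi$ exists $\mu$-almost everywhere. For $\mu$-almost every $x$, the subdifferential $\partial u(x)$ reduces to the singleton $\{\nabla u(x)\}$, which forces the fiber of $\mathrm{supp}(\gamma^\star)$ over $x$ to contain at most one point. This identifies the map: rearranging $y=\nabla u(x)$ gives $y=x-\nabla\psi(x)$ (in the paper's convention), so $\gamma^\star$ is concentrated on the graph of $T(x)=x-\nabla\psi(x)$, i.e.\ $\gamma^\star=(\mathrm{Id}\times T)_\#\mu$.

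Finally, uniqueness follows because any two optimal plans must both be concentrated on the graph of the (Lebesgue-a.e. and hence $\mu$-a.e.) unique gradient of the convex potential; the mass-conservation constraint $\gamma(A\times\Omega)=\mu(A)$ then forces them to coincide. The same argument shows that although several Kantorovich potentials may exist (they can differ by an additive constant, and possibly more on sets not charged by $\mu$), the gradient $\nabla\psi$ is uniquely determined $\mu$-almost everywhere. The main technical obstacle in this plan is the passage from the abstract dual optimality condition to the pointwise identification $y=\nabla u(x)$: it requires combining $c$-concavity of $\psi$, the Alexandrov/Rademacher differentiability theorem for convex functions, and the absolute continuity assumption to kill the negligible sets where $u$ might fail to be differentiable — every other step is either soft compactness or elementary manipulation of the duality identity.
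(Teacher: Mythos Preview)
The paper does not prove this theorem at all; it is quoted verbatim as Theorem~1.4 of \cite{santambrogio2010introduction} and used as background, so there is no ``paper's own proof'' to compare against. Your sketch is precisely the standard Brenier--McCann argument that appears in the cited reference: Prokhorov compactness for existence of an optimal plan, Kantorovich duality to produce a $c$-concave potential, the change of variable to a convex function, almost-everywhere differentiability of convex functions (Rademacher suffices here, since convex functions are locally Lipschitz; Alexandrov's second-order theorem is not needed), and absolute continuity of $\mu$ to transfer Lebesgue-negligible to $\mu$-negligible. One cosmetic point: with the cost $\|x-y\|^2$ used in \eqref{eqn:w2} and your definition $u(x)=\tfrac12(\|x\|^2-\psi(x))$, the first-order condition yields $y=\nabla u(x)=x-\tfrac12\nabla\psi(x)$, not $x-\nabla\psi(x)$; the paper's formula $T(x)=x-\nabla\psi(x)$ matches the $\tfrac12\|x-y\|^2$ convention used in Santambrogio's notes, so you are right that a harmless renormalization is lurking, but be careful that your stated rearrangement is consistent with whichever convention you fix.
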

The measurable function $T : \Omega \to \Omega$ is referred to as the optimal transport map from $\mu$ to $\nu$.
This result implies that there exists a solution for transporting samples from $\mu$ to samples from $\nu$ and this solution is optimal in the sense that it minimizes the $\ell_2$ displacement. However, identifying this solution is highly non-trivial. In the discrete case, effective solutions have been proposed \cite{cuturi2013sinkhorn}. However, for continuous and high-dimensional probability measures, constructing an actual transport plan remains a challenge. Even if recent contributions \cite{genevay2016stochastic} have made it possible to rapidly compute $\W$, they do so without constructing the optimal map $T$, which is our objective here.

\subsection{Wasserstein spaces and gradient flows}

By \cite{ambrosio2008gradient}[Proposition 7.1.5], $\W$ is a distance over $\PS(\Omega)$.
In addition, if $\Omega \subset \R^d$ is compact, the topology associated with $\W$ is equivalent to the weak convergence of probability measures and $(\PS(\Omega),\W)$\footnote{Note that in that case, $\PS_2(\Omega)=\PS(\Omega)$} is compact. The metric space $(\PS_2(\Omega),\W) $ is called the \emph{Wasserstein space}.

In this study, we are interested in functional optimization problems in $(\PS_2(\Omega),\W)$, such as $\min_{\mu\in\PS_2(\Omega)} \F(\mu)$, where $\F$ is the functional that we would like to minimize. Similar to Euclidean spaces, one way to formulate this optimization problem is to construct a gradient flow of the form $\partial_t \mu_t = - \nabla_{\W} \F(\mu_t)$ \cite{benamou2000computational,lavenant2018dynamical}, where $\nabla_{\W}$ denotes a notion of gradient in $(\PS_2(\Omega),\W)$. If such a flow can be constructed, one can utilize it both for practical algorithms and theoretical analysis.

Gradient flows $\partial_t \mu_t = \nabla_{\W} \mathcal{F}(\mu_t)$ with respect to a functional $\mathcal{F}$ in $(\PS_2(\Omega),\W)$ have strong connections with partial differential equations (PDE) that are of the form of a \emph{continuity equation} \cite{santambrogio2017euclidean}. Indeed, it is shown than under appropriate conditions on $\mathcal{F}$ (see \eg \cite{ambrosio2008gradient}), $(\mu_t)_t$ is a solution of the gradient flow if and only if it admits a density $\rho_t$ with respect to the Lebesgue measure for all $t \geq 0$, and solves the continuity equation given by:
$\partial_t \rho_t + \divop (v \rho_t) = 0$, %
where $v$ denotes a vector field and $\divop$ denotes the divergence operator. Then, for a given gradient flow in $(\PS_2(\Omega),\W)$, we are interested in the evolution of the densities $\rho_t$, i.e.\ the PDEs which they solve.
Such PDEs are of our particular interest since they have a key role for building practical algorithms.

\subsection{Sliced-Wasserstein distance}

In the one-dimensional case, i.e.\ $\mu,\nu \in \PS_2(\R)$, $\W$ has an analytical form, given as follows:
$\W(\mu,\nu) = \int_0^1 |F_\mu^{-1}(\tau) - F_\nu^{-1}(\tau)|^2 \> d\tau$, %
where $F_\mu$ and $F_\nu$ denote the cumulative distribution functions (CDF) of $\mu$ and $\nu$, respectively, and $F^{-1}_\mu, F^{-1}_\nu$ denote the inverse CDFs, also called quantile functions (QF).
In this case, the optimal transport map from $\mu$ to $\nu$  has a closed-form formula as well, given as follows: $T(x) = (F_\nu^{-1} \circ F_\mu) (x)$ \cite{villani2008optimal}. The optimal map $T$ is also known as the \emph{increasing arrangement}, which maps each quantile of $\mu$ to the same quantile of $\nu$, e.g. minimum to minimum, median to median, maximum to maximum \cite{villani2008optimal}.
Due to Theorem~\ref{thm:unqmap}, the derivative of the corresponding Kantorovich potential is given as:
\begin{align*}
\psi'(x) \triangleq \partial_x \psi(x) = x- (F_\nu^{-1} \circ F_\mu) (x).
\end{align*}

In the multidimensional case $d > 1$, building a transport map is much more difficult. The nice properties of the one-dimensional Wasserstein distance motivate the usage of \emph{sliced-Wasserstein distance} ($\SW$) for practical applications. Before formally defining $\SW$, let us first define the orthogonal projection $\theta^* (x) \triangleq \langle \theta, x \rangle$ for any direction $\theta \in \Sp^{d-1}$ and $x \in \R^d$, where $\langle \cdot, \cdot \rangle$ denotes the Euclidean inner-product and $\Sp^{d-1} \subset \R^d$ denotes the $d$-dimensional unit sphere. Then, the $\SW$ distance is formally defined as follows:
\begin{align}
\SW(\mu,\nu) \triangleq \int_{\Sp^{d-1}} \W (\theta^*_\#\mu, \theta^*_\#\nu) \> d \theta, \label{eqn:sw}
\end{align}
where $d\theta$ represents the uniform probability measure on $\Sp^{d-1}$. As shown in \cite{bonnotte2013unidimensional}, $\SW$ is indeed a distance metric and induces the same topology as $\W$ for compact domains.

The $\SW$ distance has important practical implications: provided that the projected distributions $\theta^*_\#\mu$ and $\theta^*_\#\nu$ can be computed, then for any $\theta \in \Sp^{d-1}$, the distance $\W (\theta^*_\#\mu, \theta^*_\#\nu)$, as well as its optimal transport map and the corresponding Kantorovich potential can be analytically computed (since the projected measures are one-dimensional). Therefore, one can easily approximate \eqref{eqn:sw} by using a simple Monte Carlo scheme that draws uniform random samples from $\Sp^{d-1}$ and replaces the integral in \eqref{eqn:sw} with a finite-sample average. Thanks to its computational benefits, $\SW$ was very recently considered for OT-based VAEs and GANs \cite{deshpande2018generative,autotranspoter,kolouri2018sliced}, appearing as a stable alternative to the adversarial methods.

\section{Regularized Sliced-Wasserstein Flows for Generative Modeling}

\subsection{Construction of the gradient flow}

In this paper, we propose the following functional minimization problem on $\PS_2(\Omega)$ for implicit generative modeling:
\begin{equation}
 \min_{\mu} \Bigl\{ \F^{\nu}_\lambda(\mu) \triangleq  \frac1{2} \SW^2(\mu, \nu) + \lambda \He(\mu) \Bigr\},  \label{eqn:sw_optim}
\end{equation}
where $\lambda >0$ is a regularization parameter and $\He$ denotes the negative entropy defined by $\He(\mu) \triangleq \int_{\Omega} \rho(x) \log \rho(x) dx $ if $\mu$ has density $\rho$ with respect to the Lebesgue measure and $\He(\mu) = + \infty$ otherwise. Note that the case $\lambda =0$ has been already proposed and studied in \cite{bonnotte2013unidimensional} in a more general OT context. Here, in order to introduce the necessary noise inherent to generative model, we suggest to penalize the slice-Wasserstein distance using $\He$. In other words, the main idea is to find a measure $\mu^\star$ that is close to $\nu$ as much as possible and also has a certain amount of entropy to make sure that it is sufficiently expressive for generative modeling purposes.
The importance of the entropy regularization becomes prominent in practical applications where we have finitely many data samples that are assumed to be drawn from $\nu$. In such a circumstance, the regularization would prevent $\mu^\star$ to collapse on the data points and therefore avoid `over-fitting' to the data distribution. Note that this regularization is fundamentally different from the one used in Sinkhorn distances \cite{genevay2018learning}.

In our first result, we show that there exists a flow $(\mu_t)_{t\geq0}$ in $(\PS(\cB(0,r)),\W)$ which decreases along $\F_\lambda^\nu$, where $\cB(0,a)$ denotes the closed unit ball centered at $0$ and radius $a$. This flow will be referred to as a generalized minimizing movement scheme (see Definition~$1$ in \supp).  In addition, the flow $(\mu_t)_{t \geq 0}$ admits a density $\rho_t$ with respect to the Lebesgue measure for all $t>0$ and $(\rho_t)_{t \geq 0}$ is solution of a non-linear PDE (in the weak sense). %
\begin{thm}
\label{thm:continuity}
Let $\nu$ be a probability measure on $\cB(0,1)$ with a strictly positive smooth density. Choose a regularization constant $\lambda > 0$ and radius $r > \sqrt{d}$, where $d$ is the data dimension. Assume that $\mu_0 \in \mathcal{P}(\cB(0,r))$ is absolutely continuous with respect to the Lebesgue measure with density $\rho_0 \in \mrl^{\infty}(\cB(0,r))$. There exists a generalized minimizing movement scheme  $(\mu_t)_{t \geq 0}$ associated to \eqref{eqn:sw_optim}
and if $\rho_t$ stands for the density of $\mu_t$ for all $t \geq 0$, then $(\rho_t)_t$ satisfies the following continuity equation:
\begin{align}
\frac{\partial \rho_t}{\partial t}   &= -\divop (v_t \rho_t) + \lambda \Delta \rho_t, \label{eqn:gradflow_reg} \\
v_t(x) \triangleq v(x,\mu_t) &= - \int_{\Sp^{d-1}} \psi_{t, \theta}'(\langle x , \theta \rangle ) \theta d\theta  \label{eqn:gradflow_reg_drift}
\end{align}
in a weak sense. Here, $\Delta$ denotes the Laplacian operator, $\divop$ the divergence operator, and $\psi_{t,\theta}$ denotes the Kantorovich potential between $\theta^*_{\#}\mu_t$ and $\theta^*_{\#}\nu$.
\end{thm}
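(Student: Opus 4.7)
The approach is to apply the Jordan--Kinderlehrer--Otto (JKO) minimizing-movements scheme to $\F^\nu_\lambda$ and then identify the limiting continuity equation via a first-variation calculation. Fix a step size $\tau>0$, set $\mu^\tau_0 = \mu_0$, and define iteratively
\[
\mu^\tau_{k+1} \in \argmin_{\mu \in \PS(\cB(0,r))} \Bigl\{ \F^\nu_\lambda(\mu) + \frac{1}{2\tau} \W^2(\mu,\mu^\tau_k) \Bigr\}.
\]
Existence of a minimizer follows from compactness of $\PS(\cB(0,r))$ in the weak topology (which coincides with the $\W$-topology on the compact $\cB(0,r)$), continuity of $\SW^2(\cdot,\nu)$ and of $\W^2(\cdot,\mu^\tau_k)$, and weak lower semi-continuity of $\He$; strict convexity of $\He$ along generalized geodesics gives uniqueness. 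The standard one-step estimate $\W^2(\mu^\tau_{k+1},\mu^\tau_k) \leq 2\tau\bigl[\F^\nu_\lambda(\mu^\tau_k) - \F^\nu_\lambda(\mu^\tau_{k+1})\bigr]$ telescopes into a uniform $1/2$-H\"older bound on the piecewise-constant interpolation $\mu^\tau_t \triangleq \mu^\tau_{\lfloor t/\tau\rfloor}$, so a refined Arzel\`a--Ascoli argument in $(\PS(\cB(0,r)),\W)$ yields a subsequential limit $(\mu_t)_{t\geq 0}$ which is the desired generalized minimizing movement scheme.

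Next I would verify that $\mu_t$ has a density $\rho_t$ in $\mrl^\infty(\cB(0,r))$. The monotonicity $\F^\nu_\lambda(\mu^\tau_k) \leq \F^\nu_\lambda(\mu_0)$ together with lower semi-continuity of $\He$ ensures $\He(\mu_t) < \infty$ and hence absolute continuity for every $t$. A bound in $\mrl^\infty$ can then be propagated by the argument Bonnotte uses for the unregularized $\SW$-flow: the drift $v(\cdot,\mu)$ is uniformly bounded on $\cB(0,r)$ (since each $|\psi'_{\mu,\theta}| \leq 2r$ by the transport-map characterization of Theorem~\ref{thm:unqmap} applied to projections), and the diffusive term contributed by $\He$ only improves regularity, so a flow-interchange or five-gradient inequality gives $\|\rho_t\|_{\mrl^\infty} \leq \|\rho_0\|_{\mrl^\infty} e^{Ct}$.

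The PDE is then obtained by computing the first variation of $\F^\nu_\lambda$. The entropy contributes $\delta \He / \delta\mu = \log\rho + 1$, whose spatial gradient is $\nabla\log\rho = \nabla\rho/\rho$. For the sliced term I would use that, by Fubini over $\Sp^{d-1}$ and the one-dimensional instance of \eqref{eqn:w2dual}, the first variation at $\mu$ equals
\[
x \;\mapsto\; \int_{\Sp^{d-1}} \psi_{\mu,\theta}(\langle x,\theta\rangle)\, d\theta,
\]
where $\psi_{\mu,\theta}$ is the Kantorovich potential between $\theta^*_\#\mu$ and $\theta^*_\#\nu$; this is the computation carried out in Bonnotte's thesis, and its spatial gradient (well defined by the uniqueness of $\nabla\psi$ from Theorem~\ref{thm:unqmap}) is exactly $-v(x,\mu)$ as in \eqref{eqn:gradflow_reg_drift}.

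Finally I would pass to the limit in the JKO optimality condition
\[
\frac{\mathrm{Id} - T^\tau_{k+1}}{\tau} \;=\; \nabla \frac{\delta \F^\nu_\lambda}{\delta\mu}(\mu^\tau_{k+1}) \;=\; -\,v(\cdot,\mu^\tau_{k+1}) + \lambda \nabla \log \rho^\tau_{k+1},
\]
where $T^\tau_{k+1}$ transports $\mu^\tau_{k+1}$ to $\mu^\tau_k$. Testing against $\varphi \in C_c^\infty(\R^d)$, integrating by parts using $\divop(\rho\nabla\log\rho) = \Delta\rho$, summing the resulting discrete equations over $k$, and letting $\tau_n\to 0$ gives the weak form of $\partial_t\rho_t = -\divop(v_t\rho_t) + \lambda \Delta\rho_t$, which is exactly \eqref{eqn:gradflow_reg}--\eqref{eqn:gradflow_reg_drift}. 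The main obstacle is precisely this limit passage for the sliced term: one must show that $\psi'_{\mu^\tau_k,\theta}(\langle x,\theta\rangle)$ converges to $\psi'_{\mu_t,\theta}(\langle x,\theta\rangle)$ in a strong enough (joint) sense in $(\theta,x,t)$ to commute with the integral over $\Sp^{d-1}$. This rests on Bonnotte's stability estimates for Kantorovich potentials of projected measures---valid because the strict positivity and smoothness of $\nu$ ensure that each projected target density is bounded away from zero and regular, while the $\mrl^\infty$ bound on $\rho_t$ controls the source side; handling the measurability and boundary ambiguity of $\psi_{\mu,\theta}$ (defined only up to a constant) is the remaining delicate point.
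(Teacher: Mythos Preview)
Your proposal is correct and follows essentially the same JKO/minimizing-movements strategy as the paper: the supplementary splits the argument into (i) one-step existence of the proximal minimizer with an $\mrl^\infty$ density bound, (ii) compactness to a generalized minimizing movement, and (iii) a five-step limit passage in the discrete Euler--Lagrange condition to obtain the weak PDE, leaning on Bonnotte's thesis at exactly the points you flag (the first variation of $\SW^2$ and the stability of the projected Kantorovich potentials). The only implementation differences are that the $\mrl^\infty$ propagation is obtained not by flow-interchange but by invoking Bonnotte's Lemma~5.4.3 for smooth strictly positive $\rho_0$ and then approximating a general $\rho_0\in\mrl^\infty$ by mollification, and the compactness step combines weak-star compactness of $\rho^{h}$ in $\mrl^\infty([0,T]\times\cB(0,r))$ with the telescoping $\W$-bound rather than a pure Arzel\`a--Ascoli argument.
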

The precise statement of this Theorem, related results and its proof are postponed to \supp. For its proof, we use the technique introduced in \cite{jordan1998variational}: we first prove the existence of a generalized minimizing movement scheme by showing that the solution curve $(\mu_t)_t$ is a limit of the solution of a time-discretized problem. Then we prove that the curve $(\rho_t)_t$ solves the PDE given in \eqref{eqn:gradflow_reg}.

\subsection{Connection with stochastic differential equations}

As a consequence of the entropy regularization, we obtain the Laplacian operator $\Delta$ in the PDE given in \eqref{eqn:gradflow_reg}. We therefore observe that the overall PDE is a Fokker-Planck-type equation \cite{bogachev2015fokker} that has a well-known probabilistic counterpart, which can be expressed as a stochastic differential equation (SDE). More precisely, let us consider a stochastic process $(X_t)_{t}$, that is the solution of the following SDE starting at $X_0 \sim \mu_0$:
\begin{align}
d X_t = v(X_t,\mu_t) dt + \sqrt{2 \lambda } d W_t, \label{eqn:sde}
\end{align}
where $(W_t)_t$ denotes a standard Brownian motion. Then, the probability distribution of $X_t$ at time $t$ solves the PDE given in \eqref{eqn:gradflow_reg} \cite{bogachev2015fokker}. This informally means that, if we could simulate \eqref{eqn:sde}, then the distribution of $X_t$ would converge to the solution of \eqref{eqn:sw_optim}, therefore, we could use the sample paths $(X_t)_t$ as samples drawn from $(\mu_t)_t$. However, in practice this is not possible due to two reasons: (i) the drift $v_t$ cannot be computed analytically since it depends on the probability distribution of $X_t$, (ii) the SDE \eqref{eqn:sde} is a continuous-time process, it needs to be discretized.

We now focus on the first issue.
We observe that the SDE \eqref{eqn:sde} is similar to McKean-Vlasov SDEs \cite{veretennikov2006ergodic,mishura2016existence}, a family of SDEs whose drift depends on the distribution of $X_t$. By using this connection, we can borrow tools from the relevant SDE literature \cite{malrieu03,cgm-08} for developing an approximate simulation method for \eqref{eqn:sde}.

Our approach is based on defining a \emph{particle system} that serves as an approximation to the original SDE \eqref{eqn:sde}. The particle system can be written as a collection of SDEs, given as follows \cite{bossy1997stochastic}:
\begin{align}
d X_t^i = v(X_t^i, \mu_t^{N}) dt + \sqrt{2 \lambda } d W_t^i \> , \quad i = 1,\dots, N, \label{eqn:sde_particle}
\end{align}
where $i$ denotes the particle index, $N \in \mathbb{N}_+$ denotes the total number of particles, and $\mu_t^N = (1/N) \sum_{j=1}^N \delta_{X_t^j}$ denotes the empirical distribution of the particles $\{X_t^j\}_{j=1}^N$. This particle system is particularly interesting, since (i) one typically has $\lim_{N \rightarrow \infty} \mu_t^{N}= \mu_t $ with a rate of convergence of order ${\cal O}(1/\sqrt{N})$ for all $t$ \cite{malrieu03,cgm-08}, and (ii) each of the particle systems in \eqref{eqn:sde_particle} can be simulated by using an Euler-Maruyama discretization scheme. We note that the existing theoretical results in \cite{veretennikov2006ergodic,mishura2016existence} do not directly apply to our case due to the non-standard form of our drift. However, we conjecture that a similar result holds for our problem as well. Such a result would be proven by using the techniques given in \cite{zhang2018stochastic}; however, it is out of the scope of this study.

\subsection{Approximate Euler-Maruyama discretization}
In order to be able to simulate the particle SDEs \eqref{eqn:sde_particle} in practice, we propose an approximate Euler-Maruyama discretization for each particle SDE.
The algorithm iteratively applies the following update equation: ($\forall i \in  \{1,\dots,N\}$)
\begin{align}
\bar{X}^i_0 \simiid \mu_0, \>\> \bar{X}^i_{k+1} = \bar{X}^i_k + h \hspace{0.5pt} \hat{v}_k(\bar{X}^i_k) + \sqrt{2 \lambda h} Z^i_{k+1}, \label{eqn:euler_particle}
\end{align}
where $k \in \mathbb{N}_+$ denotes the iteration number, $Z^i_k$ is a standard Gaussian random vector in $\R^d$, $h$ denotes the step-size, and $\hat{v}_k$ is a short-hand notation for a computationally tractable estimator of the original drift $v(\cdot, \bar{\mu}_{kh}^N)$, with $\bar{\mu}_{kh}^{N} = (1/N) \sum_{j=1}^N \delta_{\bar{X}_k^j}$ being the empirical distribution of $\{\bar{X}_k^j\}_{j=1}^N$. A question of fundamental practical importance is how to compute this function $\hat{v}$.

We propose to approximate the integral in \eqref{eqn:gradflow_reg_drift} via a simple Monte Carlo estimate.
This is done by first drawing $N_\theta$ uniform i.i.d.\ samples from the sphere $\Sp^{d-1}$, $\{\theta_{n}\}_{n=1}^{N_\theta}$. Then, at each iteration $k$, we compute:
\begin{align}
\hat{v}_k(x) \triangleq - (1/{N_\theta}) \sum\nolimits_{n=1}^{N_\theta} \psi_{k, \theta_{n}}'(\langle\theta_{n},x\rangle ) \theta_{n}, \label{eqn:approxdrift}
\end{align}
where for any $\theta$, $\psi_{k, \theta}'$ is the derivative of the Kantorovich potential (cf.\ Section~\ref{sec:techbg}) that is applied to the OT problem from $\theta^*_\#\bar{\mu}_{kh}^{N}$ to $\theta^*_\#\nu$: i.e.\,
\begin{align}
   \psi_{k, \theta}'(z) = \bigl[ z - (F^{-1}_{\theta^*_\#\nu} \circ F_{\theta^*_\#\bar{\mu}_{kh}^{N}}) (z)  \bigr] \label{eq:psiprime}.%
 \end{align}

     \begin{algorithm2e}[t]
         \SetInd{0.1ex}{1.5ex}
         \DontPrintSemicolon
         \SetKwInOut{Input}{input}
         \SetKwInOut{Output}{output}
         \Input{${\cal D} \equiv \{y_i\}_{i=1}^P$, $\mu_0$, $N$, $N_\theta$, $h$, $\lambda$}
         \Output{$\{\bar{X}_K^i\}_{i=1}^N$}
         {\color{purple} \small \tcp{Initialize the particles}}
         $\bar{X}_0^i \simiid \mu_0$, \hfill $i = 1,\dots,N$\\
     {\color{purple} \small \tcp{Generate random directions}}
     $\theta_{n} \sim \mathrm{Uniform}(\Sp^{d-1})$, \hfill $n = 1,\dots,N_\theta$\\
     {\color{purple} \small \tcp{Quantiles of projected target}}
     \For{$\theta\in\{\theta_{n}\}_{n=1}^{N_\theta}$}
     {
     $F^{-1}_{\theta^*_\#\nu}=\textnormal{QF}\{\langle\theta,y_i \rangle\}_{i=1}^P$\\
     }
     {\color{purple} \small \tcp{Iterations}}
     \For{$k = 0,\dots K-1$}
     {
        \For{$\theta\in\{\theta_{n}\}_{n=1}^{N_\theta}$}
        {
        {\color{purple} \small \tcp{CDF of projected particles}}
        $F_{\theta^*_\#\bar{\mu}_{kh}^{N}}=\textnormal{CDF}\{\langle\theta,\bar{X}_k^i \rangle\}_{i=1}^N$\\
        }
        {\color{purple} \small \tcp{Update the particles}}
        $\bar{X}_{k+1}^i = \bar{X}_{k}^i - h \hspace{0.5pt} \hat{v}_k(\bar{X}^i_k) + \sqrt{2 \lambda h} Z^i_{k+1}$ \vspace{2pt} \\
        $\hfill i = 1,\dots,N$
     }
         \caption{Sliced-Wasserstein Flow (SWF)}
         \label{algo:flow}
     \end{algorithm2e}

For any particular $\theta\in\Sp^{d-1}$, the QF, $F_{\theta^*_\#\nu}^{-1}$ for the projection of the target distribution $\nu$ on $\theta$ can be easily computed from the data. This is done by first computing the projections $\langle \theta, y_i\rangle$ for all data points $y_i$, and then computing the empirical quantile function for this set of $P$ scalars.
Similarly, $F_{\theta^*_\#\bar{\mu}_{kh}^{N}}$, the CDF of the particles at iteration $k$, is easy to compute: we first project all particles $\bar{X}_k^i$ to get $\langle \theta, \bar{X}_k^i\rangle$, and then compute the empirical CDF of this set of $N$ scalar values.

In both cases, the true CDF and quantile functions are approximated as a linear interpolation between a set of the computed $Q\in\mathbb{N}_+$ empirical quantiles.
Another source of approximation here comes from the fact that the target $\nu$ will in practice be a collection of Dirac measures on the observations $y_i$. Since it is currently common to have a very large dataset, we believe this approximation to be accurate in practice for the target.
Finally, yet another source of approximation comes from the error induced by using a finite number of $\theta_n$ instead of a sum over $\Sp^{d-1}$ in~\eqref{eq:psiprime}.

Even though the error induced by these approximation schemes can be incorporated into our current analysis framework, we choose to neglect it for now, because (i) all of these one-dimensional computations can be done very accurately and (ii) the quantization of the empirical CDF and QF can be modeled as additive Gaussian noise that enters our discretization scheme \eqref{eqn:euler_particle} \cite{van1998asymptotic}.
Therefore, we will assume that $\hat{v}_k$ is an \emph{unbiased} estimator of $v$, i.e.\ $\E[\hat{v}(x,\mu)] = v(x,\mu)$, for any $x$ and $\mu$, where the expectation is taken over $\theta_{n}$.

The overall algorithm is illustrated in Algorithm~\ref{algo:flow}. It is remarkable that the updates of the particles only involves the learning data $\{y_i\}$ through the CDFs of its projections on the many $\theta_{n}\in\Sp^{d-1}$. This has a fundamental consequence of high practical interest: these CDF may be computed beforehand in a massively distributed manner that is independent of the sliced Wasserstein flow. This aspect is reminiscent of the \textit{compressive learning} methodology \cite{gribonval2017compressive}, except we exploit quantiles of random projections here, instead of random generalized moments as done there.

Besides, we can obtain further reductions in the computing time if the CDF, $F_{\theta^*_\#\nu}$ for the target is computed on random mini-batches of the data, instead of the whole dataset of size $P$. This simplified procedure might also have some interesting consequences in privacy-preserving settings: since we can vary the number of projection directions $N_\theta$ for each data point $y_i$, we may guarantee that $y_i$ cannot be recovered via these projections, by picking fewer than necessary for reconstruction using, e.g. compressed sensing~\cite{donoho2009observed}.

\subsection{Finite-time analysis for the infinite particle regime}
In this section we will analyze the behavior of the proposed algorithm in the asymptotic regime where the number of particles $N \rightarrow \infty$. Within this regime, we will assume that the original SDE \eqref{eqn:sde} can be directly simulated by using an approximate Euler-Maruyama scheme, defined starting at $\bar{X}_0 \simiid \mu_0$ as follows:
\begin{align}
 \bar{X}_{k+1} = \bar{X}_k + h \hspace{0.5pt} \hat{v}(\bar{X}^i_k, \bar{\mu}_{kh} ) + \sqrt{2 \lambda h} Z_{k+1}, \qquad \label{eqn:euler_asymp}
\end{align}
where $\mub_{kh}$ denotes the law of $\bar{X}_k$ with step size $h$ and $\{Z_k\}_{k}$ denotes a collection of standard Gaussian random variables. Apart from its theoretical significance, this scheme is also practically relevant, since one would expect that it captures the behavior of the particle method \eqref{eqn:euler_particle} with large number of particles. 

In practice, we would like to approximate the measure sequence $(\mu_t)_t$ as accurate as possible, where $\mu_t$ denotes the law of $X_t$. Therefore, we are interested in analyzing the distance $\| \mub_{Kh} - \mu_{T} \|_{\TV}$, where $K$ denotes the total number of iterations, $T=Kh$ is called the horizon, and $\|\mu-\nu\|_{\TV}$ denotes the total variation distance between two probability measures $\mu$ and $\nu$: 
$\|\mu-\nu\|_{\TV}\triangleq \sup_{A \in {\cal B}(\Omega)} |\mu(A) -\nu(A) |$.

In order to analyze this distance, we exploit the algorithmic similarities between \eqref{eqn:euler_asymp} and the stochastic gradient Langevin dynamics (SGLD) algorithm \cite{WelTeh2011a}, which is a Bayesian posterior sampling method having a completely different goal, and is obtained as a discretization of an SDE whose drift has a much simpler form. We then bound the distance by extending the recent results on SGLD \cite{raginsky17a} to time- and measure-dependent drifts, that are of our interest in the paper.

We now present our second main theoretical result. We present all our assumptions and the explicit forms of the constants in \supp. 
\begin{thm}
\label{thm:euler}
Assume that the conditions given in \supp{} hold. Then, the following bound holds for $T=Kh$:
\begin{align}
\nonumber \| \mub_{Kh} - \mu_T \|_{\TV}^2 \leq \delta_\lambda \Biggl\{  \frac{L^2 K}{2\lambda} \Bigl( \frac{C_1 h^3}{3} + 3 \lambda d h^2 \Bigr) \hspace{13pt} \\ + \frac{C_2  \delta K h}{4\lambda} \Biggr\}	,
\end{align} 
for some $C_1,C_2,L >0$, $\delta \in (0,1)$, and $\delta_\lambda >1$.  %
\end{thm}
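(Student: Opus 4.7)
The plan is to bound $\|\mub_{Kh}-\mu_T\|_{\TV}$ by lifting both the discrete scheme \eqref{eqn:euler_asymp} and the continuous McKean--Vlasov SDE \eqref{eqn:sde} to path measures on $C([0,T];\R^d)$, then controlling their Kullback--Leibler divergence via Girsanov's theorem and converting to total variation using Pinsker's inequality together with the data-processing inequality. This is the template employed by \cite{raginsky17a} for SGLD, which I would extend to the time- and measure-dependent drift $v(\cdot,\mu_\cdot)$ arising here.

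First, I introduce a piecewise-constant-drift continuous interpolation $(Y_t)_{t\in[0,T]}$ of the Euler scheme, defined on the same probability space as $(X_t)_{t\in[0,T]}$ by
\begin{equation*}
dY_t = \hat{v}_k(Y_{kh})\,dt + \sqrt{2\lambda}\,dW_t, \qquad t\in[kh,(k+1)h),
\end{equation*}
with $Y_0\sim\mu_0$, so that $Y_{kh}\stackrel{d}{=}\bar{X}_k$ and in particular $\mathrm{Law}(Y_{Kh})=\mub_{Kh}$. Since $Y$ and $X$ share the same diffusion coefficient, Girsanov's theorem yields
\begin{equation*}
\KL\bigl(\mathrm{Law}(Y_{[0,T]})\,\|\,\mathrm{Law}(X_{[0,T]})\bigr) \leq \frac{1}{4\lambda}\sum_{k=0}^{K-1}\int_{kh}^{(k+1)h}\E\bigl\|\hat{v}_k(Y_{kh})-v(X_t,\mu_t)\bigr\|^2\,dt.
\end{equation*}
Pinsker combined with data-processing then bounds $\|\mub_{Kh}-\mu_T\|_{\TV}^2$ by one-half of this divergence, so the remaining work is to control the integrand.

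Next, I would split the drift error into a Monte--Carlo piece and a time-discretization piece,
\begin{equation*}
\hat{v}_k(Y_{kh})-v(X_t,\mu_t) = \bigl[\hat{v}_k(Y_{kh})-v(Y_{kh},\mub_{kh})\bigr] + \bigl[v(Y_{kh},\mub_{kh})-v(X_t,\mu_t)\bigr],
\end{equation*}
and bound each in squared expectation. The first bracket has mean zero by the unbiasedness of $\hat{v}_k$ assumed after \eqref{eq:psiprime}; using the independence of the $\theta_n$ across iterations and concentration on $\Sp^{d-1}$, its second moment is uniformly bounded by a constant multiple of $\delta$, producing the $C_2 \delta Kh /(4\lambda)$ contribution. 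The second bracket is handled by Lipschitz regularity of $v$ jointly in its spatial and measure arguments, giving $L^2\bigl(\E\|Y_{kh}-X_t\|^2 + \W^2(\mub_{kh},\mu_t)\bigr)$, and a one-step It\^o expansion yields $\E\|X_t-X_{kh}\|^2 \leq C_1(t-kh)^2 + 2\lambda d(t-kh)$ from the uniform boundedness of $v$ on $\cB(0,r)$; integrating over each step and summing over $k=0,\dots,K-1$ produces exactly the $\tfrac{L^2 K}{2\lambda}\bigl(\tfrac{C_1 h^3}{3}+3\lambda d h^2\bigr)$ term. The coupling error $\E\|Y_{kh}-X_{kh}\|^2$ and the Wasserstein term $\W^2(\mub_{kh},\mu_t)$ are then closed by a discrete Gr\"onwall argument whose accumulated multiplicative factor is absorbed into the constant $\delta_\lambda>1$.

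The main obstacle is establishing the Lipschitz regularity of the drift $v(x,\mu)=-\int_{\Sp^{d-1}}\psi'_{\mu,\theta}(\langle x,\theta\rangle)\theta\,d\theta$ in the measure argument $\mu$, because $\psi'_{\mu,\theta}(z)=z-(F^{-1}_{\theta^*_\#\nu}\circ F_{\theta^*_\#\mu})(z)$ depends on $\mu$ through the CDF of its one-dimensional projection and is a non-smooth functional of $\mu$ in general. Making this quantitative requires stability of the one-dimensional optimal transport map under Wasserstein perturbations of $\mu$, which in turn leverages the strict positivity and smoothness of the target density for $\nu$ assumed in Theorem~\ref{thm:continuity}, together with the compactness of $\cB(0,r)$ to control the range of the quantile composition. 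A secondary technical point is the uniform-in-$k$ moment control of $\bar{X}_k$ needed to apply these Lipschitz bounds, since the Euler iterate does not a priori remain in $\cB(0,r)$; the exponential-type dependence from this step is what ultimately makes $\delta_\lambda$ strictly larger than one.
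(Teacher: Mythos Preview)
Your high-level template (Girsanov on path measures, then Pinsker and data-processing, following \cite{raginsky17a}) matches the paper, but the execution contains a genuine confusion and misses the paper's key decomposition.

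The Girsanov integrand must be evaluated along a \emph{single} process. Once you compare the path law of your interpolation (call it $U$ to avoid a clash with the paper's notation) to that of $X$, the KL is $\tfrac{1}{4\lambda}\int_0^T \E\bigl\|\hat v(U_{kh},\mub_{kh})-v(U_t,\mu_t)\bigr\|^2\,dt$, with both drifts evaluated along $U$; the frozen flow $(\mu_t)_t$ enters only as a time-dependent parameter of the $X$-drift. There is no $X_t$ in the integrand, so neither $\E\|Y_{kh}-X_t\|^2$ nor the ``coupling error'' $\E\|Y_{kh}-X_{kh}\|^2$ you propose to close by Gr\"onwall ever appears. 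You are mixing the change-of-measure picture with a synchronous-coupling picture, and these cannot be combined in a single inequality.

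The paper instead uses a \emph{three}-process decomposition. Besides $(X_t)$ and the interpolation $(U_t)$ of $(\bar X_k)$, it introduces an intermediate Euler scheme $\hat X_{k+1}=\hat X_k+h\,\hat v(\hat X_k,\mu_{kh})+\sqrt{2\lambda h}\,Z_{k+1}$ driven by the \emph{exact} law $\mu_{kh}$, with interpolation $(Y_t)$. One Girsanov computation bounds $\|\pi_X^T-\pi_Y^T\|_{\TV}^2$ (discretisation plus Monte--Carlo error, no measure mismatch), yielding precisely the bracketed term. A second Girsanov computation gives $\|\pi_U^T-\pi_Y^T\|_{\TV}^2\le \tfrac{L^2Kh}{16\lambda}\,\|\pi_X^T-\pi_U^T\|_{\TV}^2$, using only the \emph{assumed} Lipschitz continuity of $\hat v$ in its measure argument with respect to total variation. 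The triangle inequality on path measures then produces a self-referential affine inequality in $\|\pi_X^T-\pi_U^T\|_{\TV}^2$, solved algebraically to give $\delta_\lambda=(1-\tfrac{KL^2h}{8\lambda})^{-1}$ under the proviso $\lambda>KL^2h/8$. So $\delta_\lambda$ is a bootstrap factor at the path-law level, not a Gr\"onwall constant from moment control; and the Lipschitz-in-$\mu$ regularity you identify as the ``main obstacle'' is a standing hypothesis (in TV, not $\W$), not something the proof establishes.
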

Here, the constants $C_1$, $C_2$, $L$ are related to the regularity and smoothness of the functions $v$ and $\hat{v}$; $\delta$ is directly proportional to the variance of $\hat{v}$, and $\delta_\lambda$ is inversely proportional to $\lambda$. The theorem shows that 
if we choose $h$ small enough, we can have a non-asymptotic error guarantee, which is formally shown in the following corollary. 
\begin{cor}
  \label{coro:precision}
  Assume that the conditions of Theorem~\ref{thm:euler} hold. Then for all $\varepsilon >0$, $K \in \mathbb{N}_+$, setting
  \begin{align}
h = (3/C_1)\wedge\left(\frac{2 \varepsilon^2 \lambda}{\delta_\lambda L^2 T}(1+3\lambda d)^{-1}\right)^{1/2}, %
  \end{align}
  we have
  \begin{align}
    \| \mub_{Kh} - \mu_T \|_{\TV} \leq \varepsilon + \left(\frac{C_2 \delta_\lambda \delta T}{4\lambda}\right)^{1/2} 
  \end{align}
  for $T=Kh$.
\end{cor}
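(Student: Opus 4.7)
The plan is to deduce the corollary as a direct algebraic simplification of the bound in Theorem~\ref{thm:euler}; no new probabilistic input is needed. I would split the right-hand side of that bound into two contributions: a \emph{discretization} piece
$$A(h) \triangleq \delta_\lambda \frac{L^2 K}{2\lambda}\Bigl(\frac{C_1 h^3}{3} + 3\lambda d h^2\Bigr),$$
which vanishes as $h \to 0$, and a \emph{noise} piece
$$B \triangleq \delta_\lambda \frac{C_2 \delta K h}{4\lambda} = \delta_\lambda \frac{C_2 \delta T}{4\lambda},$$
which depends on $h$ only through the horizon $T = Kh$. The strategy is to force $A(h) \leq \varepsilon^2$ by an appropriate choice of step size, while accepting the irreducible residual $B$.

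For the first constraint, I would impose $h \leq 3/C_1$, which gives $C_1 h/3 \leq 1$ and hence $C_1 h^3/3 \leq h^2$. This simplifies the bracket to $(1+3\lambda d)h^2$ and yields
$$A(h) \leq \delta_\lambda \frac{L^2 K h^2}{2\lambda}(1+3\lambda d).$$
For the second constraint, I would require this simpler bound to be at most $\varepsilon^2$ and solve explicitly for $h$. Rewriting the resulting inequality using $Kh = T$ produces exactly the square-root prescription appearing in the corollary, and the minimum with $3/C_1$ packages both conditions into the single $\wedge$-expression in the statement.

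It then remains to combine the two contributions. Applying $\sqrt{a+b} \leq \sqrt{a}+\sqrt{b}$ for $a,b \geq 0$ to $A(h) + B \leq \varepsilon^2 + B$ yields
$$\|\mub_{Kh}-\mu_T\|_{\TV} \leq \varepsilon + \Bigl(\frac{C_2 \delta_\lambda \delta T}{4\lambda}\Bigr)^{1/2},$$
which is the claimed inequality. No substantive obstacle is expected here: the entire derivation is a calibration of step size, and the only point requiring care is the bookkeeping for the coupling between $h$ and $T$ via $T = Kh$, which is what governs how the constraint on $h$ inherits an explicit dependence on the horizon $T$ rather than on the iteration count $K$.
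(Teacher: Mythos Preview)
Your proposal is correct and follows essentially the same route as the paper's own proof, which is extremely terse: it simply asserts that the stated choice of $h$ forces the discretization term $\delta_\lambda \frac{L^2 K}{2\lambda}\bigl(\frac{C_1 h^3}{3} + 3\lambda d h^2\bigr)$ to be at most $\varepsilon^2$ and declares the proof finished. Your write-up is a more detailed version of exactly this argument, making explicit the role of the constraint $h \le 3/C_1$, the factorization of $(1+3\lambda d)h^2$, and the final subadditivity step $\sqrt{a+b}\le\sqrt{a}+\sqrt{b}$ that the paper leaves implicit.
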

This corollary shows that for a large horizon $T$, the approximate drift $\hat{v}$ should have a small variance in order to obtain accurate estimations. This result is similar to \cite{raginsky17a} and \cite{nguyen2019non}: for small $\varepsilon$ the variance of the approximate drift should be small as well. On the other hand, we observe that the error decreases as $\lambda$ increases. This behavior is expected since for large $\lambda$, the Brownian term in \eqref{eqn:sde} dominates the drift, which makes the simulation easier.

We note that these results establish the explicit dependency of the error with respect to the algorithm parameters (e.g. step-size, gradient noise) for a fixed number of iterations, rather than explaining the asymptotic behavior of the algorithm when $K$ goes to infinity.

\begin{figure*}
	\begin{tikzpicture}[font=\small]

		\begin{axis}[%
			/pgf/number format/.cd,
					use comma,
					1000 sep={},
		width=1\textwidth,
		axis line style={draw=none},
		height=6.5cm,
		axis on top,
		trim axis left,
		trim axis right,
		scale only axis,
		ymajorticks=false,
		xmajorticks=false,
		xmin=0,
		xmax=75,
		ymin=0,
		ymax=19
		]

		\addplot [forget plot] graphics [xmin=0,xmax=10,ymin=11,ymax=19] {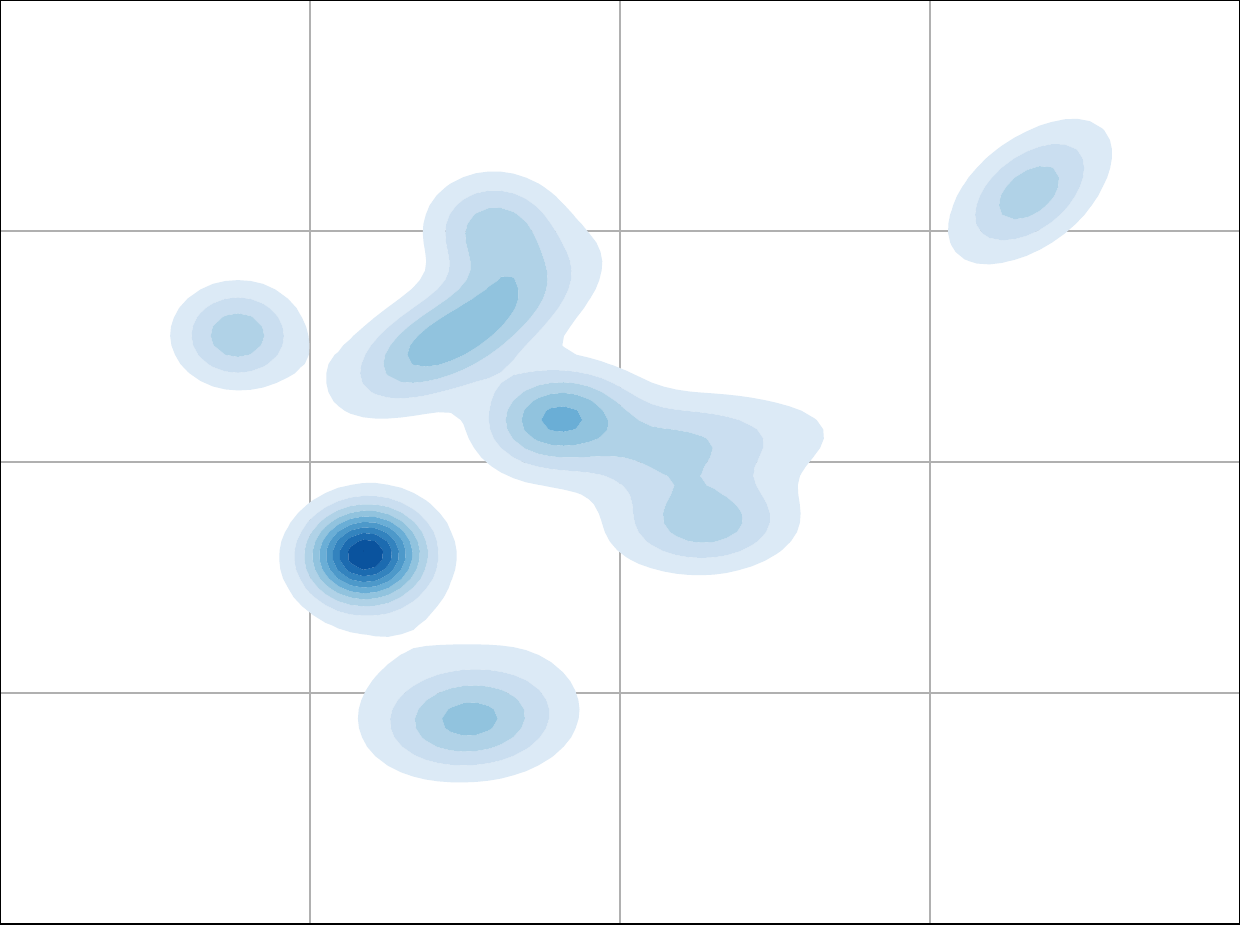};
		\addplot [forget plot] graphics [xmin=10,xmax=20,ymin=11,ymax=19] {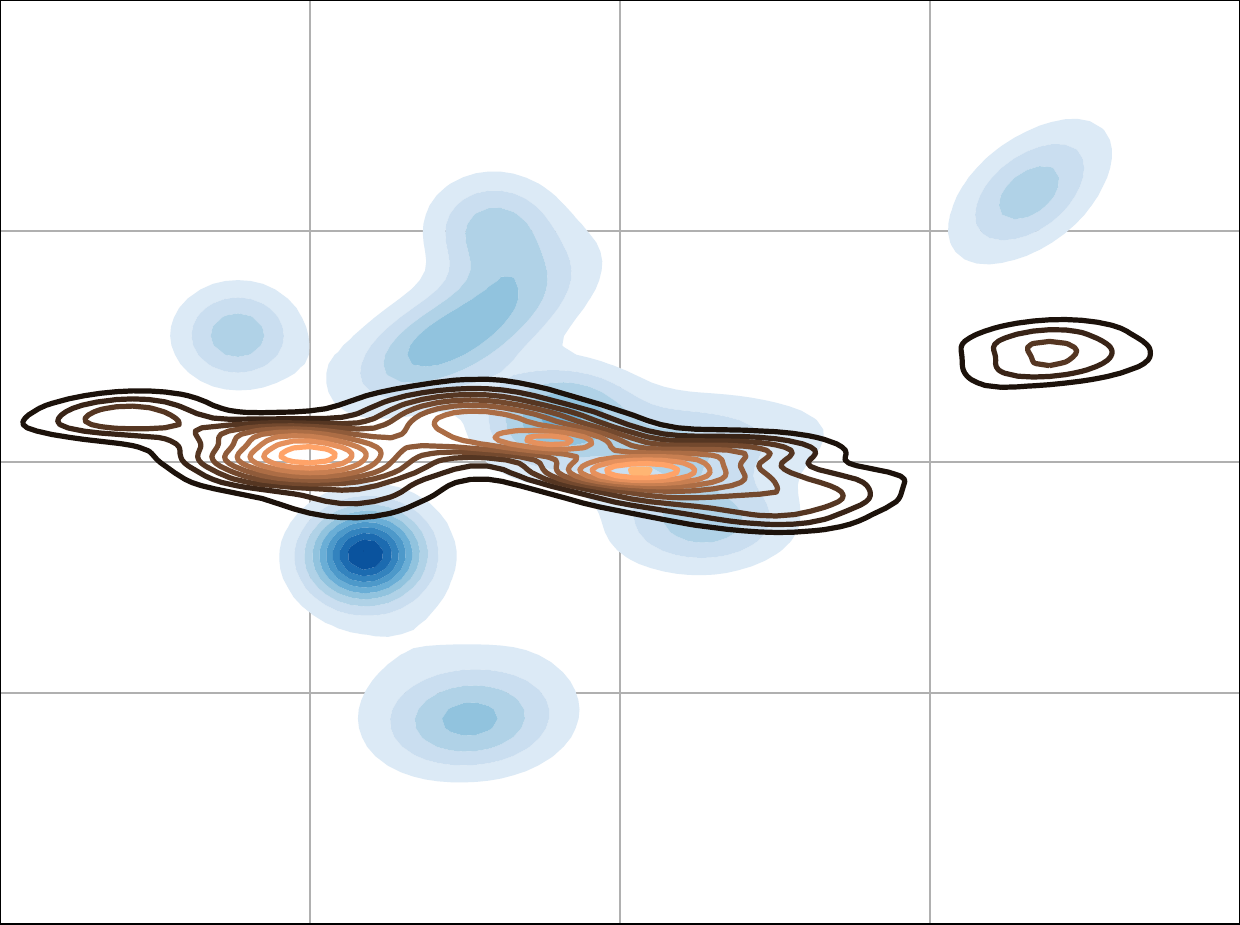};
		\addplot [forget plot] graphics [xmin=20,xmax=30,ymin=11,ymax=19] {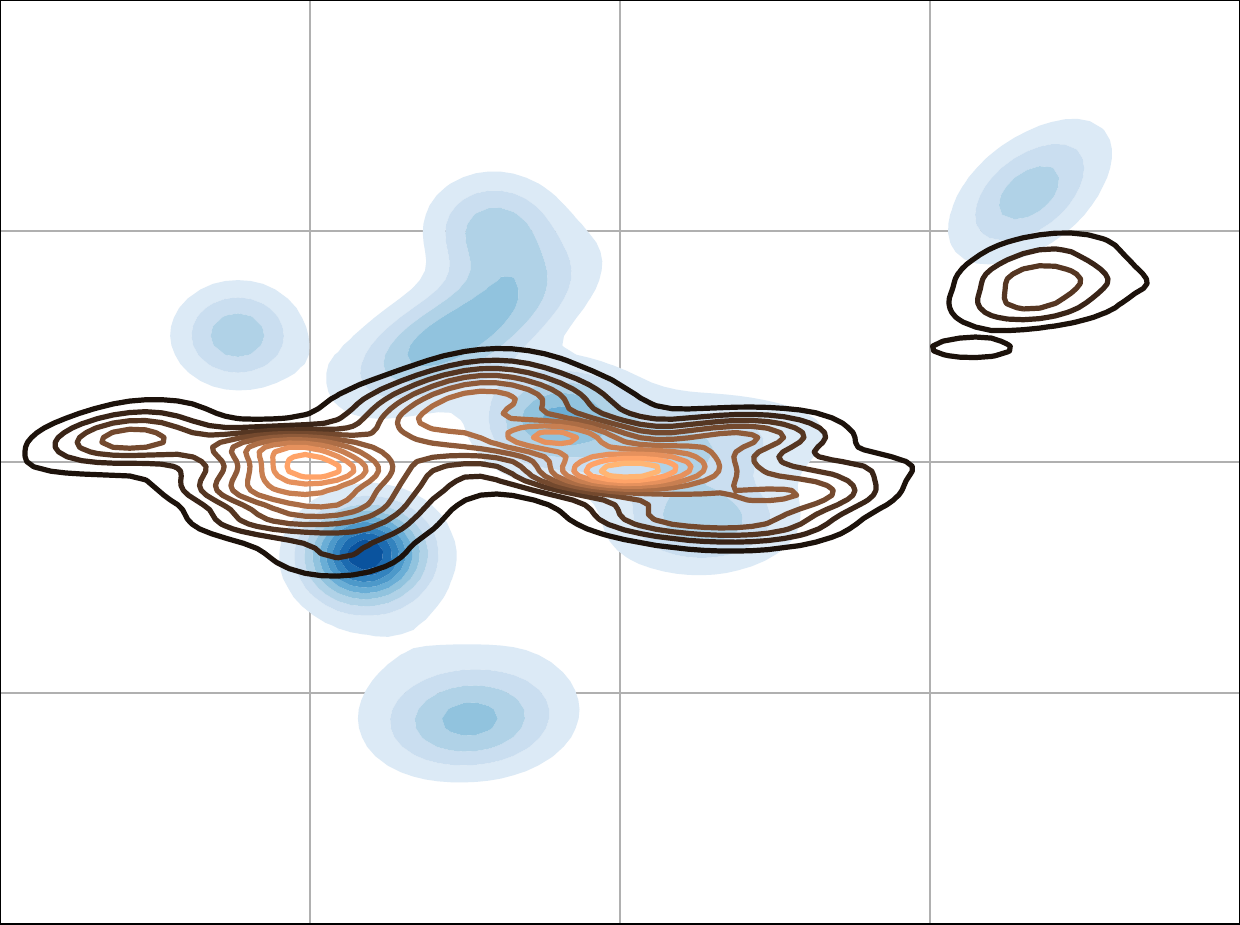};
		\addplot [forget plot] graphics [xmin=30,xmax=40,ymin=11,ymax=19] {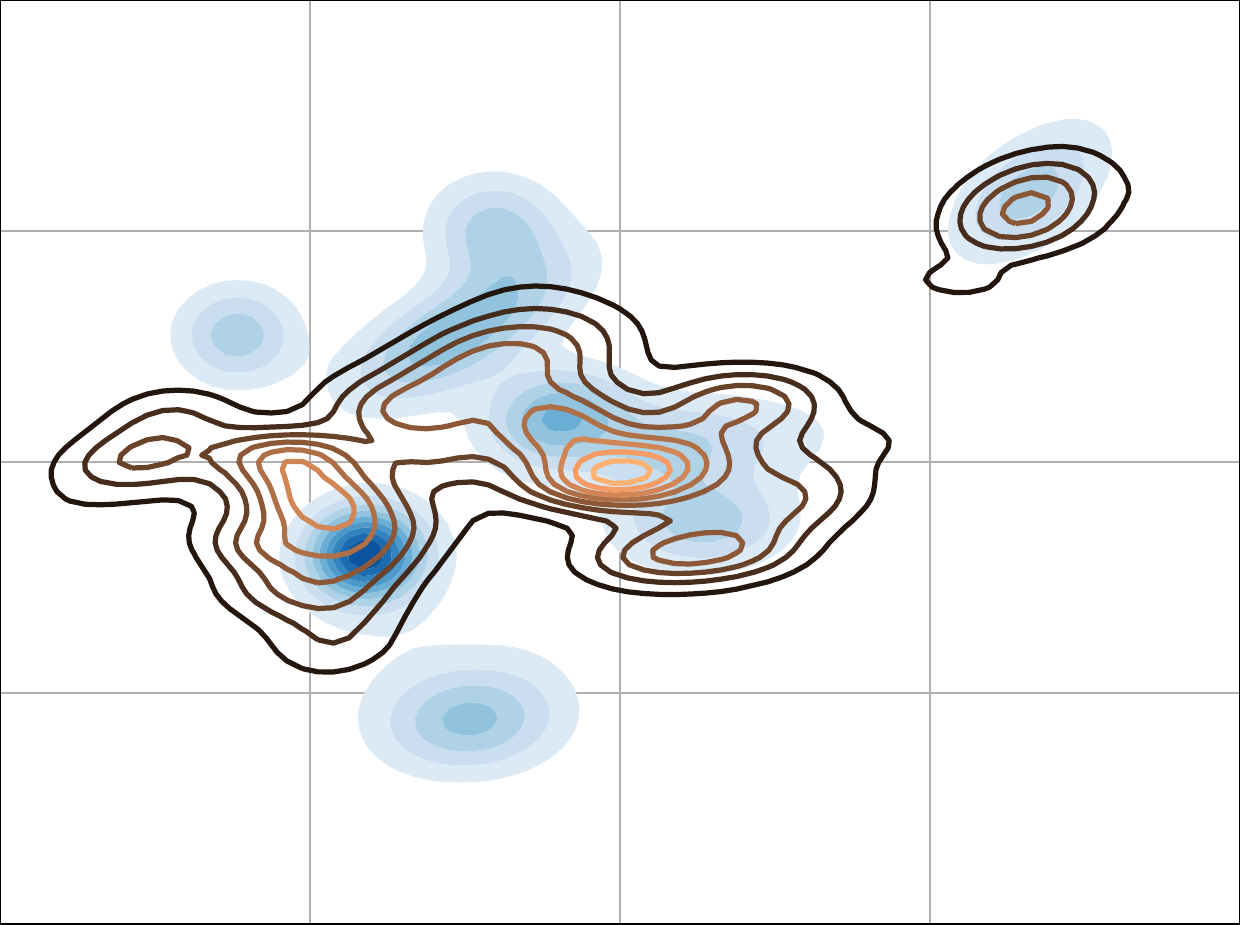};
		\addplot [forget plot] graphics [xmin=40,xmax=50,ymin=11,ymax=19] {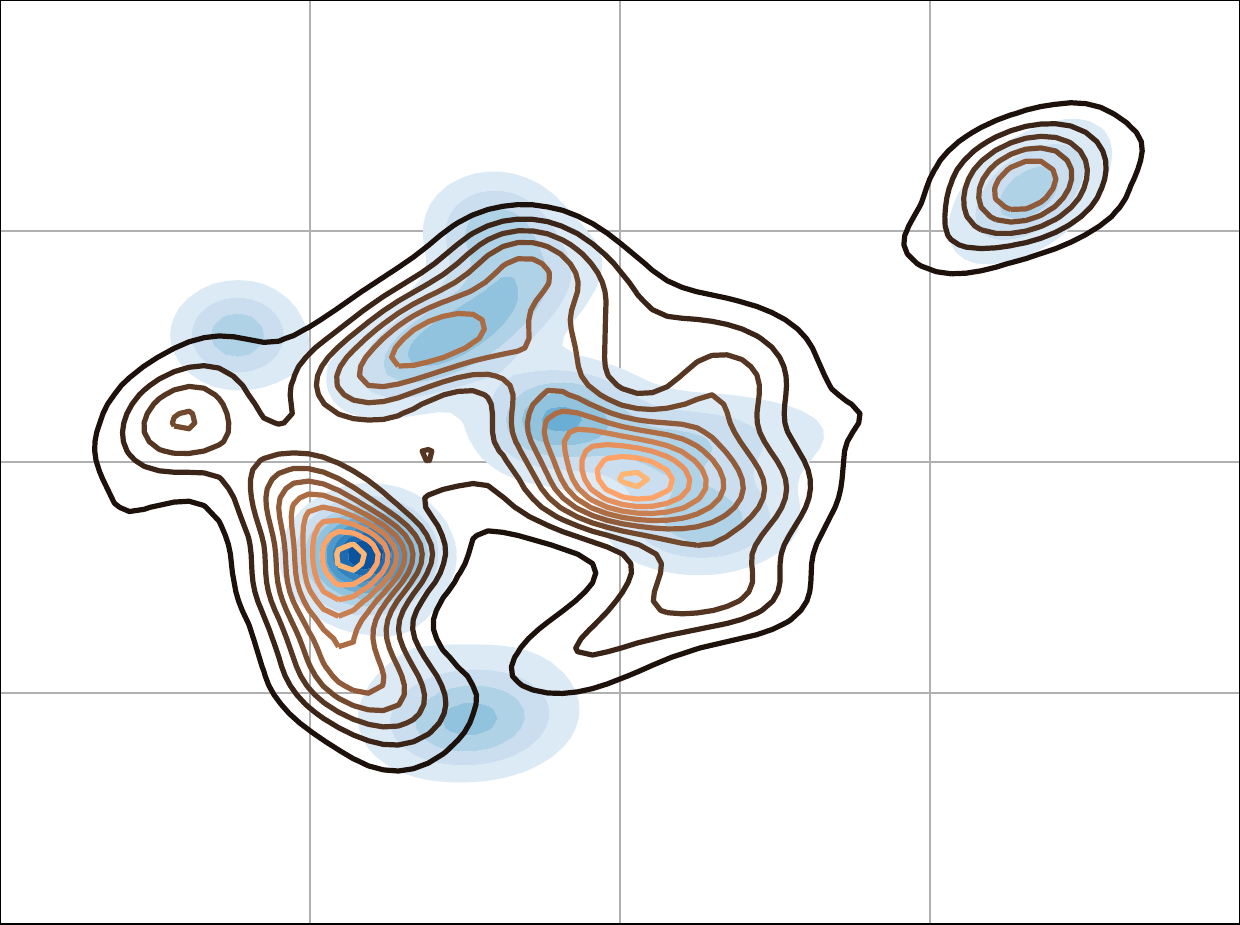};
		\node at (75,115){\small Target};
		\node at (175,115){\small$k=2$};
		\node at (275,115){\small$k=3$};
		\node at (375,115){\small$k=5$};
		\node at (475,115){\small$k=10$};

		\addplot [forget plot] graphics [xmin=0,xmax=10,ymin=2,ymax=10] {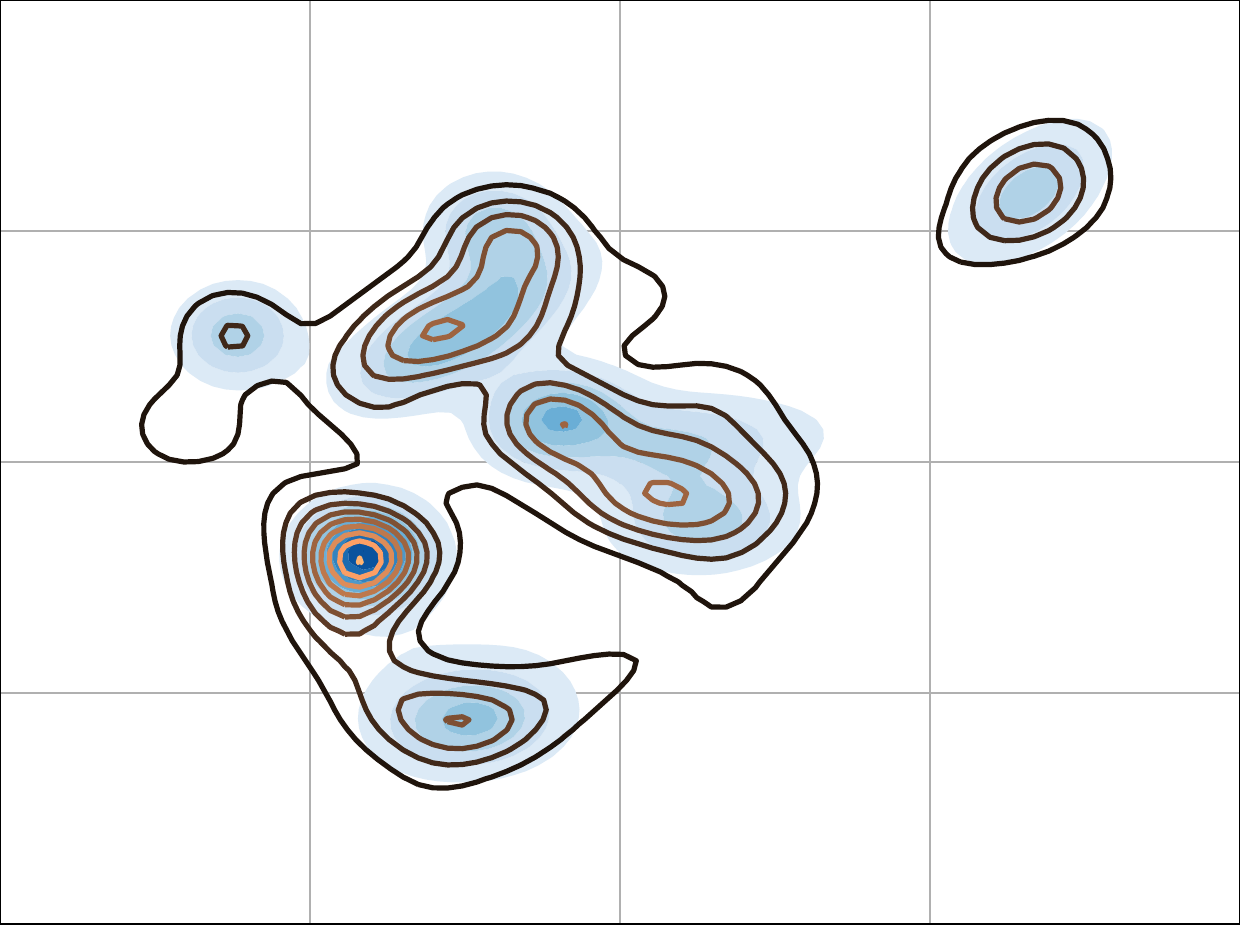};
		\addplot [forget plot] graphics [xmin=10,xmax=20,ymin=2,ymax=10] {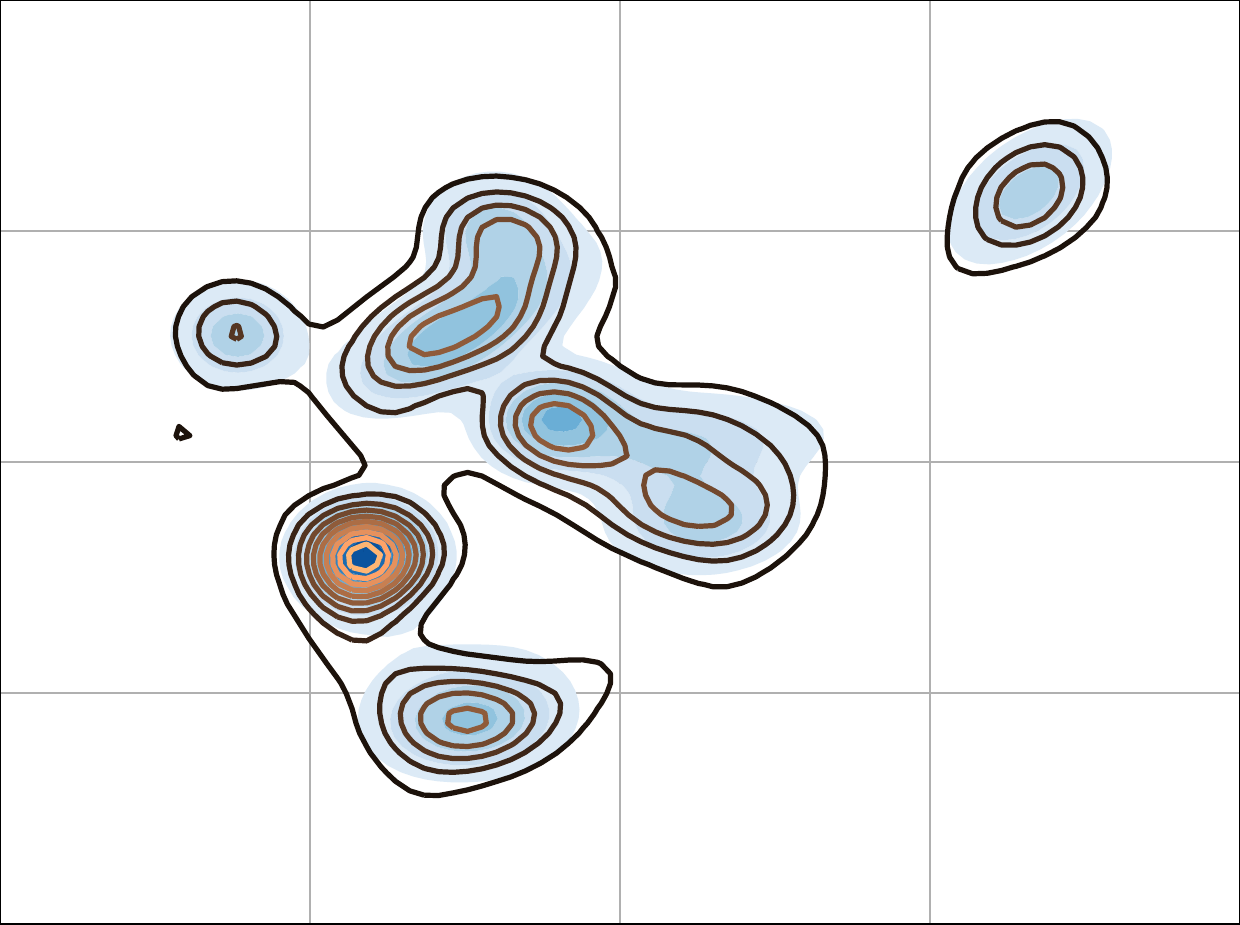};
		\addplot [forget plot] graphics [xmin=20.5,xmax=50,ymin=1,ymax=10] {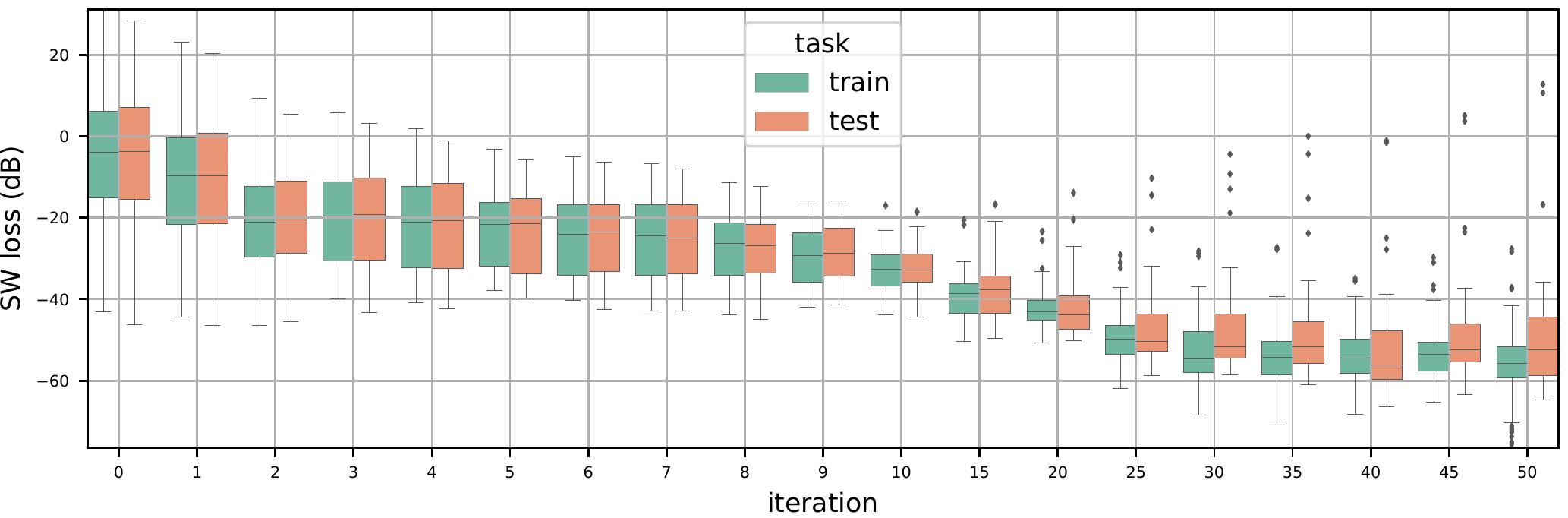};
		\node at (75,25){\small$k=20$};
		\node at (175,25){\small$k=50$};

		\draw[thick] (525,20) -- (525,250);

		\addplot [forget plot] graphics [xmin=55,xmax=65,ymin=11,ymax=19] {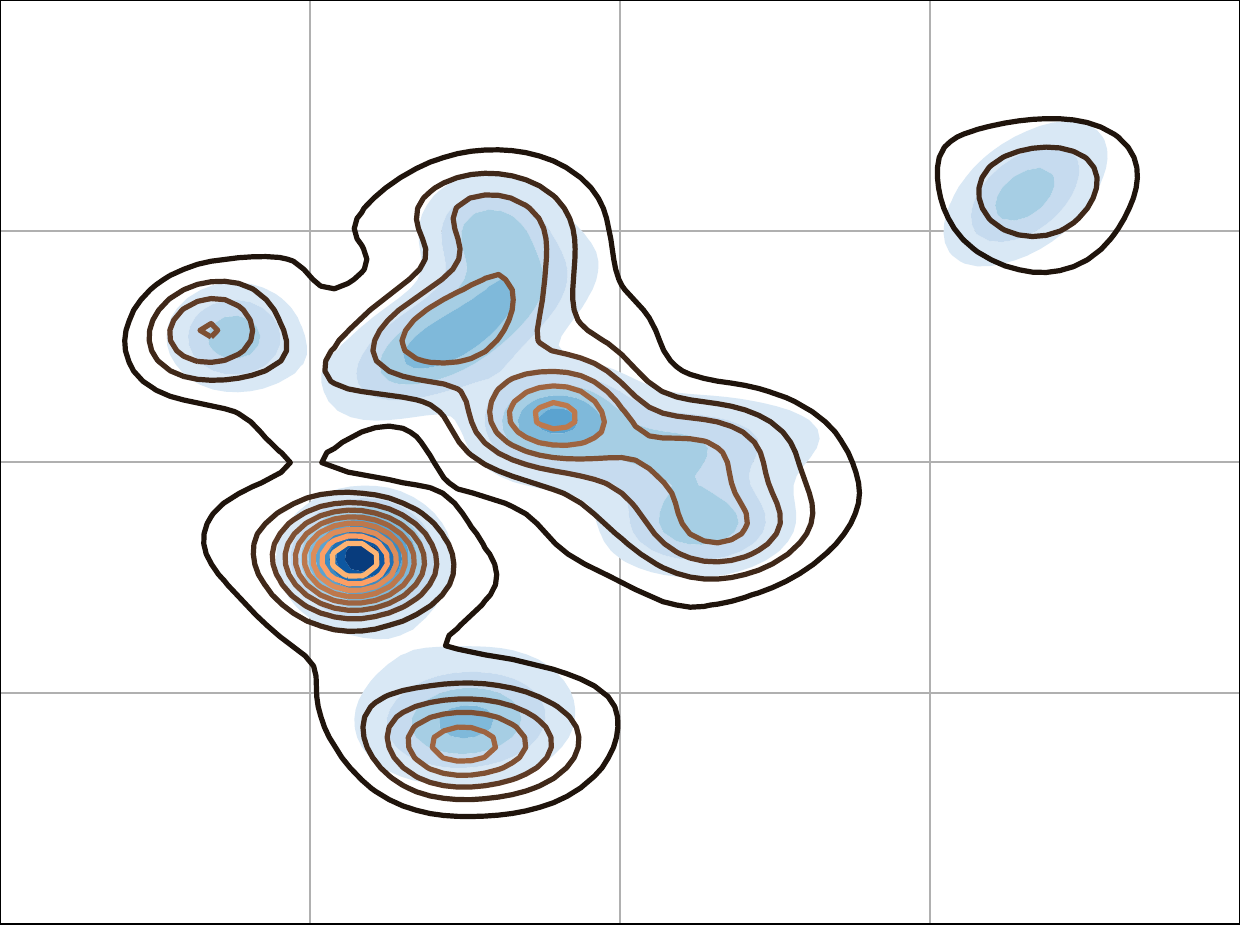};
		\addplot [forget plot] graphics [xmin=65,xmax=75,ymin=11,ymax=19] {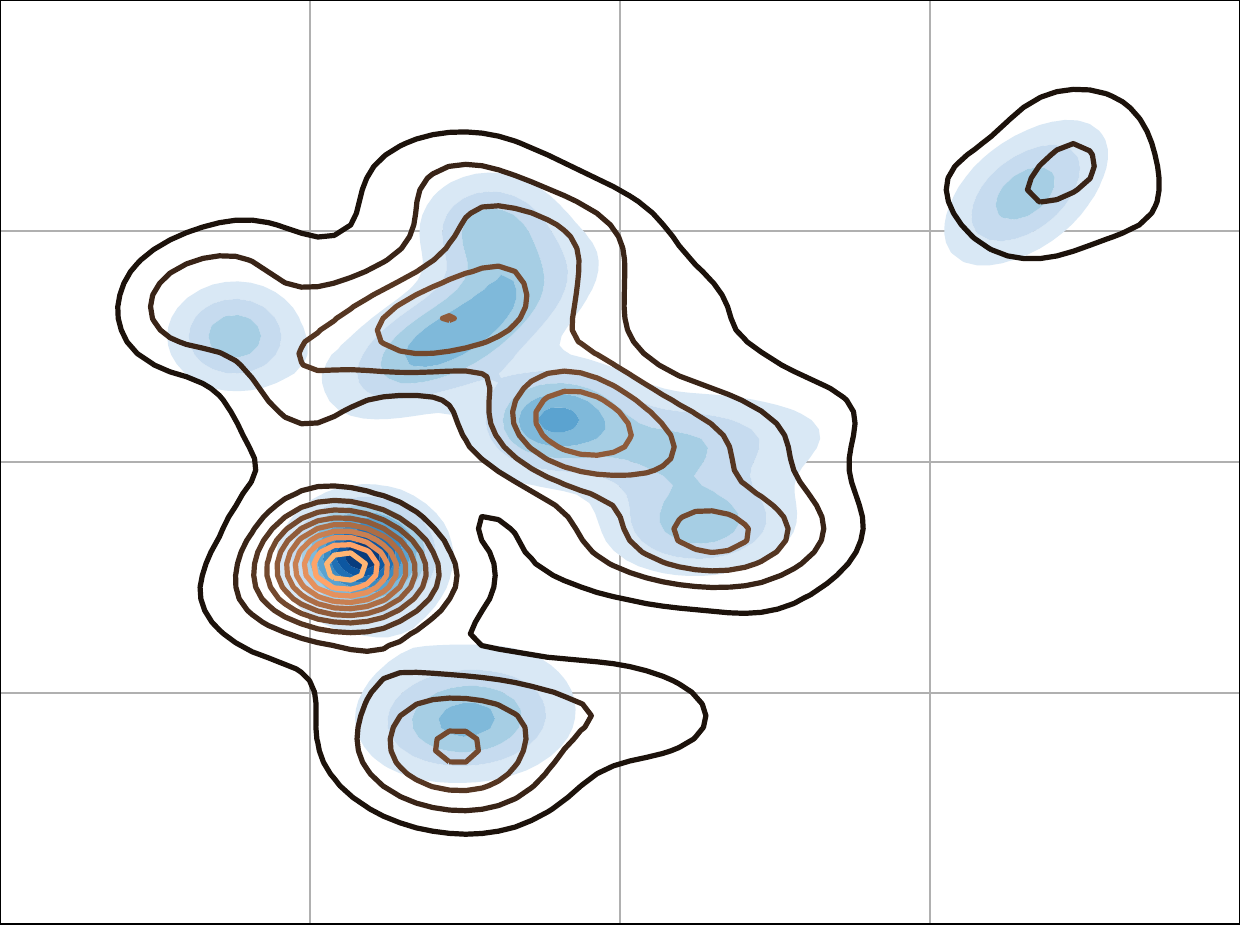};
		\node at (625,115){\small$\lambda=0.1$};
		\node at (725,115){\small$\lambda=0.2$};

		\addplot [forget plot] graphics [xmin=55,xmax=65,ymin=2,ymax=10] {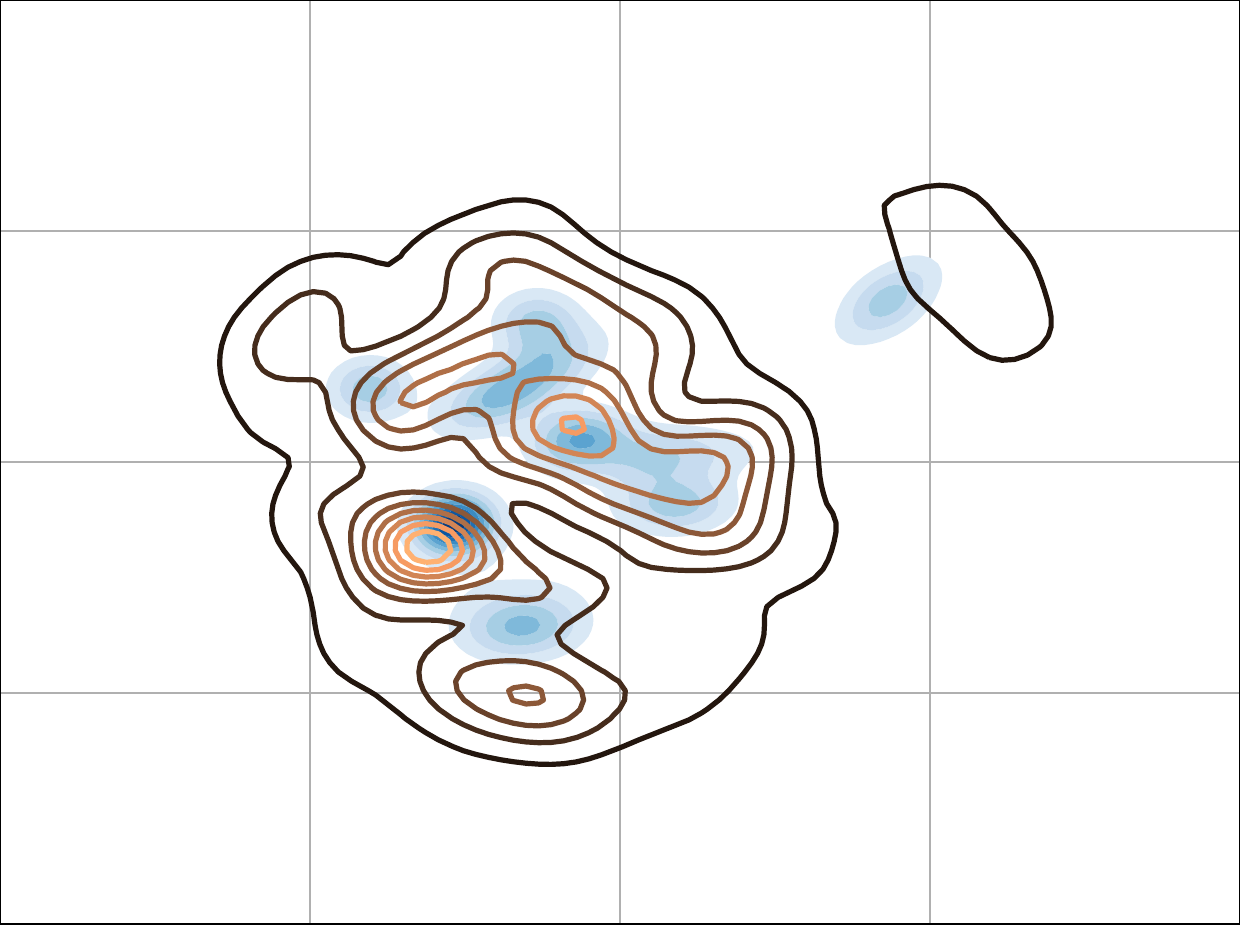};
		\addplot [forget plot] graphics [xmin=65,xmax=75,ymin=2,ymax=10] {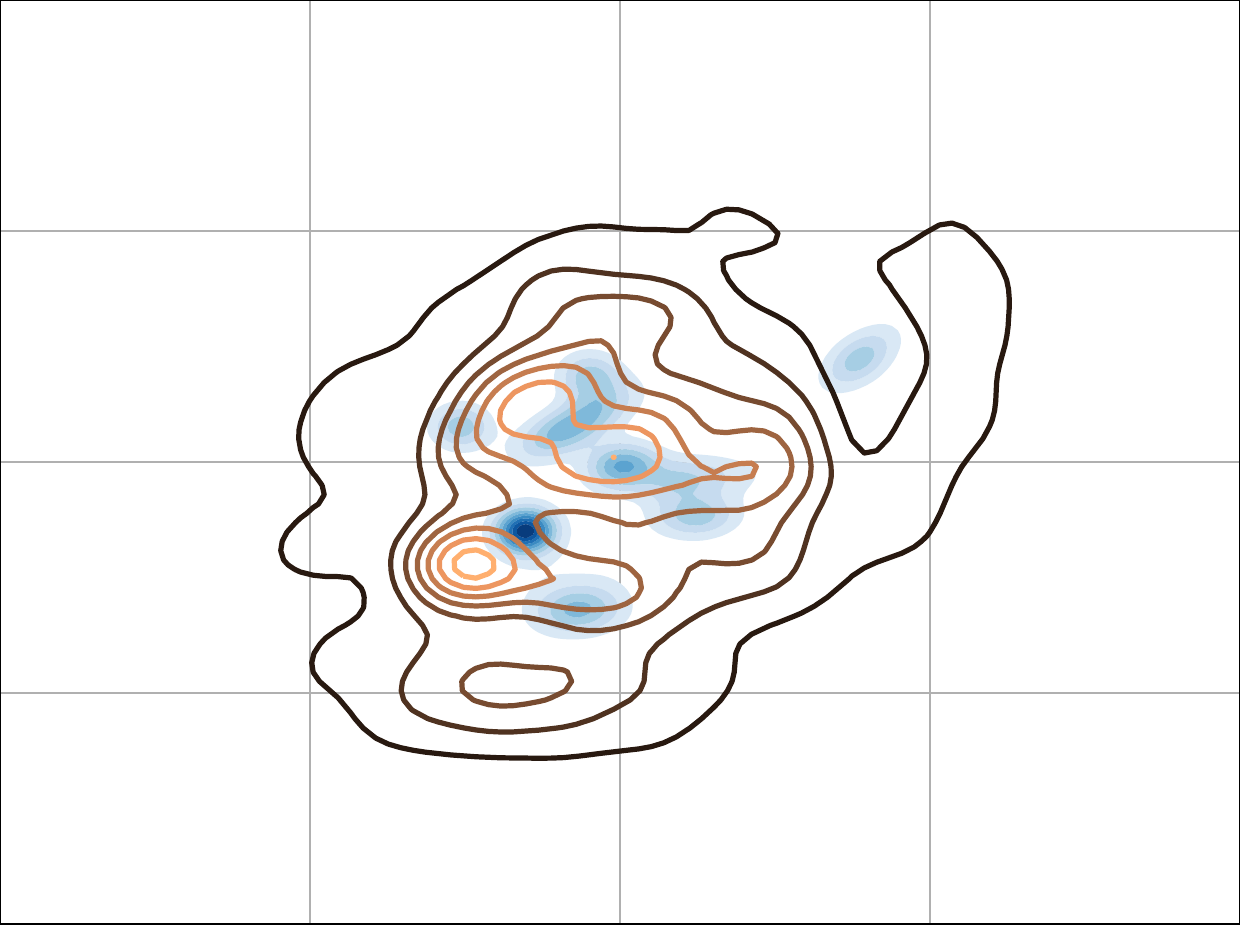};
		\node at (625,25){\small$\lambda=0.5$};
		\node at (725,25){\small$\lambda=1$};

		\end{axis}
	\end{tikzpicture}
	\vspace{-2\baselineskip}\caption{SWF on toy 2D data. \textbf{Left:} Target distribution (shaded contour plot) and distribution of particles (lines) during SWF. (bottom) SW cost over iterations during training (left) and test (right) stages. \textbf{Right:} Influence of the regularization parameter~$\lambda$. }
	\label{fig:toy_example}
\end{figure*}

\section{Experiments}

In this section, we evaluate the SWF algorithm on a synthetic and a real data setting. Our primary goal is to validate our theory and illustrate the behavior of our non-standard approach, rather than to obtain the state-of-the-art results in IGM. In all our experiments, the initial distribution $\mu_0$ is selected as the standard Gaussian distribution on $\R^d$, we take $Q=100$ quantiles and $N=5000$ particles, which proved sufficient to approximate the quantile functions accurately.

\subsection{Gaussian Mixture Model }
We perform the first set of experiments on synthetic data where we consider a standard Gaussian mixture model (GMM) with $10$ components and random parameters. Centroids are taken as sufficiently distant from each other to make the problem more challenging. We generate $P=50000$ data samples in each experiment.

In our first experiment, we set $d=2$ for visualization purposes and illustrate the general behavior of the algorithm. Figure~\ref{fig:toy_example} shows the evolution of the particles through the iterations. Here, we set $N_\theta=30$, $h=1$ and $\lambda=10^{-4}$.
We first observe that the SW cost between the empirical distributions of training data and particles is steadily decreasing along the SW flow. Furthermore, we see that the QFs, $F^{-1}_{\theta^*_\#\bar{\mu}_{kh}^{N}}$ that are computed with the initial set of particles (the \textit{training} stage) can be perfectly re-used for new unseen particles in a subsequent \textit{test} stage, yielding similar ---~yet slightly higher~--- SW cost.

In our second experiment on Figure~\ref{fig:toy_example}, we investigate the effect of the level of the regularization $\lambda$. The distribution of the particles becomes more spread with increasing $\lambda$. This is due to the increment of the entropy, as expected.

\subsection{Experiments on real data}
\begin{figure}
\centering
\includegraphics[width=0.99\columnwidth]{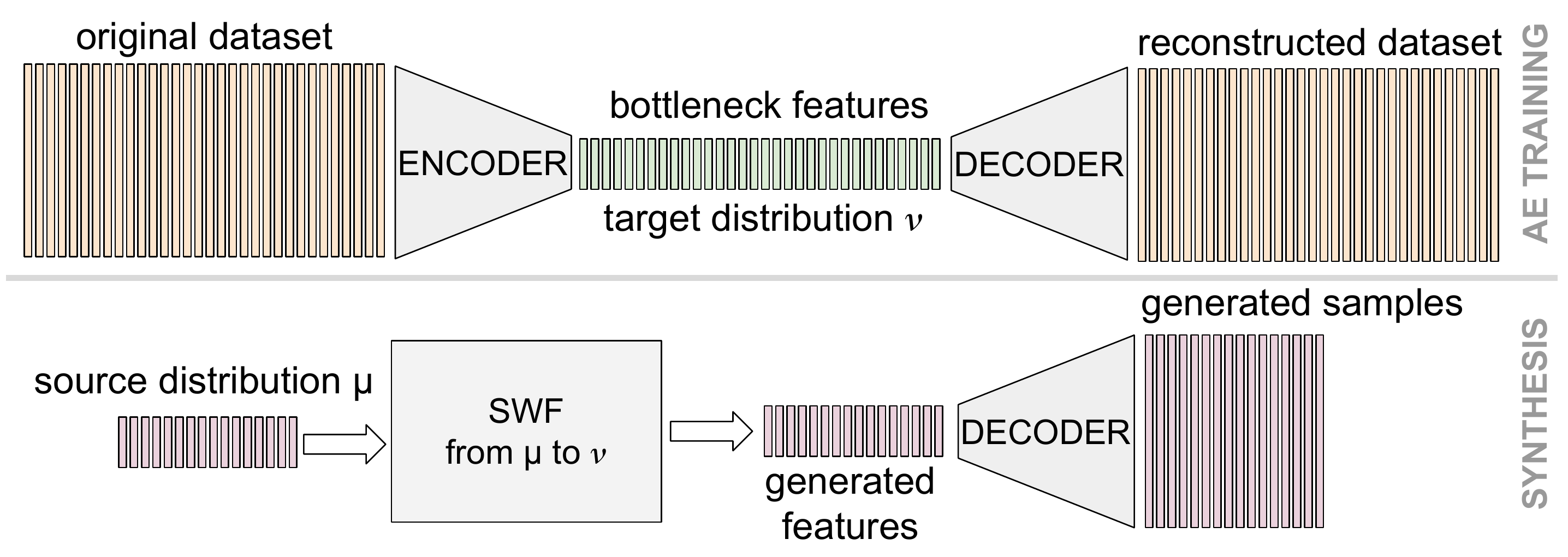}
\caption{First, we learn an autoencoder (AE). Then, we use SWF to transport random vectors to the distribution of the bottleneck features of the training set. The trained decoder is used for visualization.}
\label{fig:using_ae}
\vspace{-10pt}
\end{figure}

\begin{figure}[h]
\centering
\includegraphics[width=0.49\columnwidth]{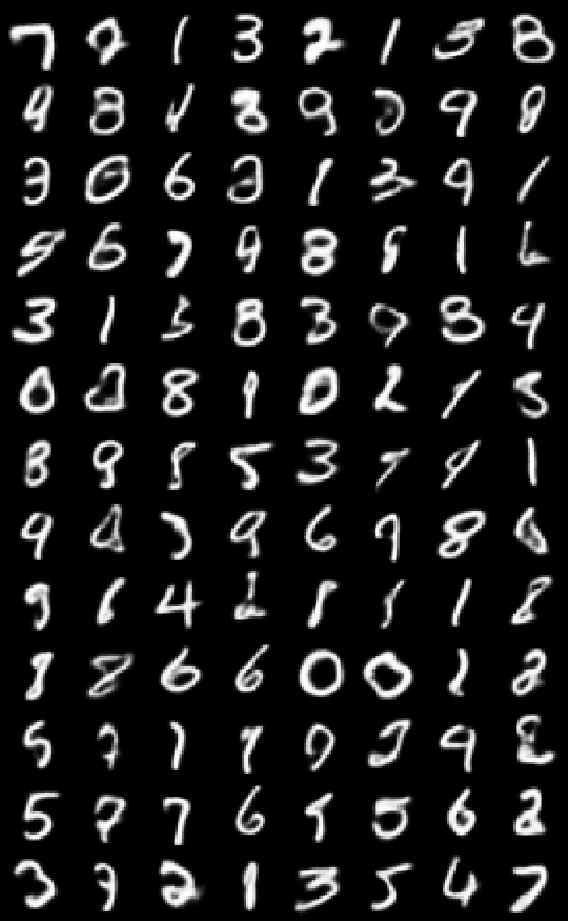}
\includegraphics[width=0.49\columnwidth]{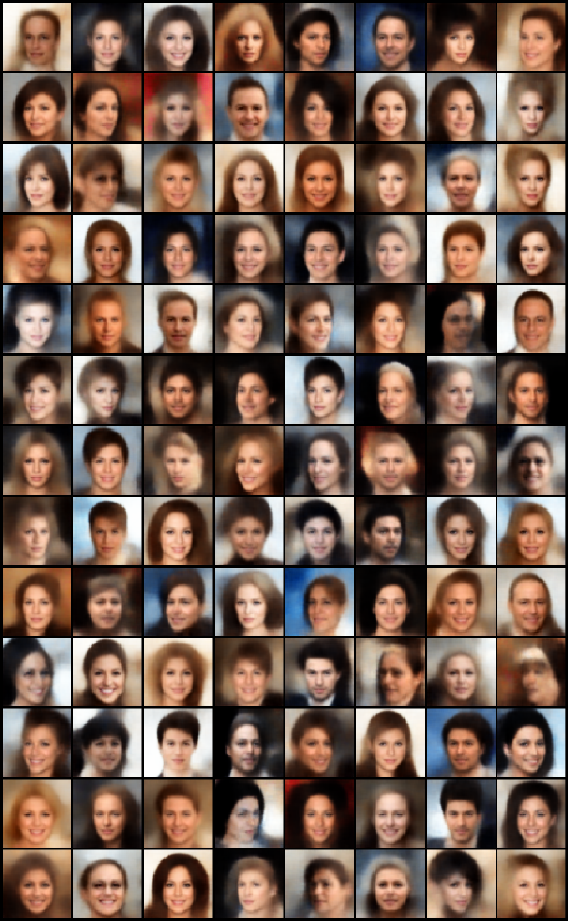}
\vspace{-\baselineskip}
\caption{Samples generated after 200 iterations of SWF to match the distribution of bottleneck features for the training dataset. Visualization is done with the pre-trained decoder.\label{fig:samples}}
\vspace{-\baselineskip}
\end{figure}

In the second set of experiments, we test the SWF algorithm on two real datasets. (i) The traditional MNIST dataset that contains 70K binary images corresponding to different digits. (ii) The popular CelebA dataset \cite{liu2015faceattributes}, that contains $202$K color-scale images. This dataset is advocated as more challenging than MNIST. Images were interpolated as $32\times 32$ for MNIST, and $64\times 64$ for CelebA.

In experiments reported in \supp, we found out that directly applying SWF to such high-dimensional data yielded noisy results, possibly due to the insufficient sampling of $\Sp^{d-1}$. To reduce the dimensionality, we trained a standard convolutional autoencoder (AE) on the training set of both datasets (see Figure~\ref{fig:using_ae} and \supp), and the target distribution $\nu$ considered becomes the distribution of the resulting bottleneck features,
with dimension $d$. Particles can be visualized with the pre-trained decoder.
Our goal is to show that SWF permits to directly sample from the distribution of bottleneck features, as an alternative to enforcing this distribution to match some prior, as in VAE. In the following, we set $\lambda=0$, $N_\theta=40000$, $d=32$ for MNIST and $d=64$ for CelebA.

Assessing the validity of IGM algorithms is generally done by visualizing the generated samples. Figure~\ref{fig:samples} shows some particles after $500$ iterations of SWF. We can observe they are considerably accurate. Interestingly, the generated samples gradually take the form of either digits or faces along the iterations, as seen on Figure~\ref{fig:evolution}. In this figure, we also display the closest sample from the original database to check we are not just reproducing training data.

\begin{figure}[]
\centering
\begin{tikzpicture}[font=\small]

	\begin{axis}[%
		/pgf/number format/.cd,
				use comma,
				1000 sep={},
	width=1\columnwidth,
	axis line style={draw=none},
	height=10cm,
	axis on top,
	trim axis left,
	scale only axis,
	ymajorticks=false,
	xtick={4,12,20,28,36,44,52,60,68,76},
	xticklabels={3,5,8,10,15,20,30,50,100,200},
	xtick align=center,
	ticklabel style = {font=\tiny},
	xmin=-10,
	xmax=90,
	ymin=0,
	ymax=16
	]

	\addplot [forget plot] graphics [xmin=-10,xmax=-2,ymin=8,ymax=16] {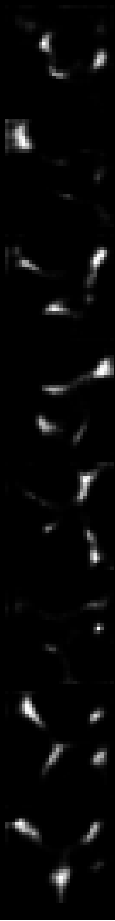};
	\addplot [forget plot] graphics [xmin=0.0,xmax=8,ymin=8,ymax=16] {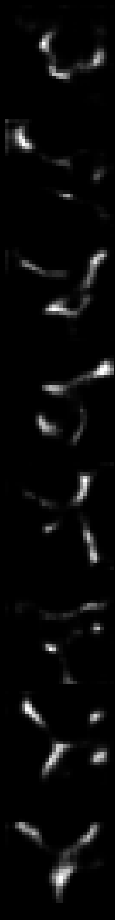};
	\addplot [forget plot] graphics [xmin=7.5,xmax=16,ymin=8,ymax=16] {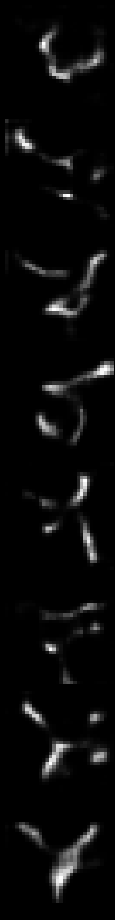};
	\addplot [forget plot] graphics [xmin=15.5,xmax=24,ymin=8,ymax=16] {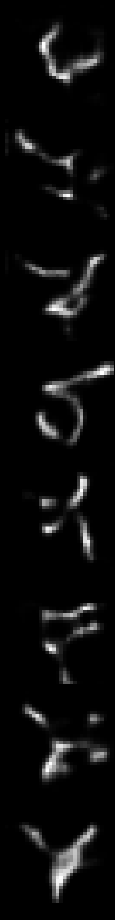};
	\addplot [forget plot] graphics [xmin=23.5,xmax=32,ymin=8,ymax=16] {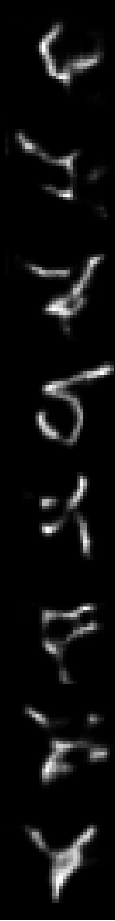};
	\addplot [forget plot] graphics [xmin=31.5,xmax=40,ymin=8,ymax=16] {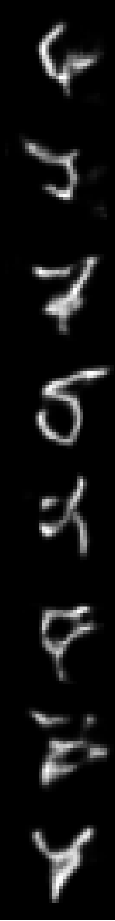};
	\addplot [forget plot] graphics [xmin=39.5,xmax=48,ymin=8,ymax=16] {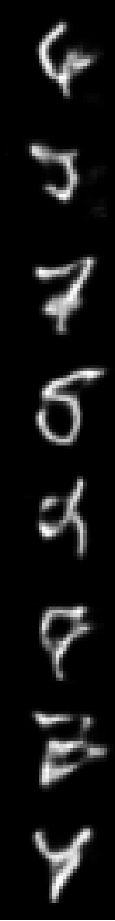};
	\addplot [forget plot] graphics [xmin=47.5,xmax=56,ymin=8,ymax=16] {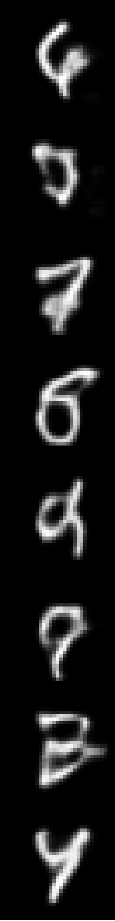};
	\addplot [forget plot] graphics [xmin=55.5,xmax=64,ymin=8,ymax=16] {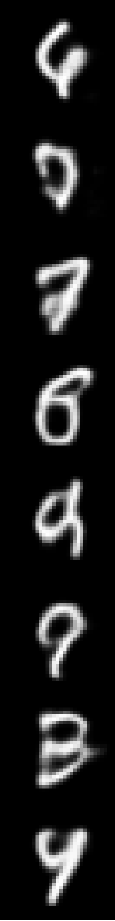};
	\addplot [forget plot] graphics [xmin=63.5,xmax=72,ymin=8,ymax=16] {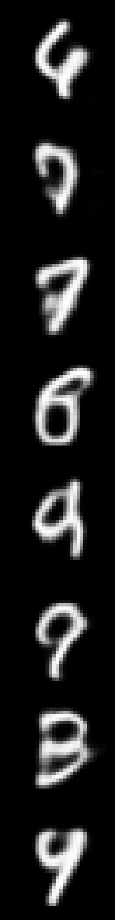};
	\addplot [forget plot] graphics [xmin=71.5,xmax=80,ymin=8,ymax=16] {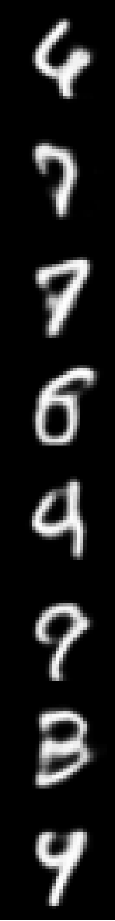};
	\addplot [forget plot] graphics [xmin=82.0,xmax=90,ymin=8,ymax=16] {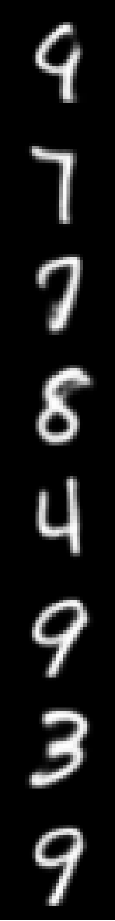};

	\addplot [forget plot] graphics [xmin=-10,xmax=-2,ymin=0,ymax=8] {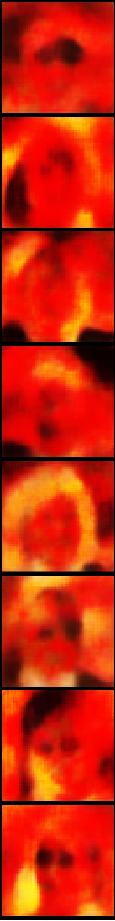};
	\addplot [forget plot] graphics [xmin=0.0,xmax=8,ymin=0,ymax=8] {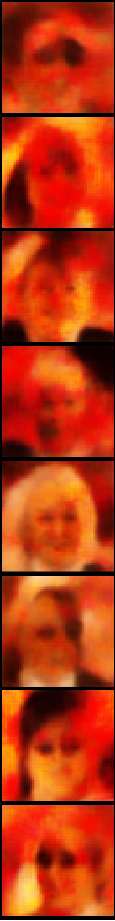};
	\addplot [forget plot] graphics [xmin=7.5,xmax=16,ymin=0,ymax=8] {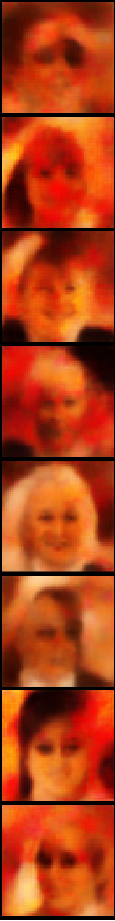};
	\addplot [forget plot] graphics [xmin=15.5,xmax=24,ymin=0,ymax=8] {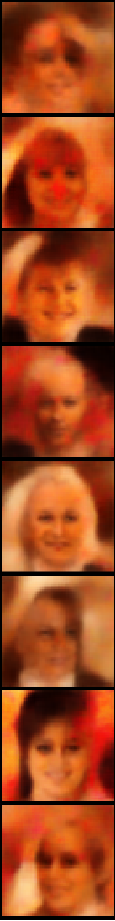};
	\addplot [forget plot] graphics [xmin=23.5,xmax=32,ymin=0,ymax=8] {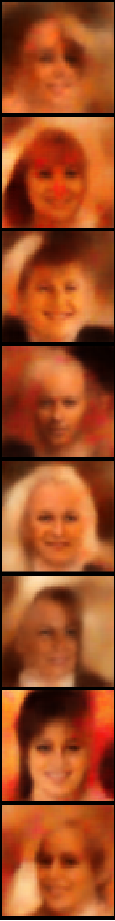};
	\addplot [forget plot] graphics [xmin=31.5,xmax=40,ymin=0,ymax=8] {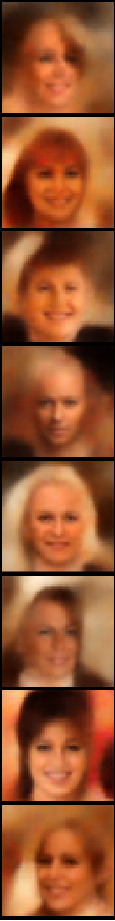};
	\addplot [forget plot] graphics [xmin=39.5,xmax=48,ymin=0,ymax=8] {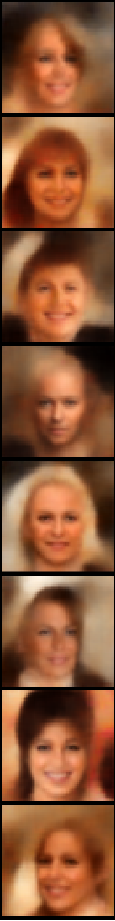};
	\addplot [forget plot] graphics [xmin=47.5,xmax=56,ymin=0,ymax=8] {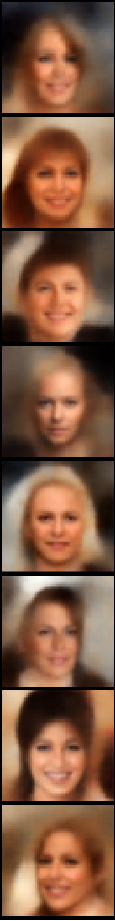};
	\addplot [forget plot] graphics [xmin=55.5,xmax=64,ymin=0,ymax=8] {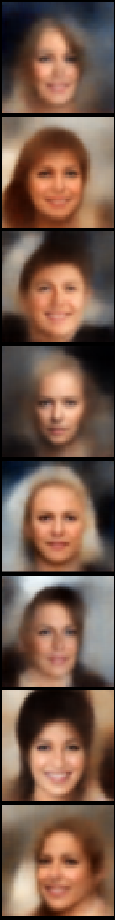};
	\addplot [forget plot] graphics [xmin=63.5,xmax=72,ymin=0,ymax=8] {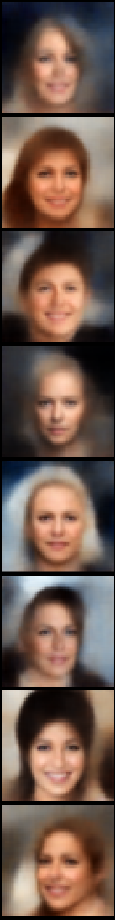};
	\addplot [forget plot] graphics [xmin=71.5,xmax=80,ymin=0,ymax=8] {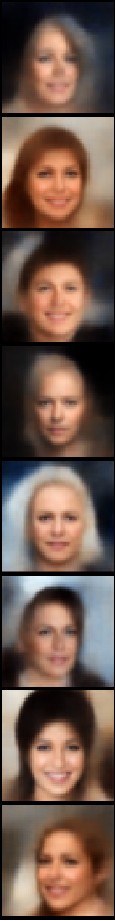};
	\addplot [forget plot] graphics [xmin=82.0,xmax=90,ymin=0,ymax=8] {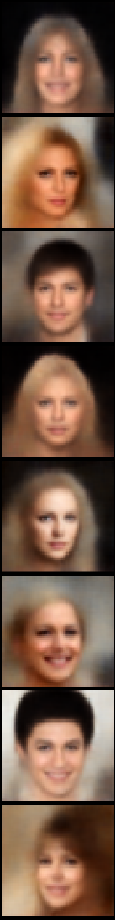};

	\end{axis}
\end{tikzpicture}
\vspace{-20pt}
\caption{Initial random particles (left), particles through iterations (middle, from 1 to 200 iterations) and closest sample from the training dataset (right), for both MNIST and CelebA.\label{fig:evolution}}
\end{figure}

\begin{figure}[]
\centering
\includegraphics[width=0.99\columnwidth]{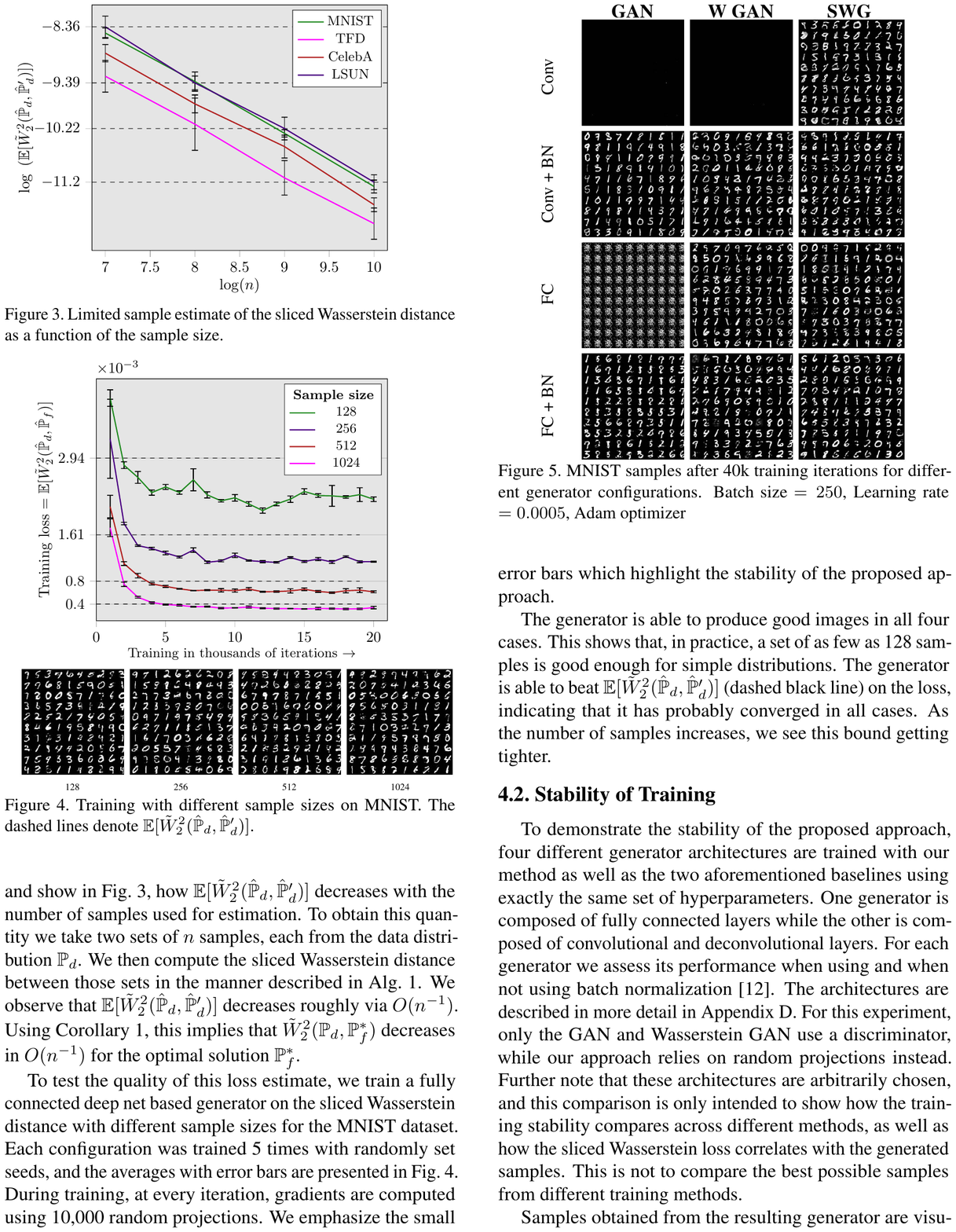}
\vspace{-\baselineskip}
\caption{Performance of GAN (left), W-GAN (middle), SWG (right) on MNIST. (The figure is directly taken from \cite{deshpande2018generative}.) }
\label{fig:mnistall}
\vspace{-10pt}
\end{figure}

For a visual comparison, we provide the results presented in \cite{deshpande2018generative} in Figure~\ref{fig:mnistall}. These results are obtained by running different IGM approaches on the MNIST dataset, namely GAN \cite{goodfellow2014generative}, Wasserstein GAN (W-GAN) \cite{arjovsky2017wasserstein} and the Sliced-Wasserstein Generator (SWG) \cite{deshpande2018generative}.
The visual comparison suggests that the samples generated by SWF are of slightly better quality than those, although research must still be undertaken to scale up to high dimensions without an AE.

\begin{figure}
\centering
\includegraphics[width=0.95\columnwidth]{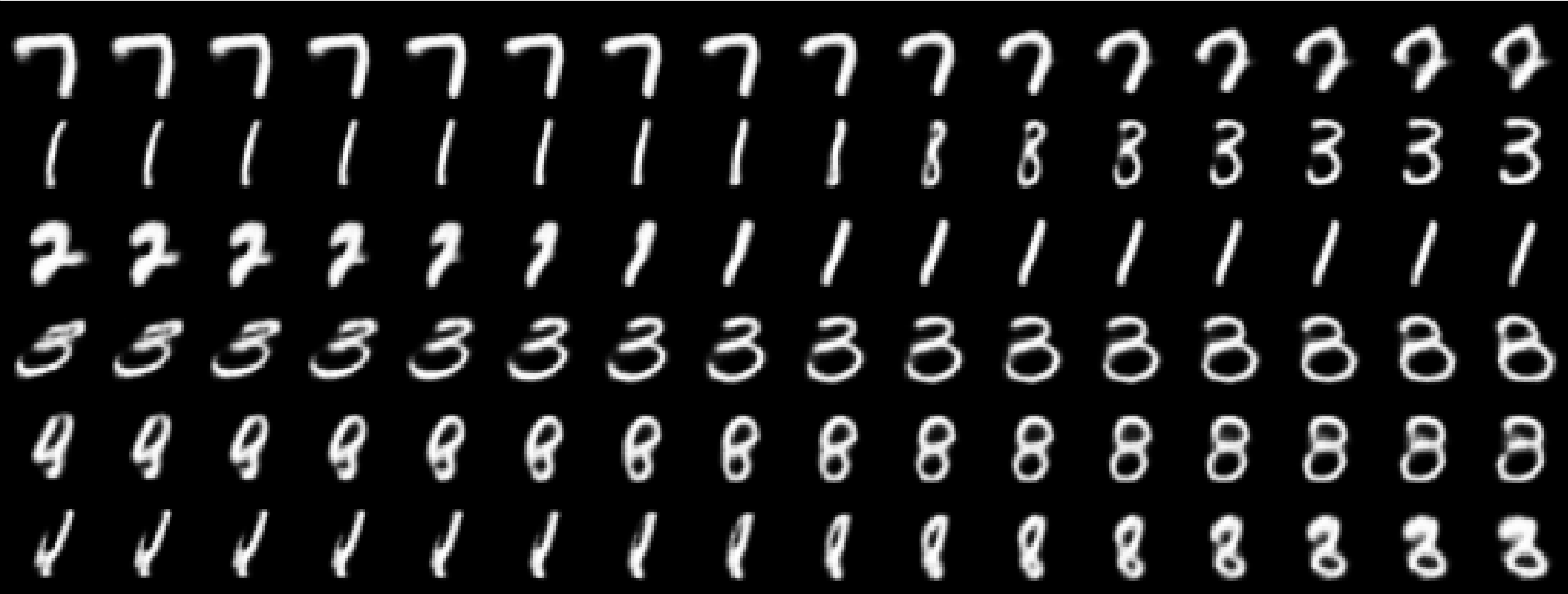}
\includegraphics[width=0.95\columnwidth]{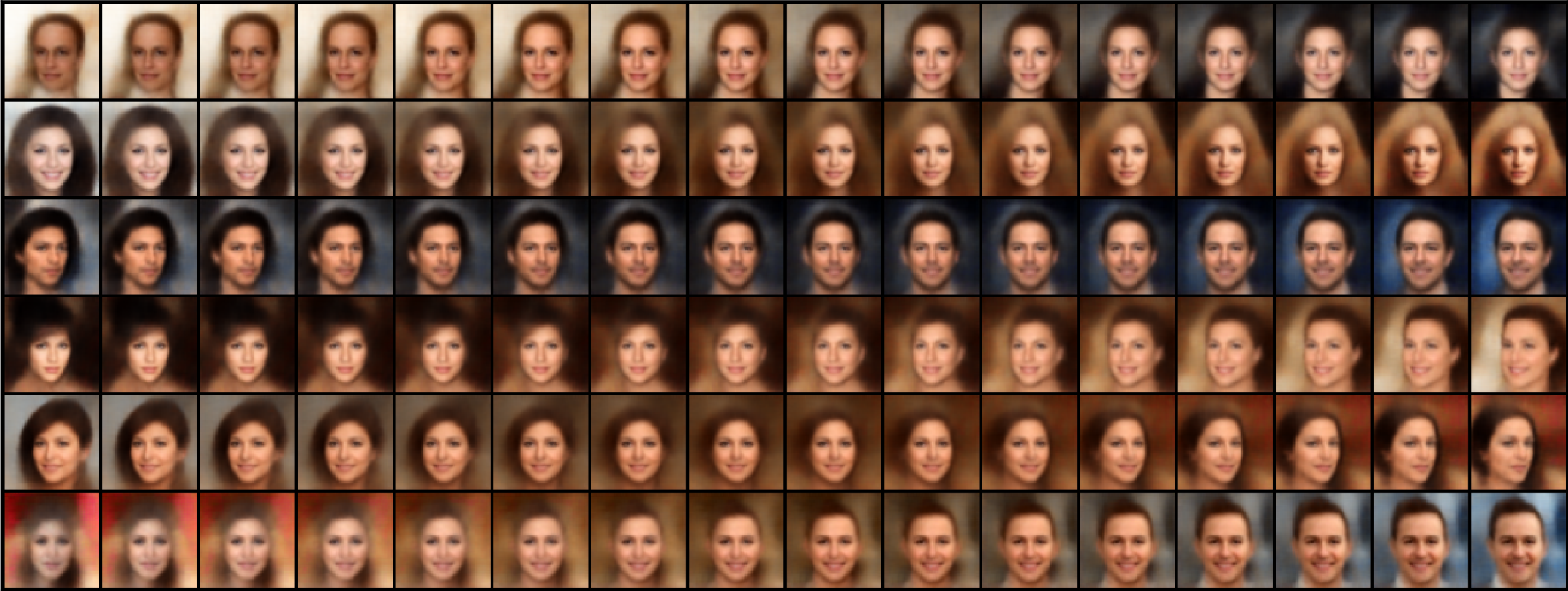}
\label{fig:interpolation}
\vspace{-\baselineskip}
\caption{Applying a pre-trained SWF on new samples located in-between the ones used for training. Visualization is done with the pre-trained decoder.}
\vspace{-10pt}
\end{figure}

We also provide the outcome of the pre-trained SWF with samples that are regularly spaced in between those used for training. The result is shown in Figure \ref{fig:interpolation}. This plot suggests that SWF is a way to interpolate non-parametrically in between latent spaces of regular AE.

\section{Conclusion and Future Directions}

In this study, we proposed SWF, an efficient, nonparametric IGM algorithm.
SWF is based on formulating IGM as a functional optimization problem in Wasserstein spaces, where the aim is to find a probability measure that is close to the data distribution as much as possible while maintaining the expressiveness at a certain level.
SWF lies in the intersection of OT, gradient flows, and SDEs, which allowed us to convert the IGM problem to an SDE simulation problem. We provided finite-time bounds for the infinite-particle regime and established explicit links between the algorithm parameters and the overall error.
We conducted several experiments, where we showed that the results support our theory: SWF is able to generate samples from non-trivial distributions with low computational requirements.

The SWF algorithm opens up interesting future directions: (i) extension to differentially private settings \cite{dwork2014algorithmic} by exploiting the fact that it only requires random projections of the data, (ii) showing the convergence scheme of the particle system \eqref{eqn:sde_particle} to the original SDE \eqref{eqn:sde}, (iii) providing bounds directly for the particle scheme \eqref{eqn:euler_particle}.

\section*{Acknowledgments}
This work is partly supported by the French National Research Agency (ANR) as a part of the FBIMATRIX
(ANR-16-CE23-0014) and KAMoulox (ANR-15-CE38-0003-01) projects. Szymon Majewski is partially supported by Polish National Science Center grant number 2016/23/B/ST1/00454.

\bibliography{references}

\begin{thebibliography}{50}
\providecommand{\natexlab}[1]{#1}
\providecommand{\url}[1]{\texttt{#1}}
\expandafter\ifx\csname urlstyle\endcsname\relax
  \providecommand{\doi}[1]{doi: #1}\else
  \providecommand{\doi}{doi: \begingroup \urlstyle{rm}\Url}\fi

\bibitem[Ambrosio et~al.(2008)Ambrosio, Gigli, and
  Savar{\'e}]{ambrosio2008gradient}
Ambrosio, L., Gigli, N., and Savar{\'e}, G.
\newblock \emph{Gradient flows: in metric spaces and in the space of
  probability measures}.
\newblock Springer Science \& Business Media, 2008.

\bibitem[Arjovsky et~al.(2017)Arjovsky, Chintala, and
  Bottou]{arjovsky2017wasserstein}
Arjovsky, M., Chintala, S., and Bottou, L.
\newblock Wasserstein generative adversarial networks.
\newblock In \emph{International Conference on Machine Learning}, pp.\
  214--223, 2017.

\bibitem[Benamou \& Brenier(2000)Benamou and Brenier]{benamou2000computational}
Benamou, J.-D. and Brenier, Y.
\newblock A computational fluid mechanics solution to the {M}onge-{K}antorovich
  mass transfer problem.
\newblock \emph{Numerische Mathematik}, 84\penalty0 (3):\penalty0 375--393,
  2000.

\bibitem[Bogachev et~al.(2015)Bogachev, Krylov, R{\"o}ckner, and
  Shaposhnikov]{bogachev2015fokker}
Bogachev, V.~I., Krylov, N.~V., R{\"o}ckner, M., and Shaposhnikov, S.~V.
\newblock \emph{{F}okker-{P}lanck-{K}olmogorov Equations}, volume 207.
\newblock American Mathematical Soc., 2015.

\bibitem[Bonneel et~al.(2015)Bonneel, Rabin, Peyr{\'e}, and
  Pfister]{bonneel2015sliced}
Bonneel, N., Rabin, J., Peyr{\'e}, G., and Pfister, H.
\newblock Sliced and {R}adon {W}asserstein barycenters of measures.
\newblock \emph{Journal of Mathematical Imaging and Vision}, 51\penalty0
  (1):\penalty0 22--45, 2015.

\bibitem[Bonnotte(2013)]{bonnotte2013unidimensional}
Bonnotte, N.
\newblock \emph{Unidimensional and evolution methods for optimal
  transportation}.
\newblock PhD thesis, Paris 11, 2013.

\bibitem[Bossy \& Talay(1997)Bossy and Talay]{bossy1997stochastic}
Bossy, M. and Talay, D.
\newblock A stochastic particle method for the {M}c{K}ean-{V}lasov and the
  {B}urgers equation.
\newblock \emph{Mathematics of Computation of the American Mathematical
  Society}, 66\penalty0 (217):\penalty0 157--192, 1997.

\bibitem[Bousquet et~al.(2017)Bousquet, Gelly, Tolstikhin, Simon-Gabriel, and
  Schoelkopf]{bousquet2017optimal}
Bousquet, O., Gelly, S., Tolstikhin, I., Simon-Gabriel, C.-J., and Schoelkopf,
  B.
\newblock From optimal transport to generative modeling: the vegan cookbook.
\newblock \emph{arXiv preprint arXiv:1705.07642}, 2017.

\bibitem[Cattiaux et~al.(2008)Cattiaux, Guillin, and Malrieu]{cgm-08}
Cattiaux, P., Guillin, A., and Malrieu, F.
\newblock Probabilistic approach for granular media equations in the non
  uniformly convex case.
\newblock \emph{Prob. Theor. Rel. Fields}, 140\penalty0 (1-2):\penalty0 19--40,
  2008.

\bibitem[\c{S}im\c{s}ekli et~al.(2018)\c{S}im\c{s}ekli, Yildiz, Nguyen, Cemgil,
  and Richard]{pmlr-v80-simsekli18a}
\c{S}im\c{s}ekli, U., Yildiz, C., Nguyen, T.~H., Cemgil, A.~T., and Richard, G.
\newblock Asynchronous stochastic quasi-{N}ewton {MCMC} for non-convex
  optimization.
\newblock In \emph{ICML}, pp.\  4674--4683, 2018.

\bibitem[Cuturi(2013)]{cuturi2013sinkhorn}
Cuturi, M.
\newblock Sinkhorn distances: Lightspeed computation of optimal transport.
\newblock In \emph{Advances in neural information processing systems}, pp.\
  2292--2300, 2013.

\bibitem[Dalalyan(2017)]{dalalyan2017theoretical}
Dalalyan, A.~S.
\newblock Theoretical guarantees for approximate sampling from smooth and
  log-concave densities.
\newblock \emph{Journal of the Royal Statistical Society: Series B (Statistical
  Methodology)}, 79\penalty0 (3):\penalty0 651--676, 2017.

\bibitem[Deshpande et~al.(2018)Deshpande, Zhang, and
  Schwing]{deshpande2018generative}
Deshpande, I., Zhang, Z., and Schwing, A.
\newblock Generative modeling using the sliced wasserstein distance.
\newblock \emph{arXiv preprint arXiv:1803.11188}, 2018.

\bibitem[Diggle \& Gratton(1984)Diggle and Gratton]{diggle1984monte}
Diggle, P.~J. and Gratton, R.~J.
\newblock Monte carlo methods of inference for implicit statistical models.
\newblock \emph{Journal of the Royal Statistical Society. Series B
  (Methodological)}, pp.\  193--227, 1984.

\bibitem[Donoho \& Tanner(2009)Donoho and Tanner]{donoho2009observed}
Donoho, D. and Tanner, J.
\newblock Observed universality of phase transitions in high-dimensional
  geometry, with implications for modern data analysis and signal processing.
\newblock \emph{Philosophical Transactions of the Royal Society of London A:
  Mathematical, Physical and Engineering Sciences}, 367\penalty0
  (1906):\penalty0 4273--4293, 2009.

\bibitem[Durmus et~al.(2016)Durmus, \c{S}im\c{s}ekli, Moulines, Badeau, and
  Richard]{durmus2016stochastic}
Durmus, A., \c{S}im\c{s}ekli, U., Moulines, E., Badeau, R., and Richard, G.
\newblock Stochastic gradient {Richardson-Romberg Markov Chain Monte Carlo}.
\newblock In \emph{NIPS}, 2016.

\bibitem[Dwork \& Roth(2014)Dwork and Roth]{dwork2014algorithmic}
Dwork, C. and Roth, A.
\newblock The algorithmic foundations of differential privacy.
\newblock \emph{Foundations and Trends{\textregistered} in Theoretical Computer
  Science}, 9\penalty0 (3--4):\penalty0 211--407, 2014.

\bibitem[Genevay et~al.(2016)Genevay, Cuturi, Peyr{\'e}, and
  Bach]{genevay2016stochastic}
Genevay, A., Cuturi, M., Peyr{\'e}, G., and Bach, F.
\newblock Stochastic optimization for large-scale optimal transport.
\newblock In \emph{Advances in Neural Information Processing Systems}, pp.\
  3440--3448, 2016.

\bibitem[Genevay et~al.(2017)Genevay, Peyr{\'e}, and Cuturi]{genevay2017gan}
Genevay, A., Peyr{\'e}, G., and Cuturi, M.
\newblock Gan and vae from an optimal transport point of view.
\newblock \emph{arXiv preprint arXiv:1706.01807}, 2017.

\bibitem[Genevay et~al.(2018)Genevay, Peyr{\'e}, and
  Cuturi]{genevay2018learning}
Genevay, A., Peyr{\'e}, G., and Cuturi, M.
\newblock Learning generative models with {S}inkhorn divergences.
\newblock In \emph{International Conference on Artificial Intelligence and
  Statistics}, pp.\  1608--1617, 2018.

\bibitem[Goodfellow et~al.(2014)Goodfellow, Pouget-Abadie, Mirza, Xu,
  Warde-Farley, Ozair, Courville, and Bengio]{goodfellow2014generative}
Goodfellow, I., Pouget-Abadie, J., Mirza, M., Xu, B., Warde-Farley, D., Ozair,
  S., Courville, A., and Bengio, Y.
\newblock Generative adversarial nets.
\newblock In \emph{Advances in neural information processing systems}, pp.\
  2672--2680, 2014.

\bibitem[Gribonval et~al.(2017)Gribonval, Blanchard, Keriven, and
  Traonmilin]{gribonval2017compressive}
Gribonval, R., Blanchard, G., Keriven, N., and Traonmilin, Y.
\newblock Compressive statistical learning with random feature moments.
\newblock \emph{arXiv preprint arXiv:1706.07180}, 2017.

\bibitem[Gulrajani et~al.(2017)Gulrajani, Ahmed, Arjovsky, Dumoulin, and
  Courville]{gulrajani2017improved}
Gulrajani, I., Ahmed, F., Arjovsky, M., Dumoulin, V., and Courville, A.~C.
\newblock Improved training of {W}asserstein {GAN}s.
\newblock In \emph{Advances in Neural Information Processing Systems}, pp.\
  5769--5779, 2017.

\bibitem[Guo et~al.(2017)Guo, Hong, Lin, and Yang]{guo2017relaxed}
Guo, X., Hong, J., Lin, T., and Yang, N.
\newblock Relaxed {W}asserstein with applications to {GAN}s.
\newblock \emph{arXiv preprint arXiv:1705.07164}, 2017.

\bibitem[Jordan et~al.(1998)Jordan, Kinderlehrer, and
  Otto]{jordan1998variational}
Jordan, R., Kinderlehrer, D., and Otto, F.
\newblock The variational formulation of the {F}okker--{P}lanck equation.
\newblock \emph{SIAM journal on mathematical analysis}, 29\penalty0
  (1):\penalty0 1--17, 1998.

\bibitem[Kingma \& Ba(2014)Kingma and Ba]{kingma2014adam}
Kingma, D. and Ba, J.
\newblock Adam: A method for stochastic optimization.
\newblock \emph{arXiv preprint arXiv:1412.6980}, 2014.

\bibitem[Kingma \& Welling(2013)Kingma and Welling]{kingma2013VAE}
Kingma, D.~P. and Welling, M.
\newblock Auto-encoding variational bayes.
\newblock \emph{arXiv preprint arXiv:1312.6114}, 2013.

\bibitem[Kolouri et~al.(2018)Kolouri, Martin, and Rohde]{kolouri2018sliced}
Kolouri, S., Martin, C.~E., and Rohde, G.~K.
\newblock Sliced-wasserstein autoencoder: An embarrassingly simple generative
  model.
\newblock \emph{arXiv preprint arXiv:1804.01947}, 2018.

\bibitem[Lavenant et~al.(2018)Lavenant, Claici, Chien, and
  Solomon]{lavenant2018dynamical}
Lavenant, H., Claici, S., Chien, E., and Solomon, J.
\newblock Dynamical optimal transport on discrete surfaces.
\newblock In \emph{SIGGRAPH Asia 2018 Technical Papers}, pp.\  250. ACM, 2018.

\bibitem[Lei et~al.(2017)Lei, Su, Cui, Yau, and Gu]{lei2017geometric}
Lei, N., Su, K., Cui, L., Yau, S.-T., and Gu, D.~X.
\newblock A geometric view of optimal transportation and generative model.
\newblock \emph{arXiv preprint arXiv:1710.05488}, 2017.

\bibitem[Liu et~al.(2017)Liu, Bousquet, and Chaudhuri]{liu2017approximation}
Liu, S., Bousquet, O., and Chaudhuri, K.
\newblock Approximation and convergence properties of generative adversarial
  learning.
\newblock In \emph{Advances in Neural Information Processing Systems}, pp.\
  5551--5559, 2017.

\bibitem[Liu et~al.(2015)Liu, Luo, Wang, and Tang]{liu2015faceattributes}
Liu, Z., Luo, P., Wang, X., and Tang, X.
\newblock Deep learning face attributes in the wild.
\newblock In \emph{Proceedings of International Conference on Computer Vision
  (ICCV)}, 2015.

\bibitem[Ma et~al.(2015)Ma, Chen, and Fox]{ma2015complete}
Ma, Y.~A., Chen, T., and Fox, E.
\newblock A complete recipe for stochastic gradient {MCMC}.
\newblock In \emph{Advances in Neural Information Processing Systems}, pp.\
  2899--2907, 2015.

\bibitem[Malrieu(2003)]{malrieu03}
Malrieu, F.
\newblock Convergence to equilibrium for granular media equations and their
  {E}uler schemes.
\newblock \emph{Ann. Appl. Probab.}, 13\penalty0 (2):\penalty0 540--560, 2003.

\bibitem[Mishura \& Veretennikov(2016)Mishura and
  Veretennikov]{mishura2016existence}
Mishura, Y.~S. and Veretennikov, A.~Y.
\newblock Existence and uniqueness theorems for solutions of
  {M}c{K}ean--{V}lasov stochastic equations.
\newblock \emph{arXiv preprint arXiv:1603.02212}, 2016.

\bibitem[Mohamed \& Lakshminarayanan(2016)Mohamed and
  Lakshminarayanan]{mohamed2016learning}
Mohamed, S. and Lakshminarayanan, B.
\newblock Learning in implicit generative models.
\newblock \emph{arXiv preprint arXiv:1610.03483}, 2016.

\bibitem[Nguyen et~al.(2019)Nguyen, \c{S}im\c{s}ekli, , and
  Richard]{nguyen2019non}
Nguyen, T.~H., \c{S}im\c{s}ekli, U., , and Richard, G.
\newblock Non-asymptotic analysis of fractional {L}angevin {M}onte {C}arlo for
  non-convex optimization.
\newblock In \emph{ICML}, 2019.

\bibitem[Rabin et~al.(2012)Rabin, Peyr{\'e}, Delon, and
  Bernot]{rabin:et:al:2011}
Rabin, J., Peyr{\'e}, G., Delon, J., and Bernot, M.
\newblock Wasserstein barycenter and its application to texture mixing.
\newblock In Bruckstein, A.~M., ter Haar~Romeny, B.~M., Bronstein, A.~M., and
  Bronstein, M.~M. (eds.), \emph{Scale Space and Variational Methods in
  Computer Vision}, pp.\  435--446, Berlin, Heidelberg, 2012. Springer Berlin
  Heidelberg.
\newblock ISBN 978-3-642-24785-9.

\bibitem[Raginsky et~al.(2017)Raginsky, Rakhlin, and Telgarsky]{raginsky17a}
Raginsky, M., Rakhlin, A., and Telgarsky, M.
\newblock Non-convex learning via stochastic gradient {L}angevin dynamics: a
  nonasymptotic analysis.
\newblock In \emph{Proceedings of the 2017 Conference on Learning Theory},
  volume~65, pp.\  1674--1703, 2017.

\bibitem[Samangouei et~al.(2018)Samangouei, Kabkab, and
  Chellappa]{samangouei2018defensegan}
Samangouei, P., Kabkab, M., and Chellappa, R.
\newblock Defense-{GAN}: Protecting classifiers against adversarial attacks
  using generative models.
\newblock In \emph{International Conference on Learning Representations}, 2018.

\bibitem[Santambrogio(2014)]{santambrogio2010introduction}
Santambrogio, F.
\newblock Introduction to optimal transport theory.
\newblock In Pajot, H., Ollivier, Y., and Villani, C. (eds.), \emph{Optimal
  Transportation: Theory and Applications}, chapter~1. Cambridge University
  Press, 2014.

\bibitem[Santambrogio(2017)]{santambrogio2017euclidean}
Santambrogio, F.
\newblock $\{$Euclidean, metric, and Wasserstein$\}$ gradient flows: an
  overview.
\newblock \emph{Bulletin of Mathematical Sciences}, 7\penalty0 (1):\penalty0
  87--154, 2017.

\bibitem[{\c{S}}im{\c{s}}ekli(2017)]{csimcsekli2017fractional}
{\c{S}}im{\c{s}}ekli, U.
\newblock Fractional {Langevin Monte Carlo: Exploring L\'{e}vy Driven
  Stochastic Differential Equations for Markov Chain Monte Carlo}.
\newblock In \emph{International Conference on Machine Learning}, 2017.

\bibitem[Tolstikhin et~al.(2017)Tolstikhin, Bousquet, Gelly, and
  Schoelkopf]{tolstikhin2017wasserstein}
Tolstikhin, I., Bousquet, O., Gelly, S., and Schoelkopf, B.
\newblock Wasserstein auto-encoders.
\newblock \emph{arXiv preprint arXiv:1711.01558}, 2017.

\bibitem[Van~der Vaart(1998)]{van1998asymptotic}
Van~der Vaart, A.~W.
\newblock \emph{Asymptotic statistics}, volume~3.
\newblock Cambridge university press, 1998.

\bibitem[Veretennikov(2006)]{veretennikov2006ergodic}
Veretennikov, A.~Y.
\newblock On ergodic measures for {M}c{K}ean-{V}lasov stochastic equations.
\newblock In \emph{Monte Carlo and Quasi-Monte Carlo Methods 2004}, pp.\
  471--486. Springer, 2006.

\bibitem[Villani(2008)]{villani2008optimal}
Villani, C.
\newblock \emph{Optimal transport: old and new}, volume 338.
\newblock Springer Science \& Business Media, 2008.

\bibitem[Welling \& Teh(2011)Welling and Teh]{WelTeh2011a}
Welling, M. and Teh, Y.~W.
\newblock Bayesian learning via stochastic gradient {L}angevin dynamics.
\newblock In \emph{International Conference on Machine Learning}, pp.\
  681--688, 2011.

\bibitem[Wu et~al.(2018)Wu, Huang, Li, and Gool]{autotranspoter}
Wu, J., Huang, Z., Li, W., and Gool, L.~V.
\newblock Sliced wasserstein generative models.
\newblock \emph{arXiv preprint arXiv:1706.02631}, abs/1706.02631, 2018.

\bibitem[Zhang et~al.(2018)Zhang, Zhang, and Chen]{zhang2018stochastic}
Zhang, J., Zhang, R., and Chen, C.
\newblock Stochastic particle-optimization sampling and the non-asymptotic
  convergence theory.
\newblock \emph{arXiv preprint arXiv:1809.01293}, 2018.

\end{thebibliography}
\bibliographystyle{icml2019}

\vfill
\pagebreak

\newpage

\onecolumn

\icmltitle{Sliced-Wasserstein Flows: Nonparametric Generative Modeling via Optimal Transport and Diffusions \\ {\large SUPPLEMENTARY DOCUMENT}}

\icmlsetsymbol{equal}{*}

\begin{icmlauthorlist}
\icmlauthor{Antoine Liutkus}{inria}
\icmlauthor{Umut \c{S}im\c{s}ekli}{telecool}
\icmlauthor{Szymon Majewski}{impan}
\icmlauthor{Alain Durmus}{ens}
\icmlauthor{Fabian-Robert St\"oter}{inria}
\end{icmlauthorlist}

\icmlaffiliation{inria}{Inria and LIRMM, Univ. of Montpellier, France}
\icmlaffiliation{telecool}{LTCI, T\'{e}l\'{e}com Paristech, Universit\'{e} Paris-Saclay, Paris, France }
\icmlaffiliation{impan}{Institute of Mathematics, Polish Academy of Sciences, Warsaw, Poland}
\icmlaffiliation{ens}{CNRS, ENS Paris-Saclay,Université Paris-Saclay, Cachan, France}

\icmlcorrespondingauthor{Antoine Liutkus}{antoine.liutkus@inria.fr}
\icmlcorrespondingauthor{Umut Simsekli}{umut.simsekli@telecom-paristech.fr}

\icmlkeywords{Machine Learning, ICML}

\setcounter{section}{0}
\setcounter{equation}{0}
\setcounter{figure}{0}
\setcounter{table}{0}
\setcounter{page}{1}
\renewcommand{\theequation}{S\arabic{equation}}
\renewcommand{\thefigure}{S\arabic{figure}}
\renewcommand{\thelemma}{S\arabic{lemma}}
\renewcommand{\theassumption}{S\arabic{assumption}}
\renewcommand{\thethm}{S\arabic{thm}}
\renewcommand{\theprop}{S\arabic{prop}}

\section{Proof of Theorem~\ref{thm:continuity}}

We first need to generalize \cite{bonnotte2013unidimensional}[Lemma 5.4.3] to distribution $\rho \in \mrl^{\infty}(\cB(0,r))$, $r >0$.
\begin{thm} \label{thm:implicit_step}
 Let $\nu$ be a probability measure on $\cB(0,1)$ with a strictly positive smooth density. Fix a time step $h > 0$, regularization constant $\lambda > 0$ and a radius $r > \sqrt{d}$. For any probability measure $\mu_0$ on $\cB(0, r)$ with density $\rho_0 \in \mrl^{\infty}(\cB(0, r))$, there is a probability measure $\mu$ on $\cB(0,r)$ minimizing:
\[
\mathcal{G}(\mu) = \mcf^{\nu}_{\lambda} (\mu) + \frac{1}{2h} \W^2(\mu, \mu_0) ,
\]
where $\mcf^{\nu}_\lambda$ is given by \eqref{eqn:sw_optim}.
Moreover the optimal $\mu$ has a density $\rho$ on $\cB(0,r)$ and:
\begin{equation} \label{ineq:inf_norm_bound}
||\rho||_{\mrl^{\infty}} \leq (1 + h/\sqrt{d})^d ||\rho_0||_{\mrl^{\infty}}.
\end{equation} 
\end{thm}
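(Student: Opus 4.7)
The plan is to follow the classical Jordan--Kinderlehrer--Otto (JKO) direct-method template, adapted to the combined $\SW^2$/entropy functional; the real work is establishing the pointwise bound \eqref{ineq:inf_norm_bound} on the density of the minimizer.

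\textbf{Existence.} I would first invoke the direct method of the calculus of variations on $\PS(\cB(0,r))$. Since $\cB(0,r)$ is compact, this space is narrowly compact and the narrow topology coincides with the one induced by $\W$. Consequently $\mu \mapsto \W^2(\mu,\mu_0)$ is narrowly continuous, and so is $\mu \mapsto \SW^2(\mu,\nu) = \int_{\Sp^{d-1}} \W^2(\theta^*_\# \mu, \theta^*_\# \nu)\,d\theta$ by dominated convergence (the integrand is uniformly bounded in $\theta$ by a constant depending on $r$). The negative entropy $\He$ is narrowly lower semicontinuous by a classical result, hence $\mathcal{G}$ is lsc on a narrowly compact set. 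The assumption $\rho_0 \in \mrl^\infty(\cB(0,r))$ gives $\He(\mu_0)<+\infty$ and thus $\mathcal{G}(\mu_0)<+\infty$, so the infimum is finite and the direct method yields a minimizer $\mu$. Since $\He(\mu)<+\infty$, $\mu$ is absolutely continuous with some density $\rho$.

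\textbf{Pointwise bound.} This is the core of the proof, for which I would adapt the strategy of \cite{bonnotte2013unidimensional} (Lemma 5.4.3) to include the entropy term. By Theorem~\ref{thm:unqmap} there is a unique Brenier map $T=\nabla \phi$ (with $\phi$ convex) pushing $\mu_0$ onto $\mu$, and the Monge--Amp\`ere equation
\begin{equation}
\rho_0(x) = \rho(T(x))\,\det(\nabla T(x)) \qquad \mu_0\text{-a.e.}
\end{equation}
holds. To obtain \eqref{ineq:inf_norm_bound} it suffices to prove the Jacobian lower bound $\det(\nabla T(x))\ge(1+h/\sqrt d)^{-d}$. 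I would derive this by computing the first variation of $\mathcal{G}$ along the displacement interpolations $\mu_s=((1-s)\text{Id}+sT)_\#\mu_0$ and using $\tfrac{d}{ds}\mathcal{G}(\mu_s)|_{s=1}=0$; the JKO Euler--Lagrange relation
\begin{equation}
\frac{T(x)-x}{h}=-\bigl(v(T(x),\mu)+\lambda\,\nabla\log\rho(T(x))\bigr) \qquad \mu_0\text{-a.e.},
\end{equation}
with $v$ the SW drift of \eqref{eqn:gradflow_reg_drift}, follows. Because $\mu$ is supported in $\cB(0,r)$ and $\nu$ in $\cB(0,1)$ with $r>\sqrt d$, each one-dimensional Kantorovich potential $\psi_\theta$ is convex and Lipschitz, and a spherical averaging argument (using $\int_{\Sp^{d-1}}\theta\theta^\top d\theta = I/d$) controls $\nabla v$ by $C/\sqrt d$ in operator norm. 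Combined with the convexity of $\phi$ (so that the eigenvalues of $\nabla T$ are nonnegative) and the AM--GM inequality $\det(\nabla T)\ge \bigl(\text{tr}(\nabla T)/d\bigr)^d$, this yields the required lower bound on $\det(\nabla T)$, hence \eqref{ineq:inf_norm_bound}.

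\textbf{Main obstacle.} The hard part will be rigorously extracting the pointwise Euler--Lagrange equation and the bound on $\nabla v$, since the one-dimensional Kantorovich potentials $\psi_\theta$ are only Lipschitz a priori, not twice differentiable. The natural remedy is to regularise the projected target measures (using the hypothesis that $\nu$ has a strictly positive smooth density, which propagates to the projections), establish the inequality at the regularised level where $\psi_\theta$ is smooth, and pass to the limit. The condition $r>\sqrt d$ is what guarantees that $\cB(0,1)$ lies strictly inside $\cB(0,r)$, so that boundary effects do not spoil the Caffarelli-type interior regularity for $\phi$ that is needed to interpret the Monge--Amp\`ere equation pointwise.
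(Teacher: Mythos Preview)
Your existence argument via the direct method is correct and essentially identical to the paper's. The divergence is entirely in how the $L^\infty$ bound \eqref{ineq:inf_norm_bound} is obtained, and here your proposal has both a wrong step and a structural gap.

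First, the AM--GM inequality goes the other way: for a positive semidefinite matrix one has $\det(\nabla T)\le\bigl(\operatorname{tr}(\nabla T)/d\bigr)^d$, not $\ge$, so that step cannot manufacture a \emph{lower} bound on the Jacobian. More seriously, even with the sign fixed the plan stalls. The Euler--Lagrange relation you wrote contains $\lambda\,\nabla\log\rho$, so passing to $\nabla T$ (or to the inverse map) drags in $\nabla^2\log\rho$, for which you have no a priori control. And the claimed two-sided operator bound on $\nabla v$ of size $C/\sqrt d$ is not available: from $\psi_\theta''\le 1$ together with $\int_{\Sp^{d-1}}\theta\theta^\top\,d\theta=I/d$ one obtains only the one-sided estimate $\nabla v\succeq -I/d$; the other inequality would require an upper bound on the one-dimensional transport derivatives, hence on the density of $\theta^*_\#\mu$ --- which is exactly the quantity you are trying to bound. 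The argument is circular at this point.

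The paper sidesteps all of this. It does \emph{not} attempt a direct Jacobian estimate for general $\rho_0\in L^\infty$. Instead it (i) invokes \cite{bonnotte2013unidimensional}[Lemma~5.4.3], which already delivers \eqref{ineq:inf_norm_bound} when $\rho_0$ is smooth and strictly positive on $\cB(0,r)$; and (ii) for general $\rho_0\in L^\infty$, mollifies $\rho_0$ into smooth positive densities $\rho_t$, applies Bonnotte's bound to the corresponding minimizers $\hat\mu_t$, extracts a subsequential limit $\hat\mu$ using weak compactness of $\PS(\cB(0,r))$ and weak-$*$ compactness of bounded sets in $L^\infty$, and finally checks --- via lower semicontinuity of $\mathcal G$ and the convergence $\W^2(\mu,\mu_t)\to\W^2(\mu,\mu_0)$ --- that $\hat\mu$ minimizes the original functional with density obeying \eqref{ineq:inf_norm_bound}. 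Your ``main obstacle'' paragraph correctly senses that some regularisation is needed, but you regularise the wrong object: the paper smooths the \emph{initial datum} $\rho_0$, not the Kantorovich potentials $\psi_\theta$, and thereby reduces the hard estimate to a case already handled in the literature.
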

\begin{proof}
The set of measures supported on $\cB(0,r)$ is compact in the topology given by $\W$ metric. Furthermore by \cite{ambrosio2008gradient}[Lemma 9.4.3] $\He$ is lower semicontinuous on $(\PS(\cB(0,r)),\W)$. Since by \cite{bonnotte2013unidimensional}[Proposition 5.1.2, Proposition 5.1.3], $\SW$ is a distance  on $\PS(\cB(0,r))$, dominated by $d^{-1/2} \W$, we have:
\[
|\SW(\pi_0, \nu) - \SW(\pi_1, \nu)| \leq \SW(\pi_0, \pi_1) \leq \frac{1}{\sqrt{d}}\W(\pi_0, \pi_1).
\]
The above means that $\SW(\cdot, \nu)$ is continuous with respect to topology given by $\W$, which implies that $\SW^2(\cdot, \nu)$ is continuous in this topology as well. Therefore $\mcg : \PS(\cB(0,r)) \to \ocint{-\infty,\plusinfty}$ is a lower semicontinuous function on the compact set $(\PS(\cB(0,r)),\W)$. Hence there exists a minimum  $\mu$ of $\mcg$ on $\mathcal{P}(\cB(0,r))$. Furthermore, since $\mch(\pi) = +\infty$  for measures $\pi$ that do not admit a density with respect to Lebesgue measure, the measure $\mu$ must admit a density $\rho$.

If $\rho_0$ is smooth and positive on $\cB(0,r)$, the inequality \ref{ineq:inf_norm_bound} is true by \cite{bonnotte2013unidimensional}[Lemma 5.4.3.] When $\rho_0$ is just in $\mrl^{\infty}(\cB(0,r))$, we proceed by smoothing. 
For $t \in (0,1]$, let $\rho_t$ be a function obtained by convolution of $\rho_0$ with a Gaussian kernel $(t,x,y) \mapsto (2\pi)^{d/2} \exp(\norm[2]{x-y}/2)$, restricting the result to $\cB(0,r)$ and normalizing to obtain a probability density. Then $(\rho_t)_{t}$ are smooth positive densities, and it is easy to see that $\lim_{t \rightarrow 0} ||\rho_t||_{\mrl^{\infty}} \leq ||\rho_0||_{\mrl^{\infty}}$. Furthermore, if we denote by $\mu_t$ the measure on $\cB(0,r)$ with density $\rho_t$, then $\mu_t$ converge weakly to $\mu_0$.
For $t \in (0, 1]$ let $\hat{\mu}_t$ be the minimum of $ \mcf^{\nu}_{\lambda}(\cdot) + \frac{1}{2h} \W^2(\cdot, \mu_t)$, and let $\hat{\rho}_t$ be the density of $\hat{\mu}_t$. Using \cite{bonnotte2013unidimensional}[Lemma 5.4.3.] we get 
\[
||\hat{\rho}_t ||_{\mrl^{\infty}} \leq (1 + h\sqrt{d})^d ||\rho_t||_{\mrl^{\infty}}.
\]
so $\hat{\rho_{t}}$ lies in a ball of finite radius in $\mrl^{\infty}$.  Using compactness of $\mathcal{P}(\cB(0,r))$ in weak topology and compactness of closed ball in $\mrl^{\infty}(\cB(0,r))$ in weak star topology, we can choose a subsequence $\hat{\mu}_{t_k} , \hat{\rho}_{t_k}$, $\lim_{k \to \plusinfty} t_k =0$, that converges along that subsequence to limits $\hat{\mu}$, $\hat{\rho}$. Obviously $\hat{\rho}$ is the density of $\hat{\mu}$, since for any continuous function $f$  on $\cB(0,r)$ we have:
\[
\int \hat{\rho} f dx = \lim_{k \rightarrow \infty} \int \rho_{t_k} f dx = \lim_{k \rightarrow \infty} \int f d\mu_{t_k} = \int f d\mu.
\]
Furthermore, since $\hat{\rho}$ is the weak star limit of a bounded subsequence, we have:
\[
||\hat{\rho} ||_{\mrl^{\infty}} \leq \limsup_{k \rightarrow \infty}(1 + h\sqrt{d})^d ||\rho_{t_k}||_{\mrl^{\infty}} \leq (1 + h\sqrt{d})^d ||\rho_0||_{\mrl^{\infty}}.
\]
To finish, we just need to prove that $\hat{\mu}$ is a minimum of $\mcg$. We remind our reader, that we already established existence of some minimum $\mu$ (that might be different from $\hat{\mu}$). Since $\hat{\mu}_{t_k}$ converges weakly to $\hat{\mu}$ in $\mathcal{P}(\cB(0,r))$, it implies convergence  in $\W$ as well since $\cB(0,r)$ is compact. Similarly $\mu_{t_k}$ converges to $\mu_0$ in $\W$. Using the lower semicontinuity of $\mcg$ we now have:
\[
\begin{aligned}
\mcf^{\nu}_{\lambda}(\hat{\mu}) + \frac{1}{2h} \W^2(\hat{\mu}, \mu_0)  & \leq \liminf_{k \rightarrow \infty} \left( \mcf^{\nu}_{\lambda}(\hat{\mu}_{t_k}) + \frac{1}{2h} \W^2(\hat{\mu}_{t_k} , \mu_0) \right) \\
& \leq \liminf_{k \rightarrow \infty}  \mcf^{\nu}_{\lambda}(\mu) + \frac{1}{2h} \W^2(\mu , \mu_{t_k})   \\
& + \frac{1}{2h}  \W^2(\hat{\mu}_{t_k}, \mu_0) - \frac{1}{2h} \W^2(\hat{\mu}_{t_k}, \mu_{t_k})   \\
& = \mcf^{\nu}_{\lambda} (\mu) + \frac{1}{2h} \W^2(\mu, \mu_0) ,
\end{aligned}
\]
where the second inequality comes from the fact, that $\hat{\mu}_{t_k}$ minimizes $\mcf^{\nu}_{\lambda}(\cdot) + \frac{1}{2h}\W^2(\cdot, \mu_{t_k})$. From the above inequality and previously established facts, it follows that $\hat{\mu}$ is a minimum of $\mcg$ with density satisfying \ref{ineq:inf_norm_bound}.
\end{proof}

\begin{definition} \textbf{Minimizing movement scheme}
  \label{def:mini_move}
Let $r >0$ and  $\mcf : \mathbb{R_+} \times \PS(\cB(0,r)) \times \PS(\cB(0,r))\rightarrow \rset$ be a functional. Let $\mu_0 \in \PS(\cB(0,r))$ be a starting point. For $h> 0$ a piecewise constant trajectory $\mu^h : [0, \infty) \rightarrow \PS(\cB(0,r))$ for $\mcf$ starting at $\mu_0$ is a function such that:
\begin{itemize}
\item $\mu^h(0) = \mu_0$.
\item $\mu^h$ is constant on each interval $[nh, (n+1)h)$, so $\mu^h(t) = \mu^h(nh)$ with $n = \lfloor t/h \rfloor$.
\item $\mu^h((n+1)h )$ minimizes the  functional $ \zeta  \mapsto \mcf(h,  \zeta ,\mu^h(nh))$, for all $n \in \nset$.
\end{itemize}
We say $\hat{\mu}$ is a minimizing movement scheme for $\mcf$ starting at $\mu_0$, if there exists a family of piecewise constant trajectory $(\mu^h)_{h >0}$ for $\mcf$ such that $\hat{\mu}$ is a pointwise limit of $\mu^h$ as $h$ goes to $0$, \ie~for all $t \in \rset_+$, $\lim_{h \to 0} \mu^h(t) = \mu(t)$ in $\PS(\cB(0,r))$. We say that $\tilde{\mu}$ is a generalized minimizing movement for  $\mcf$ starting at $\mu_0$, if there exists a family of piecewise constant trajectory $(\mu^h)_{h >0}$ for $\mcf$ and  a sequence $(h_n)_n$, $\lim_{n \to \infty} h_n = 0$, such that $\mu^{h_n}$ converges pointwise to $\tilde{\mu}$.
\end{definition}

\begin{thm} \label{thm:existance_gmm_scheme}
Let $\nu$ be a probability measure on $\cB(0,1)$ with a strictly positive smooth density. Fix a regularization constant $\lambda > 0$ and radius $r > \sqrt{d}$. Given an absolutely continuous measure $\mu_0 \in \mathcal{P}(\cB(0,r))$ with density $\rho_0 \in \mrl^{\infty}(\cB(0,r))$, there is a generalized minimizing movement scheme $(\mu_t)_t$ in $\mathcal{P}(\cB(0,r))$ starting from $\mu_0$ for the functional defined by 
\begin{equation} \label{gmm:sw_ent_functional}
\mcf^{\nu}(h, \mu_+, \mu_-) = \mcf^{\nu}_{\lambda}(\mu_+) + \frac{1}{2h}\W^2(\mu_+, \mu_-).
\end{equation}
Moreover for any time $t > 0$, the probability measure $\mu_t = \mu(t)$ has density $\rho_t$ with respect to the Lebesgue measure and:
\begin{equation}
  \label{eq:bound:existance_gmm_scheme}
||\rho_t||_{\mrl^{\infty}} \leq e^{d t\sqrt{d}} ||\rho_0||_{\mrl^{\infty}} .  
\end{equation}
\end{thm}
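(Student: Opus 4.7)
The plan is to employ the classical \emph{JKO scheme} of Jordan, Kinderlehrer and Otto \cite{jordan1998variational}, adapted as in \cite{bonnotte2013unidimensional} to the sliced-Wasserstein setting, but now with the additional entropic term and starting from a merely $\mrl^\infty$ initial density. First I would fix $h > 0$ and define the discrete sequence inductively by $\mu^h_0 = \mu_0$ and $\mu^h_{n+1} \in \argmin_{\mu} \mcf^{\nu}(h,\mu,\mu^h_n)$, applying Theorem~\ref{thm:implicit_step} at each step to guarantee that each minimizer exists, lies in $\PS(\cB(0,r))$, and admits a density $\rho^h_{n+1} \in \mrl^{\infty}(\cB(0,r))$. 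Iterating the pointwise bound of Theorem~\ref{thm:implicit_step} $n$ times yields
\begin{equation*}
\| \rho^h_n \|_{\mrl^{\infty}} \leq (1+h\sqrt{d})^{nd}\|\rho_0\|_{\mrl^{\infty}} \leq e^{d\sqrt{d}\,t}\|\rho_0\|_{\mrl^{\infty}},
\end{equation*}
with $t = nh$. Extending $\mu^h$ to a piecewise constant curve in $t$ as required by Definition~\ref{def:mini_move} then yields the discrete-in-time minimizing movement.

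Next I would extract a subsequential limit as $h \to 0$ by a standard equicontinuity argument. Since $\mcf^{\nu}_{\lambda}$ is bounded below on $\PS(\cB(0,r))$ (as $\SW^2 \geq 0$ and $\He$ is uniformly bounded below on a bounded domain), the minimizing property gives the one-step estimate $\tfrac{1}{2h}\W^2(\mu^h_{n+1},\mu^h_n) \leq \mcf^{\nu}_{\lambda}(\mu^h_n)-\mcf^{\nu}_{\lambda}(\mu^h_{n+1})$; a Cauchy--Schwarz telescoping argument then produces a H\"older-type bound of the form $\W(\mu^h_m,\mu^h_n) \leq C \sqrt{|m-n|h}$ on any finite time horizon. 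Because $(\PS(\cB(0,r)),\W)$ is compact, an Arzel\`a--Ascoli argument applied to the curves $\mu^h$ on a countable dense set of times, together with a diagonal extraction, produces a sequence $h_k \to 0$ along which $\mu^{h_k}(t)$ converges in $\W$ to some $\mu_t$ for every $t \geq 0$. By Definition~\ref{def:mini_move} this limit is a generalized minimizing movement for $\mcf^{\nu}$ starting at $\mu_0$.

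Finally I would verify that each $\mu_t$ admits a density $\rho_t$ satisfying \eqref{eq:bound:existance_gmm_scheme}. Along the extracted subsequence, the densities $\rho^{h_k}_{\lfloor t/h_k \rfloor}$ satisfy the uniform bound $\|\rho^{h_k}_{\lfloor t/h_k \rfloor}\|_{\mrl^{\infty}} \leq e^{d\sqrt{d}\,t}\|\rho_0\|_{\mrl^{\infty}}$, so by Banach--Alaoglu a further subsequence converges weakly-$\star$ in $\mrl^{\infty}(\cB(0,r))$ to some $\rho_t$. Testing this convergence and the $\W$-convergence of $\mu^{h_k}(t)$ against any continuous function on $\cB(0,r)$ identifies $\rho_t$ as the density of $\mu_t$, while weak-$\star$ lower semicontinuity of $\|\cdot\|_{\mrl^{\infty}}$ delivers the announced bound.

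The main obstacle, in my view, is the passage from smooth strictly positive densities (for which \cite{bonnotte2013unidimensional}[Lemma~5.4.3] directly applies) to general $\mrl^{\infty}$ initial data: this is precisely the content of Theorem~\ref{thm:implicit_step}, proved here by Gaussian-convolution smoothing coupled with a weak-$\star$ compactness argument in $\mrl^{\infty}$. Once that ingredient is available, the remainder of the argument is a rather standard application of the JKO machinery, considerably simplified by the compactness of $\cB(0,r)$, which trivializes the tightness and equicontinuity steps that are usually delicate in the unbounded setting.
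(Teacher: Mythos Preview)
Your proposal is correct and follows the same overall JKO skeleton as the paper: invoke Theorem~\ref{thm:implicit_step} to build the discrete scheme with the iterated $\mrl^\infty$ bound, derive the approximate $1/2$-H\"older estimate $\W^2(\mu^h_t,\mu^h_s)\leq 2C(|t-s|+h)$ from the one-step energy drop via Cauchy--Schwarz telescoping, and then pass to the limit using compactness and a diagonal argument.

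The one organizational difference is in the extraction step. You run Arzel\`a--Ascoli directly on the curves $t\mapsto\mu^h(t)$ in the compact space $(\PS(\cB(0,r)),\W)$ to obtain pointwise convergence, and only afterwards recover the density at each fixed $t$ by a separate Banach--Alaoglu argument in $\mrl^\infty(\cB(0,r))$. The paper instead first applies Banach--Alaoglu in the \emph{space-time} ball of $\mrl^\infty([0,T]\times\cB(0,r))$ to extract a weak-$\star$ limit $\rho(t,x)$, and then upgrades this to pointwise-in-$t$ convergence of the measures by showing that $\int\zeta\,d\mu^{h_n}_t$ is Cauchy for every $\zeta\in\mrc^1(\cB(0,r))$, using the same H\"older bound you derived. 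Your route is the more textbook minimizing-movement argument and is slightly cleaner here; the paper's route has the mild advantage of producing a jointly measurable density $\rho(t,x)$ in one stroke, which is convenient for the weak-PDE computation in the subsequent theorem. Either way the ingredients are identical.
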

\begin{proof}
We start by noting, that by \ref{thm:implicit_step} for any $h > 0$ there exists a piecewise constant trajectory $\mu^h$ for \ref{gmm:sw_ent_functional} starting at $\mu_0$. Furthermore for $t \geq 0$ measure $\mu_t^h = \mu^h(t)$ has density $\rho_t^h$, and:
\begin{equation} \label{ineq:dens_bound}
||\rho_t^h||_{\mrl^{\infty}} \leq e^{d \sqrt{d} (t+ h)} ||\rho_0||_{\mrl^{\infty}}.
\end{equation}
Let us choose $T > 0$. We denote $\rho^h(t,x) = \rho_t^h(x)$. 
For $ h \leq 1$, the functions $\rho^h$ lie in a ball in $\mrl^{\infty}([0,T] \times \cB(0,r))$, so from Banach-Alaoglu theorem there is a sequence $h_n$ converging to $0$, such that $\rho^{h_n}$ converges in weak-star topology in $\mrl^{\infty}([0,T] \times \cB(0,r))$ to a certain limit $\rho$. Since $\rho$ has to be nonnegative except for a set of measure zero, we assume $\rho$ is nonnegative. We denote $\rho_t(x) = \rho(t,x)$. We will prove that for almost all $t$, $\rho_t$ is a probability density and $\mu_{t}^{h_n}$ converges in $\W$ to a measure $\mu_t$ with density $\rho_t$.

First of all, for almost all $t \in [0,T]$, $\rho_t$ is a probability density, since for any Borel set $A \subseteq [0,T]$ the indicator of set $A \times \cB(0,r)$ is integrable, and hence by definition of the weak-star topology:
\[
\int_A \int_{\cB(0,r)} \rho_t(x) dx dt = \lim_{n \rightarrow \infty} \int_A \int_{\cB(0,r)} \rho_t^{h_n}(x) dx dt,
\]
and so we have to have $\int \rho_t(x) dx = 1$ for almost all $t \in [0,T]$. Nonnegativity of $\rho_t$ follows from nonnegativity of $\rho$. 

We will now prove, that for almost all $t \in [0,T]$ the measures $\mu_t^{h_n}$ converge to a measure with density $\rho_t$. Let $t \in (0,T)$, take $\delta < \min(T-t, t)$ and $\zeta \in \rmc^1(\cB(0,r))$. We have:
\begin{multline} \label{ineq:mu_t_conv_bound}
\left| \int_{\cB(0,r)} \zeta d\mu_t^{h_n} - \int_{\cB(0,r)} \zeta d\mu_t^{h_m} \right| \leq \\ \left| \int_{\cB(0,r)} \zeta d\mu_t^{h_n} - \frac{1}{2\delta} \int_{t - \delta}^{t + \delta} \int_{\cB(0,r)} \zeta d\mu_s^{h_n} ds \right| + 
\left| \int_{\cB(0,r)} \zeta d\mu_t^{h_m} - \frac{1}{2\delta} \int_{t - \delta}^{t + \delta} \int_{\cB(0,r)} \zeta d\mu_s^{h_m} ds \right| +  \\
\left| \frac{1}{2\delta} \int_{t - \delta}^{t + \delta} \int_{\cB(0,r)} \zeta d\mu_s^{h_m} ds - \frac{1}{2\delta} \int_{t - \delta}^{t + \delta} \int_{\cB(0,r)} \zeta d\mu_s^{h_n} ds \right|.
\end{multline}

Because $\mu_t^{h_n}$ have densities $\rho_t^{h_n}$ and both $\rho^{h_n}$, $\rho^{h_m}$ converge to $\rho$ in weak-star topology, the last element of the sum on the right hand side converges to zero, as $n,m \rightarrow \infty$. Next, we get a bound on the other two terms.

First, if we denote by $\gamma$ the optimal transport plan between $\mu_t^{h_n}$ and $\mu_s^{h_n}$, we have:
\begin{multline} \label{ineq:expectation_W2_bound}
\left| \int_{\cB(0,r)} \zeta d\mu_t^{h_n} - \int_{\cB(0,r)} \zeta d\mu_s^{h_n} \right|^2 \leq \int_{\cB(0,r) \times \cB(0,r)} \left| \zeta(x) - \zeta(y) \right|^2 d\gamma(x,y) \leq ||\nabla \zeta||_{\infty}^2 \W^2(\mu_t^{h_n}, \mu_s^{h_n}).
\end{multline}
In addition, for $n_t = \lfloor t/h_n \rfloor$ and $n_s = \lfloor s/h_n \rfloor$ we have $\mu_t^{h_n} = \mu_{n_t h_n}^{h_n}$ and $\mu_s^{h_n} = \mu_{n_s h_n}^{h_n}$. For all $k \geq 0$ we have:
\begin{equation}
  \label{eq:wasser_bound_1}
\W^2(\mu_{kh_n}^{h_n}, \mu_{(k+1)h_n}^{h_n}) \leq 2h_n(\mcf^{\nu}_{\lambda}(\mu_{kh_n}^{h_n}) - \mcf^{\nu}_{\lambda}(\mu_{(k+1)h_n}^{h_n})  .  
\end{equation}
Using this result and \eqref{ineq:expectation_W2_bound} and assuming without loss of generality $n_t \leq n_s$, from the Cauchy-Schwartz inequality we get:
\begin{align}
\nonumber
  \W^2(\mu_{t}^{h_n}, \mu_s^{h_n}) & \leq \left( \sum_{k= n_t}^{n_s-1} \W(\mu_{kh_n}^{h_n}, \mu_{(k+1)h_n}^{h_n}) \right)^2 \\
  \nonumber
                                   & \leq |n_t - n_s|\sum_{k=n_t}^{n_s 1} \W^2(\mu_{kh_n}^{h_n}, \mu_{(k+1)h_n}^{h_n}) \\
    \label{eq:wasser_bound_2}
& \leq 2h_n|n_t - n_s|(\mcf^{\nu}_{\lambda}(\mu_{n_t h_n}^{h_n}) - \mcf^{\nu}_{\lambda}(\mu_{n_s h_n}^{h_n}))  \leq 2C(|t-s| + h_n),
\end{align}
where we used for the last inequality, denoting $C = \mcf^{\nu}_{\lambda}(\mu_0) - \min_{\PS(\cB(0,r))} \mcf^{\nu}_{\lambda}$, that $(\mcf^{\nu}_{\lambda}(\mu_{kh_n}^{h_n}))_n$ is non-increasing by \eqref{eq:wasser_bound_1} and $\min_{\PS(\cB(0,r))} \mcf^{\nu}_{\lambda}$ is finite since $ \mcf^{\nu}_{\lambda}$ is lower semi-continuous.
Finally, using Jensen's inequality, the above bound and \ref{ineq:expectation_W2_bound} we get:
\[
\begin{aligned}
\left| \int_{\cB(0,r)} \zeta d\mu_t^{h_n} - \frac{1}{2\delta} \int_{t - \delta}^{t + \delta} \int_{\cB(0,r)} \zeta d\mu_s^{h_n} ds \right|^2 & \leq \frac{1}{2\delta} \int_{t - \delta}^{t + \delta} \left| \int_{\cB(0,r)} \zeta d\mu_{t}^{h_n} - \int_{\cB(0,r)} \zeta d\mu_s^{h_n} \right|^2 ds \\
& \leq \frac{C ||\nabla \zeta||_{\infty}^2}{\delta} \int_{t-\delta}^{t+\delta} (|t-s| +h_n) ds \\
& \leq 2C ||\nabla \zeta||_{\infty}^2 (h_n + \delta).
\end{aligned}
\]
Together with \eqref{ineq:mu_t_conv_bound}, when taking $\delta = h_n$, this result means that $\int_{\cB(0,r)} \zeta d\mu_t^{h_n}$ is a Cauchy sequence for all $t \in (0, T)$. On the other hand, since $\rho^{h_n}$ converges to $\rho$ in weak-star topology on $\mrl^{\infty}$, the limit of $\int_{\cB(0,r)} \zeta d\mu_t^{h_n}$ has to be $\int_{\cB(0,r)} \zeta(x) \rho_t(x) dx$ for almost all $t \in (0,T)$. This means that for almost all $t \in [0,T]$ sequence $\mu_{t}^{h_n}$ converges to a measure $\mu_t$ with density $\rho_t$.

Let $ S \in [0,T]$ be the set of times such that for $t \in S$ sequence $\mu_{t}^{h_n}$ converges to $\mu_t$. As we established almost all points from $[0,T]$ belong to $S$. Let $ t \in [0,T] \setminus S$. Then, there exists a sequence of times $t_k \in S$ converging to $t$, such that $\mu_{t_k}$ converge to some limit $\mu_t$. We have:
\[
\W(\mu_t^{h_n}, \mu_t) \leq \W(\mu_{t}^{h_n}, \mu_{t_k}^{h_n}) + \W(\mu_{t_k}^{h_n}, \mu_{t_k}) + \W(\mu_{t_k}, \mu_t).
\]
From which we have for all $k \geq 1$:
\[
\limsup_{n \rightarrow \infty} \W(\mu_{t}^{h_n}, \mu_t) \leq \W(\mu_{t_k}, \mu_t) + \limsup_{n\rightarrow \infty} \W(\mu_t^{h_n}, \mu_{t_k}^{h_n}),
\]
and using \eqref{eq:wasser_bound_2}, we get $\mu_{t}^{h_n} \rightarrow \mu_t$. Furthermore, the measure $\mu_t$ has to have density, since $\rho_{t}^{h_n}$ lie in a ball in $\mrl^{\infty}(\cB(0,r))$, so we can choose a subsequence of $\rho_t^{h_n}$ converging in weak-star topology to a certain limit $\hat{\rho}_t$, which is the density of $\mu_t$. 

We use now the diagonal argument to get convergence for all $t >0$. Let $(T_k)_{k=1}^{\infty}$ be a sequence of times increasing to infinity. Let $h_{n}^1$ be a sequence converging to $0$, such that $\mu_t^{h_n^1}$ converge to $\mu_t$ for all $t \in [0, T_1]$. Using the same arguments as above, we can choose a subsequence $h_n^2$ of $h_n^1$, such that $\mu_{t}^{h_n^2}$ converges to a limit $\mu_t$ for all $t \in [0, T_2]$. Inductively, we construct subsequences $h_{n}^k$, and in the end take $h_n = h_n^n$. For this subsequence we have that $\mu_t^{h_n}$ converges to $\mu_t$ for all $t > 0$, and $\mu_t$ has a density satisfying the bound from the statement of the theorem.

Finally, note that \eqref{thm:existance_gmm_scheme} follows from \eqref{ineq:dens_bound}.
\end{proof}

\begin{thm}
Let $(\mu_t)_{t \geq 0}$ be a generalized minimizing movement scheme given by Theorem~\ref{thm:existance_gmm_scheme} with initial distribution $\mu_0$ with density $\rho_0 \in \mrl(\cB(0,r))$. We denote by $\rho_t$ the density of $\mu_t$ for all $t \geq 0$. Then $\rho_t$ satisfies the continuity equation:
\[
\frac{\partial \rho_t}{\partial t} + \divop(v_t \rho_t) + \lambda \Delta \rho_t = 0 \,, \quad \quad \quad v_t(x) = - \int_{\mathbb{S}^{d-1}} \psi_{t, \theta}'(\langle x , \theta \rangle ) \theta d\theta ,
\]
in a weak sense, that is for all $\xi \in \rmc_c^{\infty} ([0, \infty)\times \cB(0,r))$ we have:
\[
\int_0^{\infty} \int_{\cB(0,r)} \left[\frac{\partial \xi}{\partial t}(t,x) - v_t \nabla \xi(t,x)  - \lambda \Delta \xi(t,x)\right] \rho_t(x) dx dt = -\int_{\cB(0,r)} \xi(0,x)\rho_0(x) dx.
\]
\end{thm}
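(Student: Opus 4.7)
The plan is to follow the Jordan--Kinderlehrer--Otto (JKO) strategy, as already adapted to the sliced-Wasserstein setting in \cite{bonnotte2013unidimensional}[Section 5.5]. I would use the piecewise-constant trajectory $\mu^h$ constructed in the proof of Theorem~\ref{thm:existance_gmm_scheme}, derive a discrete-in-time weak formulation from the Euler--Lagrange conditions at each JKO step, then pass to the limit $h \to 0$ along the subsequence $h_n$ for which $\mu_t^{h_n} \to \mu_t$ in $\W$. Throughout, the uniform $\mathrm{L}^\infty$ bound \eqref{eq:bound:existance_gmm_scheme} and the telescoping estimate $\sum_n \W^2(\mu_{(n+1)h}^h, \mu_{nh}^h) \leq 2h(\mcf^\nu_\lambda(\mu_0) - \inf \mcf^\nu_\lambda)$, already established in the previous proof, would be the backbone of all error controls.

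First I would compute the three first variations of $\mcf^\nu(h, \cdot, \mu_{nh}^h)$ at its minimizer $\mu_{(n+1)h}^h$. Perturbing along $\mu_\varepsilon = (\mathrm{Id} + \varepsilon \eta)_\# \mu_{(n+1)h}^h$ for $\eta \in \rmc_c^\infty$ supported in the interior of $\cB(0,r)$, the stationarity condition yields (i) from $\tfrac12 \SW^2(\cdot,\nu)$ a term $-\int \eta \cdot v_{(n+1)h}^h \, d\mu_{(n+1)h}^h$, since by Theorem~\ref{thm:unqmap} applied sliced-wise the first variation of $\tfrac12 \SW^2(\cdot,\nu)$ at $\mu$ is $x \mapsto \int_{\Sp^{d-1}} \psi_\theta(\ps{x}{\theta}) d\theta$, whose Euclidean gradient is $-v$; (ii) from $\lambda \He$ a term $-\lambda \int \rho_{(n+1)h}^h \divop(\eta) dx$; and (iii) from $\tfrac{1}{2h}\W^2(\cdot, \mu_{nh}^h)$ a term $\tfrac{1}{h}\int \eta(x) \cdot (x - T_n(x)) d\mu_{(n+1)h}^h(x)$, where $T_n$ is the optimal transport from $\mu_{(n+1)h}^h$ to $\mu_{nh}^h$.

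Next, for $\xi \in \rmc_c^\infty([0,\infty) \times \cB(0,r))$ I would test the stationarity condition against $\eta(x) = \nabla_x \xi((n+1)h,x)$. Using the pushforward identity $\int \xi((n+1)h, T_n(x)) d\mu_{(n+1)h}^h = \int \xi((n+1)h, \cdot) d\mu_{nh}^h$ together with the Taylor expansion $\xi((n+1)h, T_n(x)) = \xi((n+1)h, x) + \nabla \xi \cdot (T_n - \mathrm{Id}) + O(|T_n-\mathrm{Id}|^2)$, the transport term telescopes into the discrete time derivative
\[
\frac{1}{h}\bigl[\textstyle\int \xi((n+1)h,\cdot) d\mu_{nh}^h - \int \xi((n+1)h,\cdot) d\mu_{(n+1)h}^h\bigr]
\]
up to a quadratic remainder controlled by $\|\nabla^2 \xi\|_\infty \W^2(\mu_{(n+1)h}^h, \mu_{nh}^h)$. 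A second integration by parts on the entropy term, $\int \nabla \xi \cdot \nabla \rho_{(n+1)h}^h dx = -\int \rho_{(n+1)h}^h \Delta \xi \, dx$, replaces the density gradient by $\Delta \xi$. Summing over $n \geq 0$ with weight $h$ and applying a discrete summation-by-parts in time produces the approximate weak identity
\[
\sum_{n \geq 0} h \int \bigl[\partial_t \xi + v_{(n+1)h}^h \cdot \nabla \xi + \lambda \Delta \xi\bigr]\!(nh,\cdot) \, d\mu_{(n+1)h}^h = \int \xi(0,\cdot) d\mu_0 + R_h,
\]
where $R_h \to 0$ as $h \to 0$ thanks to the telescoping bound on $\sum_n \W^2(\mu_{(n+1)h}^h, \mu_{nh}^h)$ and the smoothness of $\xi$.

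Finally I would pass to the limit along $h_n \to 0$. The Riemann-sum-in-time converges to the time integral by standard arguments using the uniform $\rmc$-continuity of $\xi$ and the $\W$-continuity of $t \mapsto \mu_t$ inherited from \eqref{eq:wasser_bound_2}. The entropy term passes to the limit using the weak-$\star$ convergence $\rho^{h_n} \to \rho$ in $\mrl^\infty([0,T]\times \cB(0,r))$. The hardest step, and the main obstacle, is the passage to the limit of the sliced drift term $\int v_t^{h_n} \cdot \nabla \xi \, \rho_t^{h_n} dx dt$: one needs stability of the one-dimensional Kantorovich potentials $\psi_{t,\theta}'$ with respect to weak convergence of the projected marginals $\theta^*_\# \mu_t^{h_n}$, uniformly in $\theta \in \Sp^{d-1}$, so that the pointwise convergence of $v_t^{h_n}(x)$ can be combined with the weak-$\star$ convergence of $\rho_t^{h_n}$ to identify the limit integral. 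This is precisely where the assumption that $\nu$ has a strictly positive smooth density on $\cB(0,1)$ and the $\mrl^\infty$ bound on $\rho_t$ given by Theorem~\ref{thm:existance_gmm_scheme} enter: together they guarantee smoothness and equicontinuity of the projected densities, hence the stability of their quantile functions and of $\psi_{t,\theta}'$, following the estimates on one-dimensional optimal transport developed in \cite{bonnotte2013unidimensional}[Section 5.5].
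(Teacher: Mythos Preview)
Your proposal is correct and follows essentially the same JKO strategy as the paper's proof: derive the Euler--Lagrange condition at each minimizing step (the paper's step~(4), citing \cite{bonnotte2013unidimensional}[Propositions 1.5.7, 5.1.7] and \cite{jordan1998variational}[Eq.~(38)]), test against $\nabla\xi$, Taylor-expand the transport term to produce the discrete time derivative with a quadratic remainder controlled by $\sum_k \W^2(\mu_{(k-1)h}^h,\mu_{kh}^h)$, and pass to the limit term by term---the paper delegating your ``hardest step'' (stability of the sliced Kantorovich potentials) to part~2 of \cite{bonnotte2013unidimensional}[Theorem 5.6.1], exactly as you anticipate. One cosmetic remark: your phrase ``a second integration by parts on the entropy term, $\int \nabla\xi\cdot\nabla\rho = -\int\rho\,\Delta\xi$'' is unnecessary and would require $\rho\in W^{1,1}$, which is not known a~priori; your own first-variation formula $-\lambda\int\rho\,\divop(\eta)\,dx$ already gives $-\lambda\int\rho\,\Delta\xi\,dx$ directly when $\eta=\nabla\xi$, which is how the paper (via JKO) handles it.
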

\begin{proof}
Our proof is based on the proof of \cite{bonnotte2013unidimensional}[Theorem 5.6.1]. We proceed in five steps.

\begin{enumerate}[wide, labelwidth=!, labelindent=0pt,label=(\arabic*)]
\item Let $h_n \rightarrow 0$ be a sequence given by Theorem~\ref{thm:existance_gmm_scheme}, such that $\mu_t^{h_n}$ converges to $\mu_t$ pointwise. Furthermore we know that $\mu^{h_n}$ have densities $\rho^{h_n}$ that converge to $\rho$ in $\mrl^r$, for $r \geq 1$, and in weak-star topology in $\mrl^{\infty}$. Let $\xi \in \rmc_c^{\infty} ([0, \infty) \times \cB(0,r))$. We denote $\xi_{k}^n(x)  = \xi(kh_n, x)$. Using part $1$ of the proof of  \cite{bonnotte2013unidimensional}[Theorem 5.6.1], we obtain:
\begin{multline} \label{thm:cont_proof_part1}
\int_{\cB(0,r)} \xi(0, x) \rho_0(x) dx + \int_0^{\infty} \int_{\cB(0,r)} \frac{\partial \xi}{\partial t}(t,x) \rho_t(x) dx dt \\
= \lim_{n \rightarrow \infty} - h_n \sum_{k=1}^{\infty} \int_{\cB(0,r)} \xi_k^n(x) \frac{\rho_{kh_n}^{h_n}(x) - \rho_{(k-1)h_n}^{h_n}(x) }{h_n} dx.
\end{multline}
\item Again, this part is the same as part $2$ of the proof of  \cite{bonnotte2013unidimensional}[Theorem 5.6.1]. For any $\theta \in \mathbb{S}^{d-1}$ we denote by $\psi_{t, \theta}$ the unique Kantorovich potential from $\theta_{\#}^{*}\mu_t$ to $\theta_{\#}^{*}\nu$, and by $\psi_{t, \theta}^{h_n}$ the unique Kantorovich potential from $\theta_{\#}^{*} \mu_t^{h_n}$ to $\theta_{\#}^{*} \nu$. Then, by the same reasoning as part $2$ of the proof of  \cite{bonnotte2013unidimensional}[Theorem 5.6.1], we get:

\begin{multline} \label{thm:cont_proof_part2}
\int_0^{\infty} \int_{\cB(0,r)} \fint (\psi_{t, \theta})' (\langle \theta, x \rangle ) \langle \theta , \nabla \xi (x, t) \rangle d\theta d\mu_t(x) dt \\
= \lim_{n \rightarrow \infty} h_n \sum_{k=1}^{\infty} \int_{\cB(0,r)} \fint \psi_{kh_n, \theta}^{h_n} (\theta^{*}) \langle \theta, \nabla \xi_{k}^n \rangle d \theta d\mu_{kh_n}^{h_n}.
\end{multline}
\item  Since $\xi$ is compactly supported and smooth, $\Delta \xi$ is Lipschitz, and so for any $ t \geq 0$ if we take $k = \lfloor t/h_n \rfloor$ we get $| \Delta \xi_k^n(x) - \Delta \xi(t,x) | \leq C h_n$ for some constant $C$. Let $T > 0$ be such that $\xi(t,x) = 0 $ for $t > T$. We have:
\[
\left| \sum_{k=1}^{\infty} h_n \int_{\cB(0,r)} \Delta \xi_k^n(x) \rho_{kh_n}^{h_n}(x) dx - \int_{0}^{\plusinfty} \int_{\cB(0,r)} \Delta \xi(t,x) \rho_{t}^{h_n}(x) dx dt \right| \leq CTh_n .
\]
On the other hand,  we know, that $\rho^{h_n}$ converges to $\rho$ in weak star topology on $\mrl^{\infty}([0,T] \times \cB(0,r))$, and $\Delta \xi$ is bounded, so:
\[
\lim_{n \to \plusinfty}\left| \int_{0}^{\plusinfty} \int_{\cB(0,r)} \Delta \xi(t,x) \rho_{t}^{h_n}(x) dx dt - \int_{0}^{\plusinfty} \int_{\cB(0,r)} \Delta \xi (t,x) \rho_t(x) dx dt \right|= 0.
\]
Combining those two results give:
\begin{equation} \label{thm:cont_proof_part3}
\lim_{n \rightarrow \infty} h_n \sum_{k=1}^{\infty} \int_{\cB(0,r)} \Delta \xi_k^n(x) \rho_{kh_n}^{h_n}(x) dx = \int_{0}^{\plusinfty} \int_{\cB(0,r)} \Delta \xi (t,x) \rho_t(x) dx dt.
\end{equation}

\item 
  Let $\phi_{k}^{h_n}$ denote the unique Kantorovich potential from $\mu_{kh_n}^{h_n}$ to $\mu_{(k-1)h_n}^{h_n}$. Using  \cite{bonnotte2013unidimensional}[Propositions 1.5.7 and 5.1.7], as well as \cite{jordan1998variational}[Equation (38)] with $\Psi = 0$, and optimality of $\mu_{kh_n}^{h_n}$, we get:
\begin{multline} \label{thm:cont_proof_eq0}
\frac{1}{h_n} \int_{\cB(0,r)} \langle \nabla \phi_k^{h_n}(x) , \nabla \xi_{k}^n(x) \rangle d\mu_{kh_n}^{h_n}(x)   - \int_{\cB(0,r)} \fint (\psi_{kh_n}^{h_n})'(\theta^{*}) \langle \theta, \nabla \xi_k^n(x) \rangle d\theta d\mu_{kh_n}^{h_n}(x)\\ - \lambda \int_{\cB(0,r)}   \Delta \xi_k^n(x) d\mu_{kh_n}^{h_n}(x) ,
\end{multline}
which is the derivative of $\mcf^{\nu}_{\lambda}(\cdot) + \frac{1}{2h_n}\W^2(\cdot, \mu_{(k-1)h_n})$ in the direction given by vector field $\nabla \xi_k^n$ is zero.

Let $\gamma$ be the optimal transport between $\mu_{kh_n}^{h_n}$ and $\mu_{(k-1)h_n}^{h_n}$. Then:
\begin{equation} \label{thm:cont_proof_eq1}
\int_{\cB(0,r)} \xi_k^n(x) \frac{\rho_{kh_n}^{h_n}(x) - \rho_{(k-1)h_n}^{h_n}(x)}{h_n} dx = \frac{1}{h_n} \int_{\cB(0,r)} (\xi_k^n(y) - \xi_k^n(x)) d\gamma(x,y).
\end{equation}
\begin{equation} \label{thm:cont_proof_eq2}
\frac{1}{h_n}\int_{\cB(0,r)} \langle \nabla \phi_k^{h_n}(x) , \nabla \xi_k^n(x) \rangle d\mu_{kh_n}^{h_n}(x)  = \frac{1}{h_n} \int_{\cB(0,r)} \langle \nabla \xi_k^n(x), y-x \rangle d\gamma(x,y). 
\end{equation}
Since $\xi$ is $\rmc_c^{\infty}$, it has Lipschitz gradient. Let $C$ be twice the Lipschitz constant of $\nabla \xi$. Then we have $| \xi(y) - \xi(x) - \langle \nabla \xi(x), y-x \rangle | \leq C|x- y|^2$, and hence:
\begin{equation} \label{thm:cont_proof_eq3}
\int_{\cB(0,r)} |\xi_k^n(y) - \xi_k^n(x) - \langle \nabla \xi_k^n(x), y-x \rangle | d \gamma(x,y) \leq C\W^2( \mu_{(k-1)h_n}^{h_n}, \mu_{kh_n}^{h_n}).
\end{equation}
Combining \eqref{thm:cont_proof_eq1}, \eqref{thm:cont_proof_eq2} and \eqref{thm:cont_proof_eq3}, we get:
\begin{multline} \label{thm:cont_proof_eq4}
\left|\sum_{k=1}^{\infty} h_n \int_{\cB(0,r)} \xi_k^n(x) \frac{\rho_{kh_n}^{h_n} - \rho_{(k-1)h_n}^{h_n}}{h_n} dx  + 
\sum_{k=1}^{\infty} h_n\int_{\cB(0,r)} \langle \nabla \phi_k^{h_n} , \nabla \xi_k^n \rangle d\mu_{kh_n}^{h_n} \right| \\
\leq C\sum_{k=1}^{\infty} \W^2(\mu_{(k-1)h_n}^{h_n}, \mu_{kh_n}^{h_n}).
\end{multline}

As  some $\mcf^{\nu}_{\lambda}$ have a finite minimum on $\mathcal{P}(\cB(0,r))$, we have:
\begin{equation} \label{thm:cont_proof_eq5}
\begin{aligned}
\sum_{k=1}^{\infty} \W^2(\mu_{(k-1)h_n}^{h_n}, \mu_{kh_n}^{h_n}) & \leq  2 h_n \sum_{k=1}^{\infty} \mcf^{\nu}_{\lambda}(\mu_{(k-1)h_n}^{h_n}) - \mcf^{\nu}_{\lambda}(\mu_{kh_n}^{h_n}) \\ & \leq 2h_n \left(\mcf^{\nu}_{\lambda}(\mu_0) - \min_{\mathcal{P}(\cB(0,r))}\mcf^{\nu}_{\lambda} \right).
\end{aligned}
\end{equation}
and so the sum on the right hand side of the equation goes to zero as $n$ goes to infinity.

From \eqref{thm:cont_proof_eq4}, \eqref{thm:cont_proof_eq5} and \eqref{thm:cont_proof_eq0} we conclude:
\begin{multline} \label{thm:cont_proof_part4}
\lim_{n \rightarrow \infty} - h_n \sum_{k=1}^{\infty} \xi_k^n(x)\frac{\rho_{kh_n}^{h_n} - \rho_{(k-1)h_n}^{h_n}}{h_n} dx = \\
\lim_{n \rightarrow \infty} \left( h_n \sum_{k=1}^{\infty} \int_{\cB(0,r)} \fint \psi_{kh_n, \theta}^{h_n} (\theta^{*}) \langle \theta, \nabla \xi_{k}^n \rangle d \theta d\mu_{kh_n}^{h_n} + h_n \sum_{k=1}^{\infty} \int_{\cB(0,r)} \Delta \xi_k^n(x) \rho_{kh_n}^{h_n}(x) dx  \right),
\end{multline}
where both limits exist, since the difference of left hand side and right hand side of the equation goes to zero, while the left hand side converges to a finite value by \eqref{thm:cont_proof_part1}.
\item  Combining \eqref{thm:cont_proof_part1}, \eqref{thm:cont_proof_part2}, \eqref{thm:cont_proof_part3} and \eqref{thm:cont_proof_part4} we get the result.   
\end{enumerate}
\end{proof}

\section{Proof of Theorem~\ref{thm:euler}}

Before proceeding to the proof, let us first define the following Euler-Maruyama scheme which will be useful for our analysis:
\begin{align}
\hat{X}_{k+1}  = \hat{X}_k + h \hat{v}(\hat{X}_k, \mu_{kh}) + \sqrt{2\lambda h}Z_{n+1},
\end{align}
where $\mu_t$ denotes the probability distribution of $X_t$ with $(X_t)_t$ being the solution of the original SDE \eqref{eqn:sde}. Now, consider the probability distribution of $\hat{X}_k$ as $\muh_{kh}$.  Starting from the discrete-time process $(\hat{X}_k)_{k\in \mathbb{N}_+}$, we first define a continuous-time process $(Y_t)_{t\geq 0}$ that linearly interpolates $(\hat{X}_k)_{k\in \mathbb{N}_+}$, given as follows: 
\begin{align}
d Y_t = \tilde{v}_t(Y) dt + \sqrt{2 \lambda} dW_t, \label{eqn:sde_linear}
\end{align}
where $\tilde{v}_t(Y) \triangleq - \sum_{k=0}^{\infty} \hat{v}_{kh} (Y_{kh}) \mathds{1}_{[kh, (k+1)h)}(t)$ and $\mathds{1}$ denotes the indicator function.
Similarly, we define a continuous-time process $(U_t)_{t\geq 0}$ that linearly interpolates $(\bar{X}_k)_{k\in \mathbb{N}_+}$, defined by \eqref{eqn:euler_asymp}, given as follows: 
\begin{align}
d U_t = \bar{v}_t(U) dt + \sqrt{2 \lambda} dW_t, \label{eqn:sde_linear2}
\end{align}
where
$\bar{v}_t(U) \triangleq - \sum_{k=0}^{\infty} \hat{v} (U_{kh},
\mub_{kh}) \mathds{1}_{[kh, (k+1)h)}(t)$ and $\mub_{kh}$ denotes the
probability distribution of $\bar{X}_k$.  Let us denote the
distributions of $(X_t)_{t \in [0,T]}$, $(Y_t)_{t \in [0,T]}$ and
$(U_t)_{t \in \ccint{0,T}}$ as $\pi_{X}^T$, $\pi_{Y}^T$ and $\pi_{U}^T$
respectively with $T = Kh$.

\newcommand{\minvsp}{0}

We consider the following assumptions: \vspace{\minvsp pt}
\begin{assumption}
\label{asmp:sde_ergo}
For all $\lambda >0$, the SDE  \eqref{eqn:sde} has a unique strong solution denoted by $(X_t)_{t\geq 0}$ for any starting point $x \in \R^d$. %
\vspace{\minvsp pt}
\end{assumption}
\begin{assumption}
\label{asmp:lipschitz}
There exits $L < \infty$ such that
\begin{align}
\| v_t(x) - v_{t'}(x') \| \leq L ( \|x-x' \| + |t-t'|),
\end{align}
where $v_t(x) = v(x,\mu_t)$ and
\begin{align}
\| \hat{v}(x,\mu) - \hat{v}(x',\mu') \| \leq L ( \|x-x' \| + \|\mu-\mu'\|_{\TV}).
\end{align}
\vspace{\minvsp pt}
\end{assumption}
\begin{assumption}
\label{asmp:dissip}
For all $t \geq 0$, $v_t$ is dissipative, i.e. for all $x \in \R^d$,
\begin{align}
\langle x, v_t(x) \rangle \geq m \|x\|^2 -b,
\end{align}
for some $m,b >0$.
\vspace{\minvsp pt}
\end{assumption}
\begin{assumption}
\label{asmp:stochgrad}
The estimator of the drift satisfies the following conditions: \ $\E[\hat{v}_t] = v_t$ for all $t \geq 0$, and for all $t\geq 0$, $x \in \R^d$,
\begin{align}
\E[ \|\hat{v}(x,\mu_t) - v(x,\mu_t) \|^2] \leq 2 \delta(L^2 \|x\|^2 + B^2),
\end{align}
for some $\delta \in (0,1)$. %
\vspace{\minvsp pt}
\end{assumption}
\begin{assumption}
\label{asmp:init_fun}
For all $t \geq 0$: $|\Psi_t(0)| \leq A$ and $\|v_t(0)\| \leq B$,
for $A,B \geq 0$, where $\Psi_t = \int_{\mathbb{S}^{d-1}} \psi_{t}(\ps{\theta}{\cdot}) d \theta$. 
\end{assumption}

We start by upper-bounding $\| \muh_{Kh} - \mu_T \|_{\TV}$. 
\begin{lemma}
\label{lem:euler}
Assume that the conditions \Cref{asmp:lipschitz,asmp:stochgrad,asmp:dissip,asmp:init_fun} hold. Then, the following bound holds:
\begin{align}
\| \muh_{Kh} - \mu_{T} \|_{\TV}^2\leq \| \pi^T_{Y} - \pi_{X}^T \|_{\TV}^2 \leq \frac{L^2 K}{4\lambda} \Bigl( \frac{C_1 h^3}{3} + 3 \lambda d h^2 \Bigr) + \frac{C_2 \delta K h}{8\lambda},
\end{align}
where $C_1 \triangleq 12(L^2 C_0 + B^2)+1$, $C_2 \triangleq 2 (L^2 C_0 + B^2)$, $C_0 \triangleq C_e +2  (1 \vee \frac1{m})(b+2B^2 + d \lambda)$, and $C_e$ denotes the entropy of $\mu_0$.
\end{lemma}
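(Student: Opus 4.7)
The overall strategy is to lift the total-variation bound to path space and apply Girsanov's theorem. The first inequality $\|\muh_{Kh} - \mu_T\|_{\TV} \leq \|\pi_Y^T - \pi_X^T\|_{\TV}$ is immediate from the data-processing inequality: the evaluation map $\omega \mapsto \omega_{Kh}$ on $\rmc([0,T], \R^d)$ pushes $\pi_Y^T$ forward to $\muh_{Kh}$ and $\pi_X^T$ forward to $\mu_T$. All subsequent work therefore concerns the path-level distance.

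To bound $\|\pi_Y^T - \pi_X^T\|_{\TV}^2$, the plan is to combine Pinsker's inequality with the Girsanov identity. Since the SDEs \eqref{eqn:sde} and \eqref{eqn:sde_linear} share the diffusion coefficient $\sqrt{2\lambda}$, a standard computation yields
\begin{equation*}
\KL(\pi_Y^T \,\|\, \pi_X^T) = \frac{1}{4\lambda}\int_0^T \E\bigl\|\tilde v_t(Y) - v_t(Y_t)\bigr\|^2\, dt,
\end{equation*}
so that Pinsker produces the overall prefactor $\frac{1}{8\lambda}$. Novikov's condition is verified from the linear growth of $v$ and $\hat v$ implied by Assumptions~\ref{asmp:lipschitz} and \ref{asmp:init_fun}, together with the moment bound discussed below. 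On each subinterval $[kh,(k+1)h)$ I would split
\begin{equation*}
\tilde v_t(Y) - v_t(Y_t) = \bigl[\hat v_{kh}(Y_{kh}) - v_{kh}(Y_{kh})\bigr] + \bigl[v_{kh}(Y_{kh}) - v_t(Y_t)\bigr],
\end{equation*}
and bound the two pieces separately using $\|a+b\|^2 \leq 2\|a\|^2 + 2\|b\|^2$. The first (estimator noise) is controlled directly by Assumption~\ref{asmp:stochgrad}, giving a term proportional to $\delta\bigl(L^2 \E\|Y_{kh}\|^2 + B^2\bigr)$. The second (discretization) is controlled by Assumption~\ref{asmp:lipschitz}, which gives $\|v_{kh}(Y_{kh}) - v_t(Y_t)\|^2 \leq 2L^2\bigl(\|Y_t - Y_{kh}\|^2 + (t-kh)^2\bigr)$; and from \eqref{eqn:sde_linear} one has $\E\|Y_t - Y_{kh}\|^2 \leq 2(t-kh)^2 \E\|\hat v_{kh}(Y_{kh})\|^2 + 4\lambda d (t-kh)$, where $\E\|\hat v_{kh}(Y_{kh})\|^2$ is dominated using the linear-growth estimate $\|v_t(x)\| \leq L\|x\| + B$ (from Assumptions~\ref{asmp:lipschitz} and \ref{asmp:init_fun}) combined with the noise-variance bound. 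Integrating these estimates over a subinterval of length $h$ yields contributions of order $h^3$ (drift displacement), $\lambda d h^2$ (Brownian displacement) and $\delta h$ (estimator variance); summing over $k = 0, \dots, K-1$ and inserting into the Pinsker-Girsanov estimate reproduces the announced inequality, with $C_1$ and $C_2$ collecting the constants from Young's inequality.

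The main obstacle is the uniform second-moment estimate $\sup_{t \in [0,T]} \E\|Y_t\|^2 \leq C_0$, which feeds every step above. For the diffusion $X_t$, It\^{o}'s formula applied to $\|X_t\|^2$ combined with the dissipativity Assumption~\ref{asmp:dissip} and Gr\"{o}nwall yields $\sup_{t} \E\|X_t\|^2 \leq \E_{\mu_0}\|x\|^2 + (1 \vee m^{-1})(b + \lambda d)$. For the Euler iterates $\bar X_k$ (and hence for $Y_t$) a discrete Lyapunov argument is required: expanding $\E\|\bar X_{k+1}\|^2$, using $\E[\hat v \mid \bar X_k] = v$ to apply the dissipativity to the cross term, and absorbing the $h^2\E\|\hat v\|^2$ correction via Assumption~\ref{asmp:stochgrad} together with linear growth, one obtains a recurrence of the form $\E\|\bar X_{k+1}\|^2 \leq (1 - 2mh + O(h^2))\E\|\bar X_k\|^2 + O(h)$ which iterates, for $h$ small enough, to a uniform bound of the announced form (with $C_e$ absorbing the initial second moment $\E_{\mu_0}\|x\|^2$). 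Once this moment control is in hand, matching the exact constants is bookkeeping and follows the SGLD template of \cite{raginsky17a}, adapted to a time- and measure-dependent drift.
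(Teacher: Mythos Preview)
Your proposal is correct and follows essentially the same route as the paper: data processing to pass to path space, Girsanov to express the KL as $\frac{1}{4\lambda}\int_0^T \E\|\tilde v_t(Y)-v_t(Y_t)\|^2\,dt$, the same two-term splitting into estimator noise and discretization error, the Lipschitz bound for the latter, and the uniform second-moment control on $Y_{kh}$ via a discrete Lyapunov argument (the paper simply cites \cite[Lemma~3.2]{raginsky17a} for this step, which is exactly the recursion you sketch). The only cosmetic differences are that the paper decomposes $Y_t-Y_{kh}$ into three pieces (separating $v_{kh}$ from $\hat v_{kh}-v_{kh}$) with a factor $3$ rather than your two-piece factor $2$, and that the paper identifies $C_e$ as the entropy of $\mu_0$ whereas you (more in line with \cite{raginsky17a}) take it as the initial second moment; neither affects the structure of the argument.
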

\begin{proof}
We use the proof technique presented in \cite{dalalyan2017theoretical,raginsky17a}.  It is easy to verify that for all $k \in \mathbb{N}_+$, we have $Y_{kh} = \hat{X}_k$. 

By Girsanov's theorem to express the Kullback-Leibler (KL) divergence between these two distributions, given as follows:
\begin{align}
\KL (\pi_{X}^T || \pi_{Y}^T) &= \frac1{4 \lambda} \int_0^{Kh} \E[ \|v_t(Y_t) + \tilde{v}_t(Y) \|^2 ]  \> dt \\
&= \frac1{4 \lambda} \sum_{k=0}^{K-1} \int_{kh}^{(k+1)h} \E[ \|v_t(Y_t) + \tilde{v}_t(Y) \|^2 ] \> dt \\
&= \frac1{4 \lambda} \sum_{k=0}^{K-1} \int_{kh}^{(k+1)h} \E[ \|v_t(Y_t) - \hat{v}_{kh}(Y_{kh}) \|^2 ] \> dt.
\end{align}
By using $v_t(Y_t) - \hat{v}_{kh}(Y_{kh}) = ( v_t(Y_t) - v_{kh}(Y_{kh})) + ( v_{kh}(Y_{kh}) - \hat{v}_{kh}(Y_{kh}))$, we obtain
\begin{align}
\nonumber \KL (\pi_{X}^T || \pi_{Y}^T) \leq& \frac1{2 \lambda} \sum_{k=0}^{K-1} \int_{kh}^{(k+1)h} \E[ \|v_t(Y_t) - {v}_{kh}(Y_{kh}) \|^2 ] \> dt \\
&+  \frac1{2 \lambda} \sum_{k=0}^{K-1} \int_{kh}^{(k+1)h} \E[ \|v_{kh}(Y_{kh}) - \hat{v}_{kh}(Y_{kh}) \|^2 ] \> dt \\
\nonumber \leq& \frac{L^2}{\lambda} \sum_{k=0}^{K-1} \int_{kh}^{(k+1)h} \bigl(\E[ \|Y_t - Y_{kh} \|^2 ] + (t-kh)^2 \bigr)  \> dt \\
&+  \frac1{2 \lambda} \sum_{k=0}^{K-1} \int_{kh}^{(k+1)h} \E[ \|v_{kh}(Y_{kh}) - \hat{v}_{kh}(Y_{kh}) \|^2 ] \> dt . \label{eqn:lem1_proof_interm}
\end{align}
The last inequality is due to the Lipschitz condition \Cref{asmp:lipschitz}.

Now, let us focus on the term $\E[ \|Y_t - Y_{kh} \|^2]$. By using \eqref{eqn:sde_linear}, we obtain:
\begin{align}
Y_t - Y_{kh} = - (t-kh) \hat{v}_{kh}(Y_{kh}) + \sqrt{2 \lambda (t-kh)} Z,
\end{align}
where $Z$ denotes a standard normal random variable. By adding and subtracting the term $-(t-kh) v_{kh}(Y_{kh})$, we have:
\begin{align}
Y_t - Y_{kh} = -(t-kh)v_{kh}(Y_{kh}) + (t-kh)(v_{kh}(Y_{kh}) - \hat{v}_{kh}(Y_{kh})) + \sqrt{2 \lambda (t-kh)} Z.
\end{align}
Taking the square and then the expectation of both sides yields:
\begin{align}
\nonumber \E[ \|Y_t - Y_{kh} \|^2] \leq& 3(t-kh)^2 \E[ \|v_{kh}(Y_{kh})\|^2] + 3 (t-kh)^2 \E[\|v_{kh}(Y_{kh}) - \hat{v}_{kh}(Y_{kh})\|^2] \\
&+ 6\lambda (t-kh)d.
\end{align}
As a consequence of \Cref{asmp:lipschitz} and \Cref{asmp:init_fun}, we have $\| v_t(x)\| \leq L\|x\|+B$ for all $t \geq 0$, $x\in \R^d$. Combining this inequality with \Cref{asmp:stochgrad}, we obtain:
\begin{align}
\nonumber \E[ \|Y_t - Y_{kh} \|^2] \leq& 6(t-kh)^2 (L^2 \E[ \|Y_{kh}\|^2] + B^2) + 6(t-kh)^2 (L^2 \E[ \|Y_{kh}\|^2] + B^2) \\
&+ 6\lambda (t-kh)d\\
=& 12(t-kh)^2 (L^2 \E[ \|Y_{kh}\|^2] + B^2) + 6\lambda (t-kh)d.
\end{align}
By Lemma 3.2 of \cite{raginsky17a}\footnote{Note that Lemma 3.2 of \cite{raginsky17a} considers the case where the drift is not time- or measure-dependent. However, with \Cref{asmp:dissip} it is easy to show that the same result holds for our case as well.}, we have $\E[ \|Y_{kh}\|^2] \leq C_0 \triangleq C_e +2  (1 \vee \frac1{m})(b+2B^2 + d \lambda)$, where $C_e$ denotes the entropy of $\mu_0$. Using this result in the above equation yields:
\begin{align}
\E[ \|Y_t - Y_{kh} \|^2] \leq& 12(t-kh)^2 (L^2 C_0 + B^2) + 6\lambda (t-kh)d. \label{eqn:lem_bound1}
\end{align}

We now focus on the term $\E[ \|v_{kh}(Y_{kh}) - \hat{v}_{kh}(Y_{kh}) \|^2 ]$ in \eqref{eqn:lem1_proof_interm}. Similarly to the previous term, we can upper-bound this term as follows:
\begin{align}
\E[ \|v_{kh}(Y_{kh}) - \hat{v}_{kh}(Y_{kh}) \|^2 ] \leq& 2 \delta(L^2 \E[\|Y_{kh}\|^2] + B^2) \\
\leq& 2 \delta(L^2 C_0 + B^2). \label{eqn:lem_bound2}
\end{align}

By using \eqref{eqn:lem_bound1} and \eqref{eqn:lem_bound2} in \eqref{eqn:lem1_proof_interm}, we obtain:
\begin{align}
\nonumber \KL (\pi_{X}^T || \pi_{Y}^T) \leq& \frac{L^2}{\lambda} \sum_{k=0}^{K-1} \int_{kh}^{(k+1)h} \bigl(12(t-kh)^2 (L^2 C_0 + B^2) + 6\lambda (t-kh)d +(t-kh)^2 \bigr) dt\\
&+  \frac1{2 \lambda} \sum_{k=0}^{K-1} \int_{kh}^{(k+1)h} 2 \delta(L^2 C_0 + B^2) \> dt \\
=& \frac{L^2 K}{\lambda} \Bigl( \frac{C_1 h^3}{3} + \frac{6 \lambda d h^2}{2} \Bigr) + \frac{C_2 \delta K h}{2\lambda},
\end{align}
where $C_1 = 12(L^2 C_0 + B^2)+1$ and $C_2 = 2 (L^2 C_0 + B^2)$.

Finally, by using the data processing and Pinsker inequalities, we obtain:
\begin{align}
\| \muh_{Kh} - \mu_{T} \|_{\TV}^2 \leq \| \pi_{X}^T - \pi_{Y}^T \|_{\TV}^2 \leq& \frac1{4} \KL (\pi_{X}^T || \pi_{Y}^T) \\
=& \frac{L^2 K}{4\lambda} \Bigl( \frac{C_1 h^3}{3} + 3 \lambda d h^2 \Bigr) + \frac{C_2 \delta K h}{8\lambda}.
\end{align}
This concludes the proof.
\end{proof}

Now, we bound the term $\| \mub_{Kh} - \muh_{Kh} \|_{\TV}$.
\begin{lemma}
\label{lem:euler2}
Assume that \Cref{asmp:lipschitz} holds. Then the following bound holds:
\begin{align}
\| \pi_{U}^T - \pi_{Y}^T \|_{\TV}^2  \leq \frac{L^2 K h}{16 \lambda}  \|\pi_{X}^T - \pi_{U}^T \|_{\TV}^2 .
\end{align}
\end{lemma}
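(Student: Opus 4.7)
The plan mirrors the Girsanov-based argument already carried out in Lemma~\ref{lem:euler}, now applied to the pair $(\pi_U^T,\pi_Y^T)$ rather than $(\pi_X^T,\pi_Y^T)$. The two SDEs \eqref{eqn:sde_linear} and \eqref{eqn:sde_linear2} share the same diffusion coefficient $\sqrt{2\lambda}$ and the same Brownian motion; they differ only through their piecewise-constant drifts. Namely, on $[kh,(k+1)h)$, $Y$ uses $\hat v(Y_{kh},\mu_{kh})$ while $U$ uses $\hat v(U_{kh},\mub_{kh})$. The \emph{only} difference between the two drifts, evaluated along a common input path, is therefore the measure argument: the true marginal $\mu_{kh}$ versus the simulated marginal $\mub_{kh}$. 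This is exactly the situation where the second (measure-Lipschitz) part of \Cref{asmp:lipschitz} is designed to be used.

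Concretely, I would first invoke Girsanov's theorem to express
\begin{align*}
\KL(\pi_U^T\,\|\,\pi_Y^T) \;=\; \frac{1}{4\lambda}\sum_{k=0}^{K-1}\int_{kh}^{(k+1)h} \E\bigl\|\hat v(U_{kh},\mub_{kh}) - \hat v(U_{kh},\mu_{kh})\bigr\|^2 \,dt.
\end{align*}
The measure-Lipschitz part of \Cref{asmp:lipschitz} controls the integrand by $L^2 \|\mub_{kh}-\mu_{kh}\|_{\TV}^2$. Since $\mub_{kh}$ and $\mu_{kh}$ are the time-$kh$ marginals of the path measures $\pi_U^T$ and $\pi_X^T$ respectively, the data-processing inequality for total variation (TV contracts under any measurable map, in particular under projection to a coordinate) yields $\|\mub_{kh}-\mu_{kh}\|_{\TV} \leq \|\pi_U^T - \pi_X^T\|_{\TV}$. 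The right-hand side is independent of $k$, so summing over $k=0,\dots,K-1$ and integrating over intervals of length $h$ gives
\begin{align*}
\KL(\pi_U^T\,\|\,\pi_Y^T) \;\leq\; \frac{L^2 Kh}{4\lambda}\,\|\pi_U^T - \pi_X^T\|_{\TV}^2.
\end{align*}
A final application of Pinsker's inequality, in exactly the form used at the end of the proof of Lemma~\ref{lem:euler} (i.e.\ $\|\cdot\|_{\TV}^2 \leq \tfrac14\KL$), produces the claimed bound with the prefactor $\tfrac{L^2 Kh}{16\lambda}$.

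The only technicality requiring some care is verifying that Girsanov applies despite the drift $\hat v$ being random through the sampled directions $\{\theta_n\}_{n=1}^{N_\theta}$. I would handle this by conditioning on the $\sigma$-algebra generated by these samples: once conditioned, both drifts become bounded, deterministic, measurable, piecewise-constant functionals of the path, so Novikov's condition is trivial on the compact domain $\cB(0,r)$, and the Lipschitz estimate passes through sample-wise in $\theta$ before taking the outer expectation. Beyond this bookkeeping, the proof is a direct Girsanov-plus-Pinsker calculation, and the main conceptual point is simply that the measure-Lipschitz assumption converts a difference between marginals $\mub_{kh}$ and $\mu_{kh}$ into the stronger path-level TV distance $\|\pi_U^T-\pi_X^T\|_{\TV}$, which is exactly what a subsequent bootstrap with Lemma~\ref{lem:euler} will need.
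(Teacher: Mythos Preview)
Your proposal is correct and follows essentially the same approach as the paper: Girsanov for the KL between $\pi_U^T$ and $\pi_Y^T$, the measure-Lipschitz part of \Cref{asmp:lipschitz} on the drift difference, the bound $\|\mu_{kh}-\mub_{kh}\|_{\TV}\le\|\pi_X^T-\pi_U^T\|_{\TV}$, and Pinsker. The paper writes $\KL(\pi_Y^T\|\pi_U^T)$ rather than your $\KL(\pi_U^T\|\pi_Y^T)$, but since the Lipschitz bound on the integrand is path-independent this direction is immaterial; your explicit invocation of data processing for the marginal-to-path TV step and your remark about conditioning on $\{\theta_n\}$ to justify Novikov are details the paper omits.
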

\begin{proof}
We use that same approach than in Lemma~\ref{lem:euler}. By Girsanov's theorem once again, we have
\begin{align}
\KL (\pi_{Y}^T || \pi_{U}^T) &= \frac1{4 \lambda} \sum_{k=0}^{K-1} \int_{kh}^{(k+1)h} \E[ \|\hat{v}(U_{kh}, \mu_{kh}) - \hat{v}(U_{kh},\mub_{kh}) \|^2 ] \> dt,
\end{align}
where $\pi_U^T$ denotes the distributions of $(U_t)_{t \in [0,T]}$ with $T = Kh$. By using \Cref{asmp:lipschitz}, we have:
\begin{align}
\KL (\pi_{Y}^T || \pi_{U}^T) &\leq \frac{L^2 h}{4 \lambda} \sum_{k=0}^{K-1} \|\mu_{kh} - \mub_{kh} \|_{\TV}^2   \\
&\leq \frac{L^2 K h}{4 \lambda}  \|\pi_{X}^T - \pi_{U}^T \|_{\TV}^2  .
\end{align}
By applying the data processing and Pinsker inequalities, we obtain the desired result.
\end{proof}

\subsection{Proof of Theorem~\ref{thm:euler}}

Here, we precise the statement of Theorem~\ref{thm:euler}.

\begin{thm}
\label{lem:euler3}
Assume that the assumptions in Lemma~\ref{lem:euler} and Lemma~\ref{lem:euler2} hold. Then for $\lambda > \frac{KL^2h}{8}$, the following bound holds:
\begin{align}
\|\mub_{Kh} - \mu_{T} \|_{\TV}^2 &\leq \delta_\lambda \Biggl\{ \frac{L^2 K}{2\lambda} \Bigl( \frac{C_1 h^3}{3} + 3 \lambda d h^2 \Bigr) + \frac{C_2 \delta K h}{4\lambda} \Biggr\},
\end{align}
where $\delta_\lambda = (1 -\frac{KL^2h}{8\lambda})^{-1} $.
\end{thm}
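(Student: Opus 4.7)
The plan is to combine Lemma~\ref{lem:euler} and Lemma~\ref{lem:euler2} via a triangle inequality on the path-space total variation distances. Concretely, I would work on the three path measures $\pi_X^T, \pi_Y^T, \pi_U^T$ and use the fact that $\|\mub_{Kh} - \mu_T\|_{\TV} \leq \|\pi_U^T - \pi_X^T\|_{\TV}$ by the data-processing inequality (since $\mub_{Kh}$ and $\mu_T$ are the time-$T$ marginals of $U$ and $X$ respectively).

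First I would apply the triangle inequality for the TV distance on path space:
\begin{equation*}
\|\pi_X^T - \pi_U^T\|_{\TV} \leq \|\pi_X^T - \pi_Y^T\|_{\TV} + \|\pi_Y^T - \pi_U^T\|_{\TV},
\end{equation*}
and then square and use $(a+b)^2 \leq 2a^2 + 2b^2$ to get
\begin{equation*}
\|\pi_X^T - \pi_U^T\|_{\TV}^2 \leq 2\|\pi_X^T - \pi_Y^T\|_{\TV}^2 + 2\|\pi_Y^T - \pi_U^T\|_{\TV}^2.
\end{equation*}
Now I plug in Lemma~\ref{lem:euler2} to replace the last term: $2\|\pi_Y^T - \pi_U^T\|_{\TV}^2 \leq \frac{L^2 Kh}{8\lambda}\|\pi_X^T - \pi_U^T\|_{\TV}^2$. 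The factor $\frac{L^2 Kh}{8\lambda}$ is strictly less than $1$ under the assumption $\lambda > KL^2 h / 8$, so I can move this term to the left-hand side and obtain
\begin{equation*}
\Bigl(1 - \tfrac{L^2 Kh}{8\lambda}\Bigr)\|\pi_X^T - \pi_U^T\|_{\TV}^2 \leq 2\|\pi_X^T - \pi_Y^T\|_{\TV}^2,
\end{equation*}
i.e.\ $\|\pi_X^T - \pi_U^T\|_{\TV}^2 \leq 2\delta_\lambda \|\pi_X^T - \pi_Y^T\|_{\TV}^2$ with $\delta_\lambda = (1 - L^2 Kh/(8\lambda))^{-1}$.

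Finally I invoke Lemma~\ref{lem:euler}, which gives $\|\pi_X^T - \pi_Y^T\|_{\TV}^2 \leq \frac{L^2 K}{4\lambda}\bigl(\frac{C_1 h^3}{3} + 3\lambda d h^2\bigr) + \frac{C_2 \delta Kh}{8\lambda}$, multiply by $2\delta_\lambda$, and apply data processing one more time to pass from the path-space distance to the marginal distance $\|\mub_{Kh} - \mu_T\|_{\TV}^2$. The factors of $2$ cancel the $4\lambda$ and $8\lambda$ denominators into the desired $2\lambda$ and $4\lambda$, producing exactly the stated bound.

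The only even mildly delicate step is the self-referential appearance of $\|\pi_X^T - \pi_U^T\|_{\TV}^2$ on both sides after applying Lemma~\ref{lem:euler2}; this is what forces the $(a+b)^2 \leq 2(a^2+b^2)$ inflation (hence the factor $2$ in front of $\delta_\lambda$) and the smallness condition $\lambda > KL^2 h/8$. Everything else is bookkeeping: a triangle inequality, a data-processing reduction from paths to marginals, and substitution of the two previous lemmas.
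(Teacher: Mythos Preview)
Your proposal is correct and matches the paper's proof essentially step for step: triangle inequality on path space, the $(a+b)^2\le 2a^2+2b^2$ inflation, substitution of Lemma~\ref{lem:euler2} to create the self-referential term, rearrangement under the condition $\lambda>KL^2h/8$, and then Lemma~\ref{lem:euler} for the remaining term. You even make explicit the final data-processing reduction from $\|\pi_X^T-\pi_U^T\|_{\TV}$ to $\|\mub_{Kh}-\mu_T\|_{\TV}$, which the paper leaves implicit.
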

\begin{proof}
We have the following decomposition: (with $T= Kh$)
\begin{align}
\|\pi_{X}^T - \pi_{U}^T \|_{\TV}^2 &\leq 2 \|\pi_{X}^T - \pi_Y^T \|_{\TV}^2 + 2\|\pi_Y^T - \pi_{U}^T \|_{\TV}^2 \\
&\leq  \frac{L^2 K}{2\lambda} \Bigl( \frac{C_1 h^3}{3} + 3 \lambda d h^2 \Bigr) + \frac{C_2 \delta K h}{4\lambda} +  \frac{L^2 K h}{8 \lambda}  \|\pi_X^T - \pi_{U}^T \|_{\TV}^2 \\ 
&\leq \Bigl(1 -\frac{KL^2h}{8\lambda} \Bigr)^{-1} \Biggl\{ \frac{L^2 K}{2\lambda} \Bigl( \frac{C_1 h^3}{3} + 3 \lambda d h^2 \Bigr) + \frac{C_2 \delta K h}{4\lambda} \Biggr\}.
\end{align}
The second line follows from Lemma~\ref{lem:euler} and Lemma~\ref{lem:euler2}. Last line follows from the assumption that $\lambda$ is large enough. This completes the proof.
\end{proof}

\section{Proof of Corollary~\ref{coro:precision}}
\begin{proof}
Considering the bound given in Theorem~\ref{thm:euler}, the choice $h$ implies that
\begin{align}
\frac{\delta_\lambda L^2 K}{2\lambda} \Bigl( \frac{C_1 h^3}{3} + 3 \lambda d h^2 \Bigr) \leq \varepsilon^2. \label{eqn:cor_Th}
\end{align}
This finalizes the proof. 
\end{proof}

\section{Additional Experimental Results}

\subsection{The Sliced Wasserstein Flow}
The whole code for the Sliced Wasserstein Flow was implemented in Python, for use with Pytorch\footnote{\url{http://www.pytorch.org}.}. The code was written so as to run efficiently on GPU, and is available on the publicly available repository related to this paper\footnote{\url{https://github.com/aliutkus/swf}.}.

In practice, the SWF involves relatively simple operations, the most important being:
\begin{itemize}
  \item For each random $\theta\in\{\theta_n\}_{n=1\dots N_\theta}$,  compute its inner product with all items from a dataset and obtain the empirical quantiles for these \emph{projections}.
  \item At each step $k$ of the SWF, for each projection $z=\left<\theta, \bar{X}^i_k\right>$, apply two piece-wise linear functions, corresponding to the scalar optimal transport $\psi_{k, \theta}'(z)$.
\end{itemize}
Even if such steps are conceptually simple, the quantile and required linear interpolation functions were not available on GPU for any framework we could figure out at the time of writing this paper. Hence, we implemented them ourselves for use with Pytorch, and the interested reader will find the details in the Github repository dedicated to this paper.

Given these operations, putting a SWF implementation together is straightforward.
The code provided allows not only to apply it on any dataset, but also provides routines to have the computation of these sketches running in the background in a parallel manner.

\subsection{The need for dimension reduction through autoencoders}

In this study, we used an autoencoder trained on the dataset as a dimension reduction technique, so that the SWF is applied to transport particles in a latent space of dimension $d\approx 50$, instead of the original $d>1000$ of image data.

The curious reader may wonder why SWF is not applied directly to this original space, and what performances should be expected there. We have done this experiment, and we found out that SWF has much trouble rapidly converging to satisfying samples. In figure~\ref{fig:suppnoae}, we show the progressive evolution of particles undergoing SWF when the target is directly taken as the uncompressed dataset.

\begin{figure}
\centering
\includegraphics[width=\columnwidth]{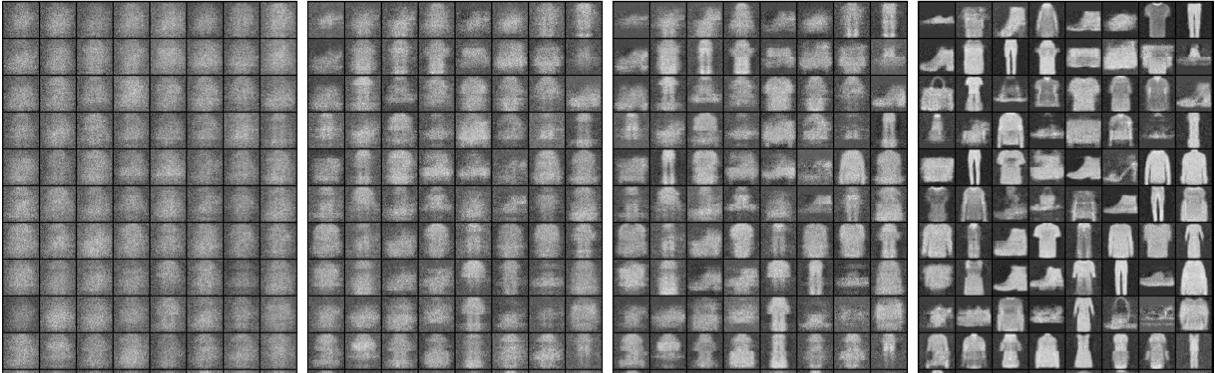}
\includegraphics[width=\columnwidth]{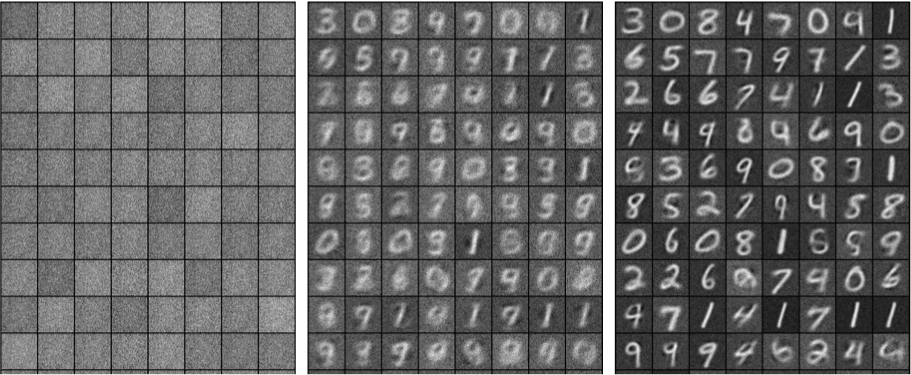}
\caption{The evolution of SWF through 15000 iterations, when the original high-dimensional data is kept instead of working on reduced bottleneck features as done in the main document. Showing results on the MNIST and FashionMNIST datasets. For a visual comparison for FashionMNIST, we refer the reader to \cite{samangouei2018defensegan}.}
\label{fig:suppnoae}
\end{figure}

In this experiment, the strategy was to change the projections $\theta$ at each iteration, so that we ended up with a set of projections being $\{\theta_{n,k}\}_{n=1\dots N_\theta}^{k=1\dots K}$ instead of the fixed set of $N_\theta$ we now consider in the main document (for this, we picked $N_\theta=200$). This strategy is motivated by the complete failure we observed whenever we picked such fixed projections throughout iterations, even for a relatively large number as $N_\theta=16000$.

As may be seen on Figure~\ref{fig:suppnoae}, the particles definitely converge to samples from the desired datasets, and this is encouraging. However, we feel that the extreme number of iterations required to achieve such convergence comes from the fact that theory needs an integral over the $d-$dimensional sphere at each step of the SWF, which is clearly an issue whenever $d$ gets too large.
Although our solution of picking new samples from the sphere at each iteration alleviated this issue to some extent, the curse of dimensionality prevents us from doing much better with just thousands of \emph{random} projections at a time.

This being said, we are confident that good performance would be obtained if millions of random projections could be considered for transporting such high dimensional data because i/ theory suggests it and ii/ we observed excellent performance on reduced dimensions.

However, we, unfortunately, did not have the computing power it takes for such large scale experiments and this is what motivated us in the first place to introduce some dimension-reduction technique through AE.

\subsection{Structure of our autoencoders for reducing data dimension}

As mentioned in the text, we used autoencoders to reduce the dimensionality of the transport problem. The structure of these networks is the following:

\begin{itemize}
  \item $\textbf{Encoder}$ Four 2d convolution layers with (num\_chan\_out, kernel\_size, stride, padding) being $(3,3,1,1)$, $(32,2,2,0)$, $(32,3,1,1)$, $(32, 3,1,1)$, each one followed by a ReLU activation. At the output, a linear layer gets the desired bottleneck size.

  \item $\textbf{Decoder}$ A linear layer gets from the bottleneck features to a vector of dimension $8192$, which is reshaped as $(32, 16,16)$. Then, three convolution layers are applied, all with $32$ output channels and (kernel\_size, stride, panning) being respectively $(3,1,1)$, $(3,1,1)$, $(2,2,0)$. A 2d convolution layer is then applied with an output number of channels being that of the data ($1$ for black and white, $3$ for color), and a (kernel\_size, stride, panning) as $(3,1,1)$. In any case, all layers are followed by a ReLU activation, and a sigmoid activation is applied a the very output.
\end{itemize}

Once these networks defined, these autoencoders are trained in a very simple manner by minimizing the binary cross entropy between input and output over the training set of the considered dataset (here MNIST, CelebA or FashionMNIST). This training was achieved with the Adam algorithm \cite{kingma2014adam} with learning rate $1e-3$.

No additional training trick was involved as in Variational Autoencoder \cite{kingma2013VAE} to make sure the distribution of the bottleneck features matches some prior. The core advantage of the proposed method in this respect is indeed to turn any previously learned AE as a generative model, by automatically and non-parametrically transporting particles drawn from an arbitrary prior distribution $\mu$ to the observed empirical distribution $\nu$ of the bottleneck features over the training set.

\subsection{Convergence plots of SWF}

\begin{figure}
\begin{centering}
\includegraphics[width=0.5\columnwidth]{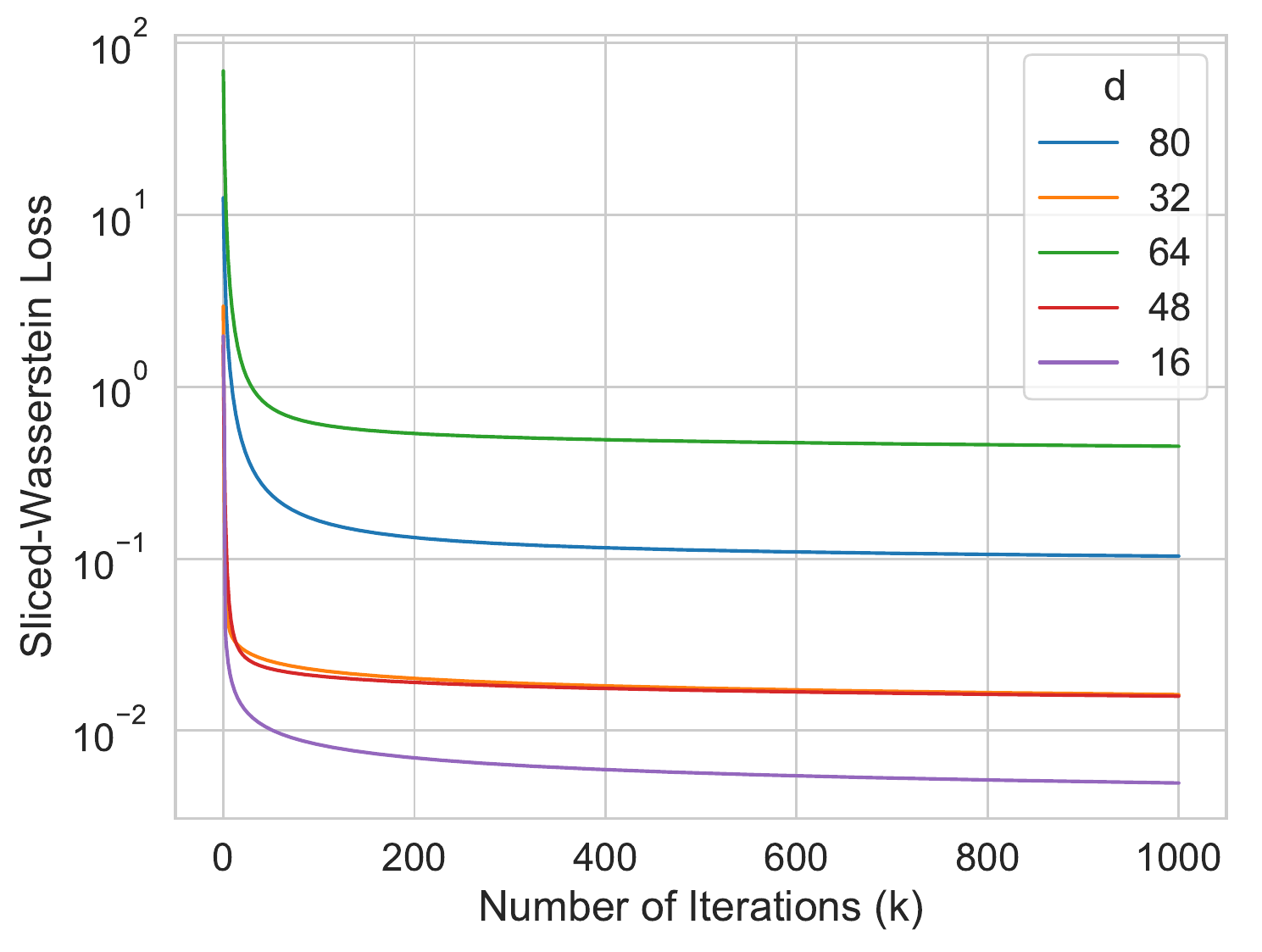}
\par\end{centering}
\caption{Approximately computed $\SW$ between the output $\bar{\mu}_{k}^{N}$ and data distribution $\nu$ in the MNIST experiment for different dimensions $d$ for the bottleneck features (and the corresponding pre-trained AE).
\label{fig:supptoy_sw}}
\end{figure}

In the same experimental setting as in the main document, we also illustrate the behavior of the algorithm for varying dimensionality $d$ for the bottleneck-features. To monitor the convergence of SWF as predicted by theory, we display the approximately computed $\SW$ distance between the distribution of the particles and the data distribution. Even though minimizing this distance is not the real objective of our method, arguably, it is still a good proxy for understanding the convergence behavior.

Figure~\ref{fig:supptoy_sw} illustrates the results. We observe that, for all choices of $d$, we see a steady and smooth decrease in the cost for all runs, which is in line with our theory. The absolute value of the cost for varying dimensions remains hard to interpret at this stage of our investigations.

\section{Additional samples}

\subsection{Evolution throughout iterations}

In Figures~\ref{fig:suppmnist} and \ref{fig:suppfmnist} below, we provide the evolution of the SWF algorithm on the Fashion MNIST and the MNIST datasets in higher resolution, for an AE with $d=48$ bottleneck features.

\newcommand{\picwidth}{0.2}%
\begin{figure}
\centering
\includegraphics[width=\picwidth\columnwidth]{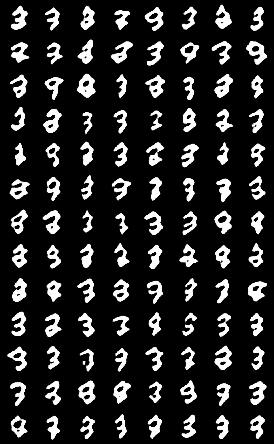}
\includegraphics[width=\picwidth\columnwidth]{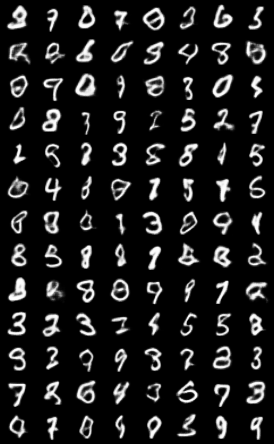}
\includegraphics[width=\picwidth\columnwidth]{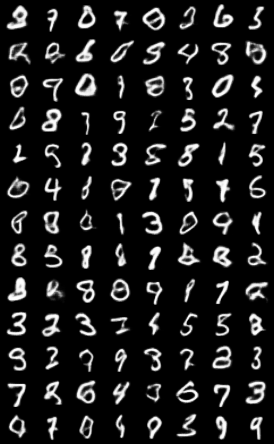}
\includegraphics[width=\picwidth\columnwidth]{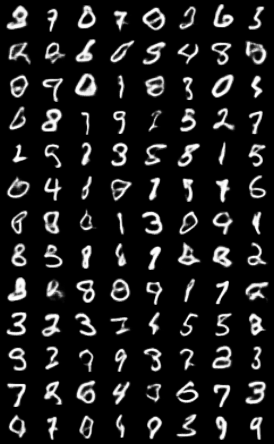}\\
\includegraphics[width=\picwidth\columnwidth]{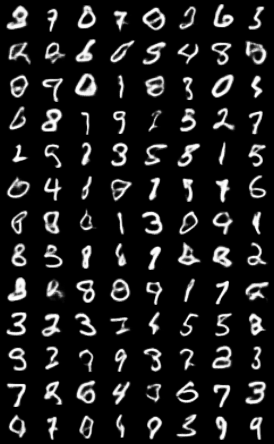}
\includegraphics[width=\picwidth\columnwidth]{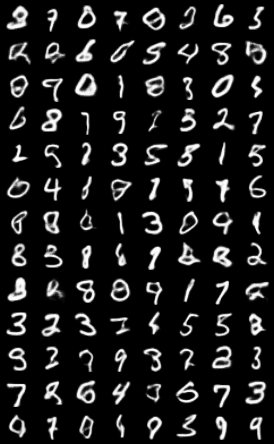}
\includegraphics[width=\picwidth\columnwidth]{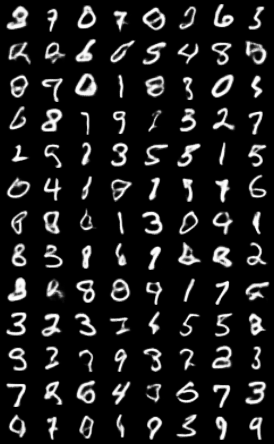}
\includegraphics[width=\picwidth\columnwidth]{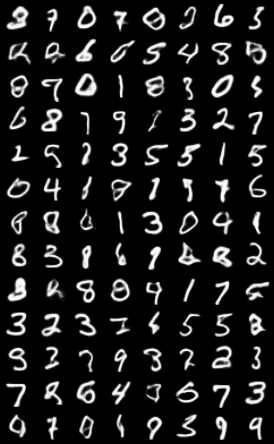}
\caption{The evolution of SWF through 200 iterations on the MNIST dataset. Plots are for $1$, $11$, $21$, $31$, $41$, $51$, $101$ and $201$ iterations}
\label{fig:suppmnist}
\end{figure}

\begin{figure}
\centering
\includegraphics[width=\picwidth\columnwidth, frame]{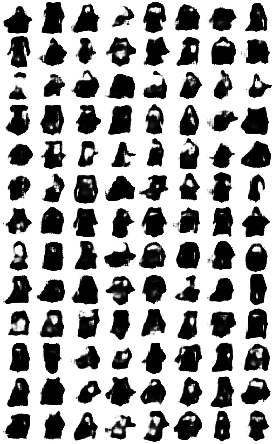}%
\includegraphics[width=\picwidth\columnwidth, frame]{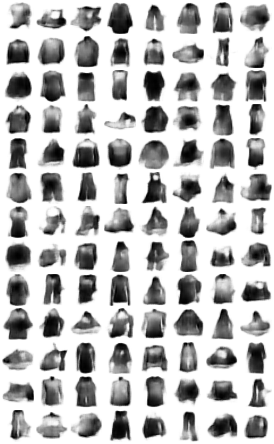}%
\includegraphics[width=\picwidth\columnwidth, frame]{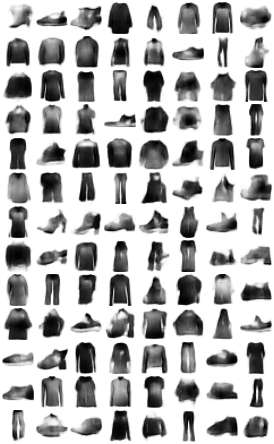}%
\includegraphics[width=\picwidth\columnwidth, frame]{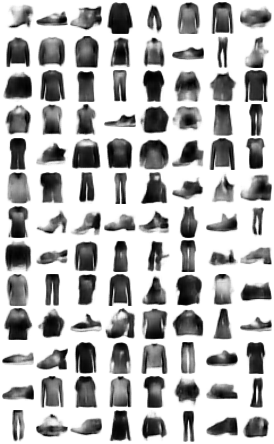}\\%
\includegraphics[width=\picwidth\columnwidth, frame]{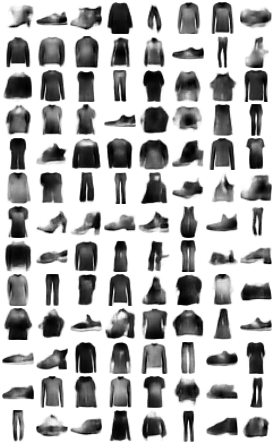}%
\includegraphics[width=\picwidth\columnwidth, frame]{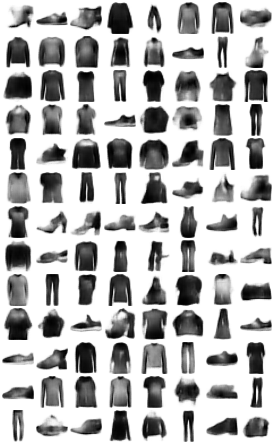}%
\includegraphics[width=\picwidth\columnwidth, frame]{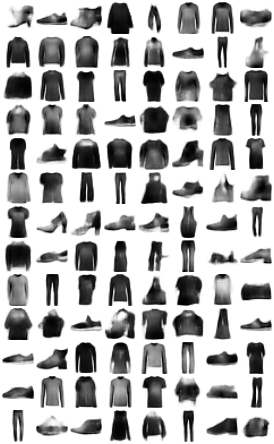}%
\includegraphics[width=\picwidth\columnwidth, frame]{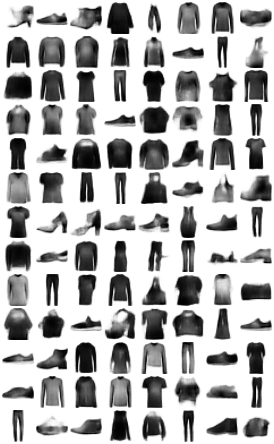}%
\caption{The evolution of SWF through 200 iterations on the FashionMNIST dataset. Plots are for $1$, $11$, $21$, $31$ (upper row) and $41$, $51$, $101$, $201$ (lower row) iterations}
\label{fig:suppfmnist}
\end{figure}

\subsection{Training samples, interpolation and extrapolation}

In Figures~\ref{fig:suppmnistsamples} and \ref{fig:suppfmnistsamples} below, we provide other examples of outcome from SWF, both for the MNIST and the FashionMNIST datasets, still with $d=48$ bottleneck features.

The most noticeable fact we may see on these figures is that while the actual particles which went through SWF, as well as linear combinations of them, all yield very satisfying results, this is however not the case for particles that are drawn randomly and then brought through a pre-learned SWF.

Once again, we interpret this fact through the curse of dimensionality: while we saw in our toy GMM example that using a pre-trained SWF was totally working for small dimensions, it is already not so for $d=48$ and only $3000$ training samples.

This noticed, we highlight that this generalization weakness of SWF for high dimensions is not really an issue, since it is always possible to i/ run SWF with more training samples if generalization is required ii/ re-run the algorithm for a set of new particles. Remember indeed that this does not require passing through the data again, since the distribution of the data projections needs to be done only once.
\renewcommand{\picwidth}{0.3}%

\begin{figure}
\centering
\subfigure[particles undergoing SWF]{
\includegraphics[width=\picwidth\columnwidth]{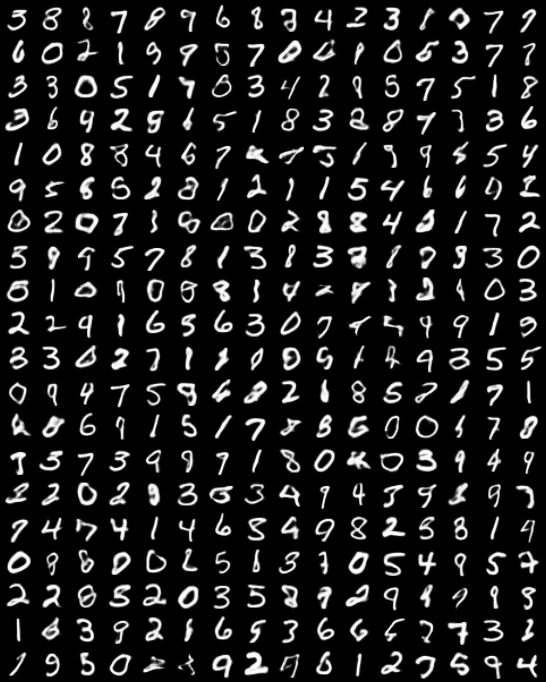}}
\subfigure[After SWF is done: applying learned map on linear combinations of train particles]{
\includegraphics[width=\picwidth\columnwidth]{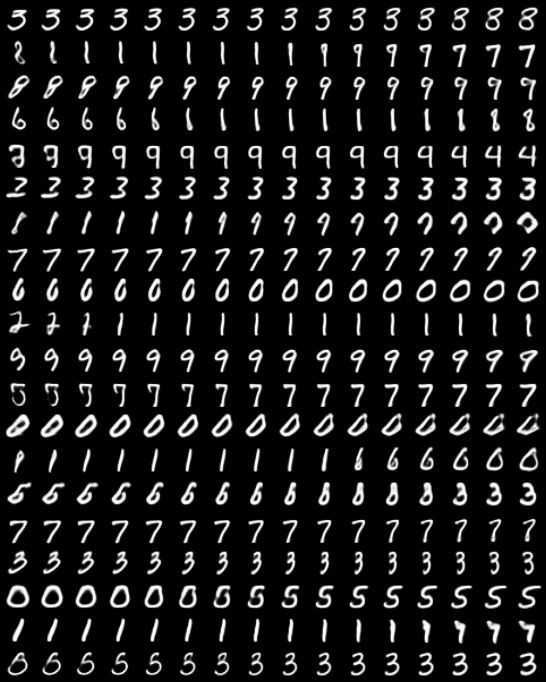}}
\subfigure[After SWF is done: applying learned map on random inputs.]{
\includegraphics[width=\picwidth\columnwidth]{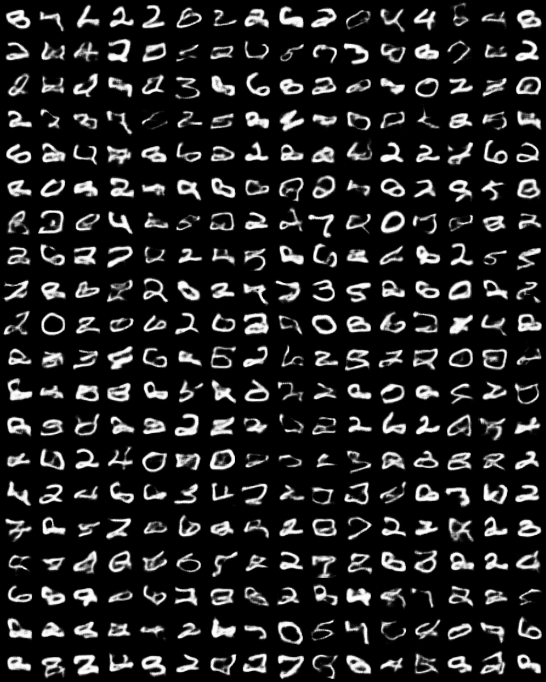}}
\caption{SWF on MNIST: training samples, interpolation in learned mapping, extrapolation.}
\label{fig:suppmnistsamples}
\end{figure}

\begin{figure}
\centering
\subfigure[particles undergoing SWF]{
\includegraphics[width=\picwidth\columnwidth]{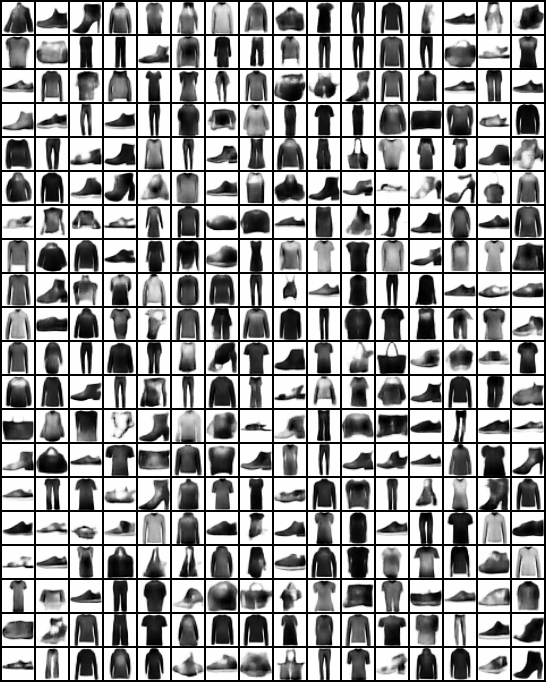}}
\subfigure[After SWF is done: applying learned map on linear combinations of train particles]{
\includegraphics[width=\picwidth\columnwidth]{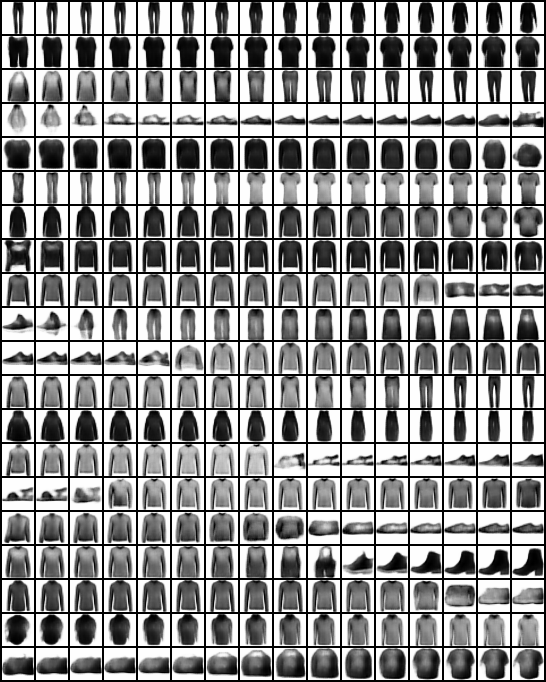}}
\subfigure[After SWF is done: applying learned map on random inputs.]{
\includegraphics[width=\picwidth\columnwidth]{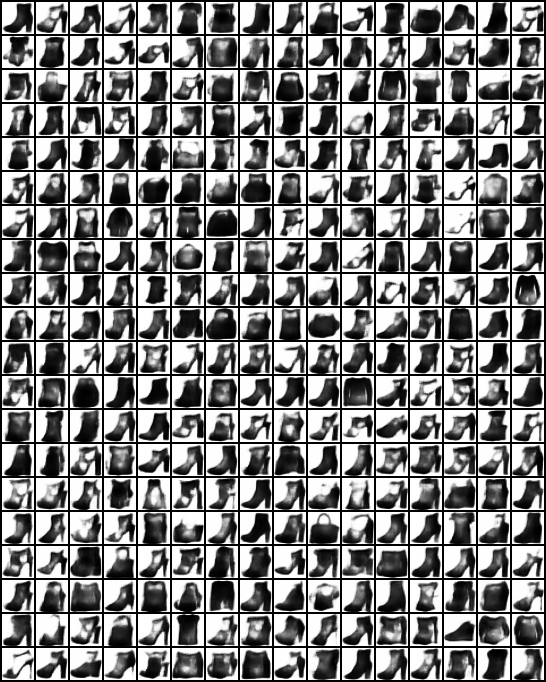}}
\caption{SWF on FashionMNIST: training samples, interpolation in learned mapping, extrapolation.}
\label{fig:suppfmnistsamples}
\end{figure}


\end{document}